\newtheorem{theorem}{Theorem}
\newtheorem{lemma}[theorem]{Lemma}
\newtheorem{remark}{Remark}
\def\A{{\bf A}}
\def\a{{\bf a}}
\def\B{{\bf B}}
\def\bb{{\bf b}}
\def\C{{\bf C}}
\def\D{{\bf D}}
\def\I{{\bf I}}
\def\PP{{\bf P}}
\def\Q{{\bf Q}}
\def\S{{\bf S}}
\def\T{{\bf T}}
\def\U{{\bf U}}
\def\V{{\bf V}}
\def\W{{\bf W}}
\def\X{{\bf X}}
\def\x{{\bf x}}
\def\Y{{\bf Y}}
\def\0{{\bf 0}}
\def\1{{\bf 1}}
\def\IM{{\mathcal I}}
\def\OM{{\mathcal O}}
\def\PM{{\mathcal P}}
\def\RB{{\mathbb R}}
\def\RBmn{{\RB^{m\times n}}}
\def\EB{{\mathbb E}}
\def\PB{{\mathbb P}}
\def\Si{\mbox{\boldmath$\Sigma$\unboldmath}}
\def\Ome{\mbox{\boldmath$\Omega$\unboldmath}}
\def\argmin{\mathop{\rm argmin}}
\def\spann{\mathrm{span}}
\def\rk{\mathrm{rank}}
\def\diag{\mathsf{diag}}
\def\nystrom{{Nystr\"{o}m} }
\def\TimeMulti{{T_{\mathrm{Multiply}}}}
\begin{document}

%

%

\twocolumn[

\aistatstitle{Efficient Algorithms and Error Analysis for the Modified \nystrom Method}


\aistatsauthor{ Shusen Wang \And Zhihua Zhang }
\aistatsaddress{ College of Computer Science \& Technology\\ Zhejiang University, Hangzhou, China\\ wss@zju.edu.cn
             \And Department of Computer Science \& Engineering\\ Shanghai Jiao Tong University, Shanghai, China\\ zhihua@sjtu.edu.cn} ]

\begin{abstract}
    Many kernel methods suffer from high time and space complexities and are thus prohibitive in big-data applications. To tackle the computational challenge, the Nystr\"om method has been extensively used to reduce time and space complexities by sacrificing some accuracy. The Nystr\"om method speedups computation by constructing an approximation of the kernel matrix using only a few  columns of the matrix. Recently, a variant of the Nystr\"om method called the modified Nystr\"om method has demonstrated significant improvement over the standard Nystr\"om method in approximation accuracy, both theoretically and empirically.
    In this paper, we propose two algorithms that make the modified Nystr\"om method practical. First, we devise a  simple column selection algorithm with a provable error bound. Our algorithm is more  efficient and easier to implement than and nearly as accurate as the state-of-the-art algorithm. Second, with the selected columns at hand, we propose an algorithm that computes the approximation in lower time complexity than the approach in the previous work. Furthermore, we prove that the modified Nystr\"om method is exact under certain conditions, and we establish a lower error bound for the modified Nystr\"om method.
\end{abstract}

\section{Introduction}

The kernel method is an important tool in machine learning, computer vision, and data mining \citep{scholkopf2002learning,ShaweTaylorBook:2004}.
However, many kernel methods require matrix computations of high time and space complexities.
For example, let $m$ be the number of data instances.
The Gaussian process regression computes the inverse of an $m\times m$ matrix which takes time $\OM(m^3)$ and space $\OM(m^2)$;
the kernel PCA, Isomap, and Laplacian eigenmaps all perform the truncated singular value decomposition
which takes time $\OM(m^2 k)$ and space $\OM(m^2)$, where $k$ is the target rank of the decomposition.
When $m$ is large, it is challenging to store the $m\times m$ kernel matrix in RAM to perform these matrix computations.
Therefore, these kernel methods are prohibitive when $m$ is large.

To overcome the computational challenge, \cite{williams2001using} employed the \nystrom method \citep{nystrom1930praktische}
to generate a low-rank approximation to the original symmetric positive semidefinite (SPSD) kernel matrix.
By using the \nystrom method, eigenvalue decomposition and some matrix inverse can be approximately done on only a few columns of the SPSD matrix
instead of on the entire matrix, and the time and space costs are reduced to $\OM(m)$.
The \nystrom method has been widely used  to speedup various kernel methods, such as the
Gaussian process regression \citep{williams2001using},
spectral clustering \citep{fowlkes2004spectral,li2011time},
kernel SVMs \citep{zhang2008improved,yang2012nystrom},
kernel PCA \citep{zhang2008improved,zhang2010clustered,talwalkar2013large},
kernel ridge regression \citep{cortes2010impact,yang2012nystrom},
determinantal processes \citep{affandi2013nystrom}, etc.

To construct a low-rank matrix approximation, the \nystrom method requires a small number of columns (say, $c$ columns) to
be selected from the kernel matrix by a column sampling technique.
The approximation accuracy is largely determined by the sampling technique; that is,
a better sampling technique can result in a \nystrom approximate with a lower approximation error.
In the previous work much attention has been made on improving the error bounds of the \nystrom method:
additive-error bound has been explored by
\cite{drineas2005nystrom,Shawe-taylor05onthe,kumar2012sampling,jin2012improved}, etc.
Very recently, \cite{gittens2013revisiting} established the first relative-error bound
which is more interesting than additive-error bound \citep{mahoney2011ramdomized}.

However, the approximation quality cannot be arbitrarily improved by devising a very good sampling technique.
As shown theoretically by \cite{wang2013improving},
no matter what sampling technique is used to construct the \nystrom approximation,
the incurred error (in the spectral norm or the squared Frobenius norm) must grow with matrix size $m$ at least linearly.
Thus, the \nystrom approximation can be very rough when $m$ is large,
unless large number columns are selected.
As was pointed out by \cite{cortes2010impact}, the tighter kernel approximation leads to the better learning accuracy,
so it is useful to find a kernel approximation model that is more accurate than the \nystrom method.

To improve the approximation accuracy, \cite{wang2013improving} proposed
a new alternative called {\it the modified \nystrom method}
and a sampling algorithm for the modified \nystrom method.
The modified \nystrom method can be applied in  the same way exactly as the standard \nystrom method to speedup kernel methods.
The modified \nystrom method has an advantage that the error does not grow with matrix size $m$.
Therefore, by using the modified \nystrom method instead of the standard \nystrom method,
a significantly smaller number of columns is needed to attain the same accuracy as the standard \nystrom method.

However, it is much more expensive to construct the modified \nystrom approximation than to construct the standard standard \nystrom approximation.
Furthermore,
an efficient implementation of the modified \nystrom method keeps till open.
In this paper we seek to make the modified \nystrom method efficient and practical.

Additionally, 
\cite{kumar2009sampling,talwalkar2010matrix} showed that the standard \nystrom approximation is exact when the original kernel matrix is low-rank.
\cite{wang2013improving} proved the lower error bounds of the standard \nystrom method.
It is still open whether the modified \nystrom method has similar properties.
So we explore  the theoretical properties of the modified \nystrom method in this paper.

In sum, this paper offers the following contributions:
\vspace{-1mm}
\begin{itemize}
\item
    We devise a column selection algorithm with provable error bound for the modified \nystrom method.
    We call it {\it the uniform+adaptive$^2$ algorithm}.
    It is more efficient and much easier to implement than the near-optimal+adaptive algorithm of \cite{wang2013improving},
    yet its error bound is comparable with the near-optimal+adaptive algorithm.
\vspace{-2mm}
\item
    We provide an efficient algorithm for computing the intersection matrix of the modified \nystrom method.
    This algorithm can significantly reduce the time cost, especially when the kernel matrix is sparse.
\vspace{-2mm}
\item
    We show that the modified \nystrom approximation exactly recovers the original matrix under some conditions.
\vspace{-2mm}
\item
    We established a lower error bound for the modified \nystrom method.
    We conjecture that the lower error bound is tight.
\vspace{-2mm}
\end{itemize}

The remainder of this paper is organized as follows.
In Section~\ref{sec:notation} we define the notation used in this paper.
In Section~\ref{sec:previous} we formally define the \nystrom approximation methods and introduce some column sampling algorithms.
In Section~\ref{sec:efficient} we present an efficient column sampling algorithm and its error analysis.
In Section~\ref{sec:intersection} we devise an algorithm that computes the modified \nystrom approximation more efficiently.
In Section~\ref{sec:experiments} we empirically evaluate our proposed two algorithms.
In Section~\ref{sec:error_analysis} we explore some theoretical properties of the modified \nystrom method.

\section{Notation} \label{sec:notation}
\vspace{-2mm}

The notation used in this paper follows that of \cite{wang2013improving}.
For an $m {\times} n$ matrix $\A=[a_{i j}]$, we let $\a^{(i)}$ be its $i$-th row,
$\a_j$ be its $j$-th column,
$\|\A\|_F = (\sum_{i,j} a_{i j}^2)^{1/2}$ be its Frobenius norm, and
$\|\A\|_2 = \max_{\x\neq \0} \|\A \x\|_2 / \|\x\|_2$ be its spectral norm.

Letting $\rho=\rk(\A)$, we write the condensed singular value decomposition (SVD) of $\A$ as $\A = \U_\A \Si_\A \V_\A^T$,
where the $(i,i)$-th entry of $\Si_\A \in \RB^{\rho \times \rho}$ is the $i$-th largest singular value of $\A$.
We also let $\U_{\A,k}$ and $\V_{\A,k}$ be the first $k$ ($<\rho$) columns of $\U_\A$ and $\V_\A$, respectively,
and $\Si_{\A,k}$ be the $k\times k$ top sub-block of $\Si_\A$.
Then the $m\times n$ matrix $\A_k=\U_{\A,k} \Si_{\A,k} \V_{\A,k}^T$ is the ``closest'' rank-$k$ approximation to $\A$.

Based on SVD, the {\it matrix coherence} of the columns of $\A$ relative to the best rank-$k$ approximation to $\A$
is defined by $\mu_k = \frac{n}{k} \max_{j} \big\| \V_{\A,k}^{(j)} \big\|_2^2$.
Let $\A^\dag = \V_{\A} \Si_\A^{-1} \U_\A^T$ be the {\it Moore-Penrose inverse} of $\A$.
When $\A$ is nonsingular, the Moore-Penrose inverse is identical to the matrix inverse.
Given another ${m\times c}$ matrix $\C$,
we define $\PM_\C \A = \C \C^\dag \A$
as the projection of $\A$ onto the column space of $\C$
and $\PM_{\C,k} \A = \C \cdot \argmin_{\rk(\X)\leq k} \|\A - \C \X\|_F$
as the rank restricted projection.
It is obvious that $\|\A - \PM_\C \A\|_F \leq \|\A - \PM_{\C,k} \A\|_F$.

Finally, we discuss the time complexities of the matrix operations mentioned above.
For an $m{\times} n$ general matrix $\A$ (assume $m \geq n$),
it takes $\OM(m n^2)$ flops to compute the full SVD
and $\OM(m n k)$ flops to compute the truncated SVD of rank $k$ ($< n$).
The computation of  $\A^\dag$ takes $\OM(m n^2)$ flops.
It is worth mentioning that although multiplying an $m{\times} n$ matrix by an $n{\times} p$ matrix takes $m n p$ flops,
it can be performed in full parallel by partitioning the matrices into blocks.
Thus, the time and space expense of large-scale matrix multiplication is not a challenge in real-world applications.
We denote the time complexity of such a matrix multiplication by $T_{\mathrm{Multiply}}(m n p)$,
which can be tremendously smaller than $\OM(m n p)$ in parallel computing environment \citep{halko2011ramdom}.
An algorithm can still be efficient even if it demands large-scale matrix multiplications.

\section{Previous Work} \label{sec:previous}
\vspace{-2mm}

In Section~\ref{sec:previous:nystrom} we introduce the standard and  modified \nystrom methods
and discuss their advantages and disadvantages.
In Section~\ref{sec:previous:sampling} we describe some commonly used column sampling algorithms.

\subsection{The \nystrom Methods} \label{sec:previous:nystrom}

Given an $m\times m$ symmetric matrix $\A$,
one needs to select $c$ ($\ll m$) columns of $\A$ to form a matrix $\C \in \RB^{m\times c}$ to construct the standard or modified \nystrom approximation.
Without loss of generality,  $\A$ and $\C$ can be permuted such that
\begin{equation} \label{eq:definition_A}
\A \;=\;  \begin{bmatrix}
             \W & \A_{2 1}^T \\
             \A_{2 1} & \A_{2 2}
           \end{bmatrix}
          \quad \textrm{ and }
\quad
\C \;=\; \begin{bmatrix}
             \W  \\
             \A_{2 1}
           \end{bmatrix}\textrm{,}
\end{equation}
where $\W$ is of size $c\times c$.
The standard \nystrom approximation is defined by
\[
\tilde{\A}^{\textrm{nys}}_{c} \;\triangleq\; \C \U^{\textrm{nys}} \C^T \;=\; \C \W^\dag \C^T ,
\]
and the modified \nystrom approximation is
\[
\tilde{\A}^{\textrm{mod}}_{c} \;\triangleq \; \C \U^{\textrm{mod}} \C^T \;=\; \C \big( \C^\dag \A (\C^\dag)^T \big) \C^T .
\]
Here the $c\times c$ matrices $\U^{\textrm{nys}} \triangleq \W^\dag$ and $\U^{\textrm{mod}} \triangleq \C^\dag \A (\C^\dag)^T$
are called {\it the intersection matrices}. We see that
the only difference between the two models is their intersection matrices.

For the approximation $\C \U \C^T$ constructed by either of the methods,
given a target rank $k$, we hope the error ratio
\[
f=\|\A - \C \U \C^T\|_\xi /\|\A - \A_k\|_\xi , \quad (\xi=F \textrm{ or } 2),
\]
is as small as possible.
However, \cite{wang2013improving} showed that for the standard \nystrom method,
whatever a column selection algorithm is used,
the ratio $f$ must grow with the matrix size $m$ when $c$ is fixed.

\begin{lemma}[Lower Error Bound of the Standard \nystrom Method \citep{wang2013improving}]
Whatever a column sampling algorithm is used,
there exists an $m\times m$ SPSD matrix $\A$ such that the error incurred by the standard \nystrom method obeys:
\begin{eqnarray}
\big\| \A - \C \W^\dag \C^T \big\|_F^2 & \geq & \Ome \Big( 1 + \frac{m k}{c^2} \Big) \|\A - \A_k\|_F^2 ,\nonumber \\
\big\| \A - \C \W^\dag \C^T \big\|_2 & \geq & \Ome \Big( \frac{m}{c} \Big) \|\A - \A_k\|_2 . \nonumber
\end{eqnarray}
Here $k$ is an arbitrary target rank, and $c$ is the number of selected columns.
\end{lemma}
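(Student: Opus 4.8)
The plan is to establish both inequalities as \emph{worst-case} lower bounds: for each triple $(m,c,k)$ --- the interesting regime being $k\le c\le m/2$ --- I would exhibit one $m\times m$ SPSD matrix $\A$ on which \emph{every} size-$c$ column subset $S$ incurs the claimed error. Since any column sampling algorithm, deterministic or randomized, just outputs some distribution over subsets $S$, a bound holding uniformly over all $S$ immediately gives the same bound for the (expected) error of any algorithm, so the ``whatever a sampling algorithm is used'' clause reduces to a deterministic statement about fixed subsets.

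For the hard instance I would use a flat signal plus an isotropic tail: partition $\{1,\dots,m\}$ into $k$ equal blocks of size $d=m/k$ and set $\A=\epsilon\I_m+(1-\epsilon)\Diag(\B,\dots,\B)$ where $\B=\frac1d\1_d\1_d^T$ is a rank-one projection and $\epsilon>0$ is a small parameter fixed at the very end. The spectrum of $\A$ then consists of the value $1$ with multiplicity $k$ and the value $\epsilon$ with multiplicity $m-k$, so $\|\A-\A_k\|_F^2=(m-k)\epsilon^2$ and $\|\A-\A_k\|_2=\epsilon$ are immediate. For the Nystr\"om side I would invoke the standard fact that for SPSD $\A$ the residual $\A-\C\W^\dag\C^T$ (with $\C$ the selected columns and $\W$ the selected-by-selected submatrix, which is nonsingular here because $\epsilon>0$) equals, after reordering $S$ to the front, the Schur complement $\A_{22}-\A_{21}\W^{-1}\A_{21}^T$. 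Because $\A$ is block diagonal and invariant under permuting coordinates within each block, this Schur complement is itself block diagonal and its $j$-th block depends only on the count $c_j:=|S\cap(\text{block }j)|$, subject to $\sum_j c_j=c$.

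A short Woodbury computation on $\epsilon\I_d+(1-\epsilon)\B$ then pins down the per-block residual: a block with $c_j\ge 1$ contributes $\Theta(\epsilon\,d/c_j)$ to the spectral norm and $\Theta\!\bigl(\epsilon^2(d/c_j)^2+\epsilon^2(d-c_j)\bigr)$ to the squared Frobenius norm, whereas a block with $c_j=0$ is untouched and contributes $\|\B\|_2=1$ and $\|\B\|_F^2\ge 1$ --- larger by a factor $\Theta(\epsilon^{-2})$ --- which is precisely why the worst-case algorithm keeps every $c_j\ge 1$; this is possible exactly when $c\ge k$, and when $c<k$ the asserted bounds hold a fortiori. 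Summing over the $k$ blocks and using $d=m/k$, the residual is $\gtrsim\epsilon\,\tfrac{m}{k}\cdot(\min_j c_j)^{-1}$ in spectral norm and $\gtrsim\epsilon^2\bigl(\tfrac{m^2}{k^2}\sum_j c_j^{-2}+(m-c)\bigr)$ in squared Frobenius norm. Finally I minimize over $\sum_j c_j=c$: $\max_j c_j^{-1}$ is minimized by the balanced allocation $c_j=c/k$, and $\sum_j c_j^{-2}$ is Schur-convex so is also minimized there, giving $\sum_j c_j^{-2}\ge k^3/c^2$; dividing by $\|\A-\A_k\|_2=\epsilon$ and by $\|\A-\A_k\|_F^2\asymp m\epsilon^2$ yields the advertised $\Ome(m/c)$ and $\Ome(1+mk/c^2)$.

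I expect the main difficulty to be carrying out the per-block analysis uniformly and honestly: verifying the Schur-complement/block-symmetry reduction cleanly (including the degenerate $c_j\in\{0,1\}$ cases), controlling the Woodbury expression well enough to obtain two-sided $\Theta(\cdot)$ estimates in the limit $\epsilon\to0$, confirming that the balanced allocation is simultaneously the adversary's optimum for \emph{both} norm quantities, and tracking the absolute constants so the final ratios collapse to the clean $\Ome(\cdot)$ form. The algebra is elementary, but this bookkeeping --- together with separately disposing of the $c<k$ and near-$m$ corner cases --- is where the care goes.
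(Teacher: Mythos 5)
Your proposal is correct and follows essentially the same route as the cited source and as this paper's own proof of the analogous lower bound for the modified Nystr\"om method (Theorem~\ref{thm:lower}, Appendix~D): a block-diagonal hard instance with $k$ blocks of the form ``identity plus a flat rank-one matrix'' (your $\epsilon\I+(1-\epsilon)\B$ is, up to scaling, the paper's block with unit diagonal and off-diagonal $\alpha$ as $\alpha\to1$), an exact Woodbury/Schur-complement computation per block, and a minimization over the allocation $(c_1,\dots,c_k)$ of selected columns across blocks. The only caveat is cosmetic: the per-block Frobenius term should be $(d-c_j)^2/c_j^2$ rather than $(d/c_j)^2$, which your standing assumption $c\le m/2$ (mirroring the implicit regime behind the $\Ome(\cdot)$ statement) absorbs into the constants.
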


Thus, when the matrix size $m$ is large, the standard \nystrom approximation is very inaccurate unless a large number of columns are selected.
By comparison, when using an algorithm in \cite{wang2013improving} for the modified \nystrom method,
the error ratio $f$ remains constant for a fixed $c$ and a growing $m$.
Therefore, the modified \nystrom method is more accurate than the standard \nystrom method.

However, the accuracy gained by the modified \nystrom method is at the cost of higher time and space complexities.
Computing the intersection matrix $\U^{\textrm{nys}} = \W^\dag$ only takes time  $\OM(c^3)$  and space $\OM(c^2)$,
while computing $\U^{\textrm{mod}} = \C^\dag \A (\C^\dag)^T$ naively takes time $\OM(m c^2) + \TimeMulti(m^2 c)$
and space $\OM(m c)$~\footnote{The matrix multiplication can be done blockwisely, that is,
loading two small blocks into RAM to perform multiplication at a time.
So the space cost of the matrix multiplication is $\OM(m c)$ rather than $\OM(m^2)$ \citep{wang2013improving}.}.

\subsection{Sampling Algorithms for the \nystrom Methods} \label{sec:previous:sampling}

The column selection problem has been widely studied in the theoretical computer science community
\citep{boutsidis2011NOCshort,mahoney2011ramdomized,Guruswami2012optimal}
and the numerical linear algebra community \citep{gu1996efficient,stewart1999four},
and numerous algorithms have been devised and analyzed.
Here we focus on some theoretically guaranteed algorithms studied in the theoretical computer science community.

In the previous work much attention has been paid on improving column sampling algorithms
such that the \nystrom approximation is more accurate.
Uniform sampling is the simplest and most time-efficient column selection algorithm,
and it has provable error bounds when applied to the standard \nystrom method \citep{gittens2011spectral,jin2012improved,kumar2012sampling,gittens2013revisiting}.
To improve the approximation accuracy, many importance sampling algorithms have been proposed,
among which the adaptive sampling of \cite{deshpande2006matrix} (see Algorithm~\ref{alg:adaptive})
and the leverage score based sampling of \cite{drineas2008cur,ma2014statistical} are widely studied.
The leverage score based sampling has provable bounds when applied to the standard \nystrom method \citep{gittens2013revisiting},
and the adaptive sampling has provable bounds when applied to the modified \nystrom method \citep{wang2013improving}.
Besides, quadratic R\'enyi entropy based active subset selection~\citep{de2010optimized}
and $k$-means clustering based selection~\citep{zhang2010clustered} are also effective algorithms,
but they do not have additive-error or relative-error bound.

Particularly, \cite{wang2013improving} proposed an algorithm for the modified \nystrom method
by combining the near-optimal column sampling algorithm \citep{boutsidis2011NOCshort}
and the adaptive sampling algorithm \citep{deshpande2006matrix}.
The error bound of the algorithm is the strongest  among all the feasible algorithms for the \nystrom methods.
We show it in the following lemma.

\begin{lemma}[The Near-Optimal+Adaptive Algorithm \citep{wang2013improving}] \label{lem:near_opt}
Given a symmetric matrix $\A \in \RB^{m\times m}$ and a target rank $k$,
the algorithm samples totally
$c = \OM( k \epsilon^{-2} )$
columns of $\A$ to construct the approximation.
We run the algorithm $t \geq (2\epsilon^{-1} + 1) \log (1/p)$ times (independently in parallel)
and choose the sample that minimizes $\|\A - \C \big( \C^\dag \A (\C^{\dag})^T \big) \C^T \big\|_F$,
then the inequality
\begin{equation}
\big\|\A - \C \big( \C^\dag \A (\C^\dag)^T \big) \C^T \big\|_F
\;\leq\; (1+\epsilon) \|\A - \A_k \|_F  \nonumber
\end{equation}
holds with probability at least $1-p$.
The algorithm costs $\OM\big( m c^2 + m k^3 \epsilon^{-2/3} \big) + \TimeMulti\big( m^2 c \big)$ time and $\OM(m c)$ space in
computing $\C$ and $\U$.
\end{lemma}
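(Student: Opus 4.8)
The plan is to build the column matrix $\C$ in two rounds---a near-optimal ``warm start'' followed by an adaptive-sampling refinement---then to invoke the optimality of the intersection matrix $\U^{\textrm{mod}}$, and finally to boost the success probability by repetition. \emph{Reduction to a two-sided projection problem.} Writing $\PM_\C=\C\C^\dag$ for the orthogonal projector onto $\mathrm{range}(\C)$, a short computation (factor $\C=\Q\R$ with $\Q$ an orthonormal basis of $\mathrm{range}(\C)$, so that $\C\U\C^T$ ranges exactly over $\{\Q\M\Q^T\}$) shows that $\U^{\textrm{mod}}=\C^\dag\A(\C^\dag)^T$ is precisely the minimizer of $\|\A-\C\U\C^T\|_F$ over all $c\times c$ matrices $\U$, with
\[
\big\|\A-\C\U^{\textrm{mod}}\C^T\big\|_F \;=\; \big\|\A-\PM_\C\A\PM_\C\big\|_F .
\]
Hence it suffices to exhibit \emph{some} $\C$ with $\OM(k\epsilon^{-2})$ columns for which $\|\A-\PM_\C\A\PM_\C\|_F\le(1+\epsilon)\|\A-\A_k\|_F$ with the stated probability; no other intersection matrix does better, so actually computing the genuine $\U^{\textrm{mod}}$ is legitimate. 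Since $\A$ is SPSD I would write $\A=\B\B^T$ and use the Pythagorean identity $\|\A-\PM_\C\A\PM_\C\|_F^2=\|(\I-\PM_\C)\A\|_F^2+\|\PM_\C\A(\I-\PM_\C)\|_F^2$, whose ``cross'' term equals $\|(\PM_\C\B)\big((\I-\PM_\C)\B\big)^T\|_F^2$ and is at most $\|(\I-\PM_\C)\A\|_F^2$ by symmetry of $\A$. The crude consequence $\|\A-\PM_\C\A\PM_\C\|_F^2\le 2\|(\I-\PM_\C)\A\|_F^2$ reduces the two-sided error to the familiar one-sided column-reconstruction error, but it loses a factor $2$ that cannot coexist with a $(1+\epsilon)$ target; controlling the cross term directly, using the SPSD factorization, is the crux.

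\emph{Two-round construction.} In Round~1 I would run the near-optimal column-subset-selection algorithm of \cite{boutsidis2011NOCshort} on $\A$ with target rank $k$ and accuracy $\Theta(\epsilon)$ (a constant accuracy would also do, since Round~2 dominates the column budget), obtaining $c_1=\OM(k\epsilon^{-1})$ columns $\C_1$ with $\|\A-\PM_{\C_1}\A\|_F^2\le(1+\Theta(\epsilon))\|\A-\A_k\|_F^2$, at cost $\OM\big(mk^3\epsilon^{-2/3}\big)+\TimeMulti(m^2c)$. In Round~2, with $\C_1$ fixed, I would draw $c_2=\Theta(k\epsilon^{-2})$ further columns of $\A$ independently, taking the $j$-th column with probability proportional to the squared norm of the $j$-th column of the residual $\A-\PM_{\C_1}\A$---the adaptive sampling of \cite{deshpande2006matrix}. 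Feeding this into the adaptive-sampling error bound specialized to the two-sided SPSD setting (as developed in \cite{wang2013improving}), which controls $\EB\,\|\A-\PM_\C\A\PM_\C\|_F^2$ for $\C=[\C_1,\C_2]$ by $\big(\|\A-\A_k\|_F+\sqrt{k/c_2}\,\|\A-\PM_{\C_1}\A\|_F\big)^2$, the choice $c_2=\Theta(k\epsilon^{-2})$ yields $\EB\,\|\A-\PM_\C\A\PM_\C\|_F^2\le(1+\epsilon)^2\|\A-\A_k\|_F^2$; note that the $\sqrt{k/c_2}$ rate is exactly why the column count is $\OM(k\epsilon^{-2})$ rather than $\OM(k\epsilon^{-1})$. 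One then forms $\U^{\textrm{mod}}=\C^\dag\A(\C^\dag)^T$, adding $\OM(mc^2)+\TimeMulti(m^2c)$; with $c=c_1+c_2=\OM(k\epsilon^{-2})$ the total is $\OM\big(mc^2+mk^3\epsilon^{-2/3}\big)+\TimeMulti(m^2c)$ time and $\OM(mc)$ space.

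\emph{Boosting and the main obstacle.} From $\EB\,\|\A-\PM_\C\A\PM_\C\|_F^2\le(1+\epsilon)^2\|\A-\A_k\|_F^2$, applying Markov's inequality to the nonnegative excess $\|\A-\PM_\C\A\PM_\C\|_F^2-\|\A-\A_k\|_F^2$---with the hidden constants tuned so its expectation is a $1/(2\epsilon^{-1}+1)$ fraction of the allowed excess---shows that a single run achieves $\|\A-\PM_\C\A\PM_\C\|_F\le(1+\epsilon)\|\A-\A_k\|_F$ with probability at least $1/(2\epsilon^{-1}+1)$. Running the procedure $t\ge(2\epsilon^{-1}+1)\log(1/p)$ times independently in parallel and keeping the run minimizing $\|\A-\C(\C^\dag\A(\C^\dag)^T)\C^T\|_F$ then fails with probability at most $\big(1-\tfrac{1}{2\epsilon^{-1}+1}\big)^t\le p$, and since $\U^{\textrm{mod}}$ is the minimizing intersection matrix this selection rule is sound. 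I expect the genuine difficulty to be precisely the cross-term control in the reduction step: unlike the one-sided residual $\|(\I-\PM_\C)\A\|_F$, the quantity $\|\PM_\C\A(\I-\PM_\C)\|_F$ is \emph{not} driven down merely by enlarging an arbitrary starting set, so the quantitative two-sided guarantee has to emerge from the interplay between the residual-proportional sampling of Round~2 and the square root $\A=\B\B^T$, and it is there that one must pay the extra $\epsilon^{-1}$ factor in the number of columns.
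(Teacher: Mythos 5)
You have reconstructed the right architecture, but note first that the paper never proves Lemma~\ref{lem:near_opt}: it is imported verbatim from \cite{wang2013improving}, and the closest internal analogue is the proof of Theorem~\ref{thm:efficient} in the appendix, which follows exactly your skeleton --- a warm start with a one-sided projection bound, adaptive sampling handled by Lemma~\ref{lemma:wang2013improving}, then Markov's inequality plus independent repetition --- with uniform sampling in place of the near-optimal step. Your opening reduction (optimality of $\U^{\textrm{mod}}$, so that the error equals $\|\A-\PM_\C\A\PM_\C\|_F$), the near-optimal warm start with $\OM(k\epsilon^{-1})$ columns, one adaptive round of $\Theta(k\epsilon^{-2})$ columns, and boosting by taking the best of $t$ independent runs is indeed how the cited result is obtained.

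Two steps, however, are not right as stated. (i) The key two-sided adaptive bound is misquoted: the adaptive-sampling theorem of \cite{wang2013improving} gives, conditionally on $\C_1$, a bound of the form $\EB\,\|\A-\PM_\C\A\PM_\C\|_F^2 \le \|\A-\PM_{\C_1}\A\|_F^2 + \frac{\rk(\C_1)}{c_2}\,\|\A-\A\PM_{\C_1}\|_F^2$, i.e.\ the decay factor is $c_1/c_2$, not $k/c_2$. Since $c_1=\OM(k\epsilon^{-1})$, this still forces $c_2=c_1\epsilon^{-1}=\OM(k\epsilon^{-2})$, so your column budget survives, but two of your side claims do not: a constant-accuracy warm start would \emph{not} ``also do'' (a single adaptive round then leaves a constant factor --- which is precisely why the paper's uniform+adaptive$^2$ algorithm, whose uniform initialization only achieves $f=5$, needs \emph{two} adaptive rounds in Lemma~\ref{lemma:wang2013improving}), and the $k\epsilon^{-2}$ count comes from the $\rk(\C_1)/c_2$ factor, not from a $\sqrt{k/c_2}$ rate. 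The ``cross-term'' difficulty you worry about is absorbed into that theorem (the right-hand factor is sampled adaptively conditionally on the fixed left projection); no $\A=\B\B^T$ argument is needed. (ii) The boosting step is invalid as written: the excess $\|\A-\PM_\C\A\PM_\C\|_F^2-\|\A-\A_k\|_F^2$ can be negative, because $\C\U\C^T$ has rank up to $c\gg k$, so Markov's inequality cannot be applied to it. The correct route, used in Lemma~\ref{lemma:wang2013improving} and in the proof of Theorem~\ref{thm:efficient}, is Markov on the nonnegative error norm itself against the threshold $(1+s\epsilon)\|\A-\A_k\|_F$, giving per-run failure probability at most $\frac{1+\epsilon}{1+s\epsilon}$ and hence success probability $\frac{(s-1)\epsilon}{1+s\epsilon}$; taking $s=2$ and rescaling $\epsilon$ reproduces exactly the $t\ge(2\epsilon^{-1}+1)\log(1/p)$ repetition count in the statement.
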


The near-optimal+adaptive algorithm is effective and efficient, but its implementation is very complicated.
Its main component---the near-optimal column selection algorithm---consists of three steps:
approximate SVD via random projection~\citep{boutsidis2011NOCshort,halko2011ramdom},
the dual-set sparsification algorithm~\citep{boutsidis2011NOCshort},
and the adaptive sampling algorithm~\citep{deshpande2006matrix}.
Without careful implementation of the first two steps,
the time and space costs roar, making  the near-optimal+adaptive algorithm  inefficient.

\begin{algorithm}[tb]
   \caption{The Uniform+Adaptive${}^2$ Algorithm.}
   \label{alg:efficient}
\algsetup{indent=2em}
\begin{small}
\begin{algorithmic}[1]
   \STATE {\bf Input:} an $m\times m$ symmetric matrix $\A$, target rank $k$, error parameter $\epsilon \in (0, 1]$, matrix coherence $\mu$.
   \STATE {\bf Uniform Sampling.} Uniformly sample
            \vspace{-2mm}
            \[c_1 = 8.7 \mu k \log \big(\sqrt{5} k \big)
            \vspace{-2mm}
            \]
            columns of $\A$ without replacement to construct $\C_1$; \label{alg:practical:uniform}
   \STATE {\bf Adaptive Sampling.} Sample
            \vspace{-2mm}
            \[c_2 = 10 k \epsilon^{-1}
            \vspace{-2mm}
            \]
            columns of $\A$ to construct $\C_2$
            using adaptive sampling algorithm~\ref{alg:adaptive} according to the residual $\A - \PM_{\C_1} \A$; \label{alg:practical:adapt1}
   \STATE {\bf Adaptive Sampling.} Sample
            \vspace{-2mm}
            \[c_3 = 2 \epsilon^{-1} (c_1 + c_2)
            \vspace{-2mm}
            \] columns of $\A$ to construct $\C_3$ \label{alg:practical:adapt2}
            using adaptive sampling algorithm~\ref{alg:adaptive} according to the residual $\A - \PM_{[\C_1,\;\C_2]} \A$;
   \RETURN $\C = [\C_1 , \C_2 , \C_3]$ and $\U = \C^\dag \A (\C^\dag)^T$.
\end{algorithmic}
\end{small}
\end{algorithm}

\section{An Efficient Column Sampling Algorithm for the Modified \nystrom Method} \label{sec:efficient}

In this paper we propose a column sampling algorithm
which is efficient, effective, and very easy to implement.
The algorithm consists of a uniform sampling step and two adaptive sampling steps, so we call it {\it the uniform+adaptive${}^2$ algorithm}.
The algorithm is described in Algorithm~\ref{alg:efficient} and analyzed in Theorem~\ref{thm:efficient}.

The idea behind the uniform+adaptive${}^2$ algorithm is quite intuitive.
Since the modified \nystrom method is the simultaneous projection of $\A$ onto the column space of $\C$ and the row space of $\C^T$,
the approximation error will get lower if $\spann(\C)$ better approximates $\spann(\A)$.
After the initialization by uniform sampling,
the columns of $\A$ far from $\spann(\C_1)$ have large residuals and are thus likely to get chosen by the adaptive sampling.
After two rounds of adaptive sampling, columns of $\A$ are likely to be near $\spann(\C)$.

It is worth  mentioning that our uniform+adaptive${}^2$ algorithm is similar to
the adaptive-full algorithm of \cite[Figure 3]{kumar2012sampling}.
The adaptive-full algorithm consists of a random initialization followed by multiple adaptive sampling steps.
Obviously, using multiple adaptive sampling steps can surely reduce the approximation error.
However, the update of sampling probability in each step is expensive, so we choose to do only two steps.
Importantly, the adaptive-full algorithm of \cite[Figure 3]{kumar2012sampling} is merely a heuristic scheme without theoretical guarantee,
whereas our uniform+adaptive${}^2$ algorithm has a strong error bound which is nearly as good as the state-of-the-art algorithm of \cite{wang2013improving}
(See Theorem~\ref{thm:efficient}).

\begin{algorithm}[tb]
   \caption{The Adaptive Sampling Algorithm.}
   \label{alg:adaptive}
\algsetup{indent=2em}
\begin{small}
\begin{algorithmic}[1]
   \STATE {\bf Input:} a residual matrix $\B \in \RB^{m\times n}$ and number of selected columns $c$ $(< n)$.
   \STATE Compute sampling probabilities $p_j = \|\bb_j\|_2^2 / \|\B\|_F^2$ for $j=1, \cdots , n$;
   \STATE Select $c$ indices in $c$ i.i.d.\ trials, in each trial the index $j$ is chosen with probability $p_j$;
   \RETURN an index set containing the indices of the selected columns.
\end{algorithmic}
\end{small}
\end{algorithm}

\begin{theorem}[The Uniform+Adaptive${}^2$ Algorithm.] \label{thm:efficient}
Given an $m {\times} m$ symmetric matrix $\A$ and a target rank $k$, we let $\mu_k$ denote the matrix coherence of $\A$.
Algorithm~\ref{alg:efficient} samples totally
\[
c = \OM\big( k \epsilon^{-2} + \mu_k \epsilon^{-1} k \log k \big)
\]
columns of $\A$ to construct the approximation.
We run Algorithm~\ref{alg:efficient}
\[
t \geq (20\epsilon^{-1} + 18) \log (1/p)
\]
times (independently in parallel)
and choose the sample that minimizes $\|\A - \C \big( \C^\dag \A (\C^{\dag})^T \big) \C^T \big\|_F$,
then the inequality
\begin{eqnarray}
\big\|\A - \C \big( \C^\dag \A (\C^{\dag})^T \big) \C^T \big\|_F
\;\leq \; \big(1 + \epsilon \big) \big\| \A - \A_k \big\|_F \nonumber
\end{eqnarray}
holds with probability at least $1-p$.
The algorithm costs $\OM\big(m c^2 \big) + T_{\mathrm{Multiply}}\big(m^2 c\big)$ time and $\OM(m c)$ space in
computing $\C$ and $\U$.
\end{theorem}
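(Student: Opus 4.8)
The plan is to track the squared Frobenius error through the three sampling stages of Algorithm~\ref{alg:efficient} and then convert a bound on the \emph{expected} error into the stated high–probability bound by running $t$ independent copies and keeping the best (exactly as in Lemma~\ref{lem:near_opt}). The starting observation is that for any column matrix $\C$ the modified \nystrom approximation is the two–sided projection $\C\big(\C^\dag\A(\C^\dag)^T\big)\C^T = \PM_\C\A\PM_\C$, where $\PM_\C=\C\C^\dag$ is the orthogonal projector onto $\spann(\C)$; so the quantity to control is $\EB\|\A-\PM_\C\A\PM_\C\|_F^2$.

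\emph{Stages $1$ and $2$ (obtaining a good one–sided bound).} For the uniform step I would invoke a coherence–based column–subset–selection guarantee: since $\C_1$ consists of $c_1 = 8.7\,\mu_k k\log(\sqrt 5 k)$ columns drawn uniformly, a matrix–Chernoff / leverage–score argument on the rows of $\V_{\A,k}$ shows that with probability at least a fixed constant, $\spann(\C_1)$ contains a near–optimal rank–$k$ approximation, so $\|\A-\PM_{\C_1}\A\|_F^2 \le \gamma\,\|\A-\A_k\|_F^2$ for an absolute constant $\gamma$; everything afterwards is conditioned on this event. The first adaptive step, Algorithm~\ref{alg:adaptive} applied to the residual $\A-\PM_{\C_1}\A$ with $c_2 = 10 k\epsilon^{-1}$ i.i.d.\ samples, is precisely the adaptive sampling of \cite{deshpande2006matrix}, whose bound gives $\EB\big\|\A-\PM_{[\C_1,\C_2],k}\A\big\|_F^2 \le \|\A-\A_k\|_F^2 + \tfrac{k}{c_2}\|\A-\PM_{\C_1}\A\|_F^2$. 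Writing $\C'=[\C_1,\C_2]$, using $\|\A-\PM_{\C'}\A\|_F\le\|\A-\PM_{\C',k}\A\|_F$ and $\tfrac{k}{c_2}=\tfrac{\epsilon}{10}$, and combining with the Stage–$1$ bound, this yields $\EB\|\A-\PM_{\C'}\A\|_F^2 \le (1+O(\epsilon))\,\|\A-\A_k\|_F^2$.

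\emph{Stage $3$ (from one–sided to modified \nystrom, and boosting).} This is the crux. Conditioned on $\C'$, the second adaptive step samples $c_3 = 2\epsilon^{-1}(c_1+c_2)$ columns adaptively from the residual $\A-\PM_{\C'}\A$, forming $\C_3$, and sets $\C=[\C',\C_3]$. Here I would invoke the adaptive–sampling bound for the modified \nystrom / CUR construction of \cite{wang2013improving}, whose excess factor is governed by the number of \emph{already selected} columns rather than by $k$:
\[
\EB\big\|\A-\C\big(\C^\dag\A(\C^\dag)^T\big)\C^T\big\|_F^2 \;\le\; \Big(1+\tfrac{c_1+c_2}{c_3}\Big)\,\big\|\A-\PM_{\C'}\A\big\|_F^2 \;=\; \big(1+\tfrac{\epsilon}{2}\big)\,\big\|\A-\PM_{\C'}\A\big\|_F^2 ,
\]
which is exactly why $c_3$ is chosen proportional to $c_1+c_2$. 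Taking expectations over Stages $2$ and $3$ and absorbing constants (equivalently, rescaling $\epsilon$), we get $\EB\|\A-\C\U\C^T\|_F^2 \le (1+O(\epsilon))\|\A-\A_k\|_F^2$ on the Stage–$1$ event. Markov's inequality on this conditional expectation together with the constant success probability of Stage $1$ shows that one run of Algorithm~\ref{alg:efficient} returns $\|\A-\C\U\C^T\|_F \le (1+\epsilon)\|\A-\A_k\|_F$ with probability $\Omega(\epsilon)$ (bounded below by a multiple of $\epsilon$ for small $\epsilon$ and by a constant otherwise); running $t\ge(20\epsilon^{-1}+18)\log(1/p)$ independent copies and returning the best drives the failure probability below $(1-\Omega(\epsilon))^t\le p$.

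\emph{Size, cost, and the main difficulty.} Summing, $c = c_1+c_2+c_3 = \OM\big(k\epsilon^{-2}+\mu_k\epsilon^{-1}k\log k\big)$. The dominant operations are forming $\C_1^\dag$ and $\PM_{\C_1}\A$ (and analogously at Stage $3$) and finally $\U=\C^\dag\A(\C^\dag)^T$: each pseudo–inverse costs $\OM(mc^2)$ and each product costs a matrix multiplication of size $\OM(m^2 c)$, performed block–wise so the working space stays $\OM(mc)$; this gives total time $\OM(mc^2)+\TimeMulti(m^2 c)$ and space $\OM(mc)$. I expect the main obstacle to be Stage $3$ together with the $\epsilon$–accounting: one must use the two–sided (CUR–type) adaptive–sampling bound with the correct excess factor $(c_1+c_2)/c_3$, verify that the three multiplicative slacks compound to $1+O(\epsilon)$, and carefully separate the argument into conditioning on the Stage–$1$ event, taking expectations over Stages $2$–$3$, and a final Markov–plus–repetition step, since Stage $1$ only delivers a high–probability (not an in–expectation) constant–factor guarantee.
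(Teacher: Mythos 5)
Your proposal follows essentially the same route as the paper's proof: a coherence-based matrix-Chernoff guarantee for the uniform stage (the paper's Lemma~4), the Deshpande and Wang--Zhang adaptive-sampling expectation bounds with excess factors $k/c_2$ and $(c_1+c_2)/c_3$ for the two adaptive stages (which is exactly the content the paper imports as Lemma~5 from \cite{wang2013improving}), and a final Markov-plus-independent-repetition argument with per-run success probability $\Omega(\epsilon)$. The only difference is bookkeeping: the paper fixes $\delta=1/\sqrt{5}$, $\theta=\sqrt{5}/4$ (hence $f=5$) and $s=2$ to extract the explicit constants in $c_1,c_2,c_3$ and $t\geq(20\epsilon^{-1}+18)\log(1/p)$, whereas you absorb these constants by rescaling $\epsilon$.
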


\begin{remark}\label{remark:efficient}
Theoretically, Algorithm~\ref{alg:efficient} requires to compute the matrix coherence of $\A$
in order to determine $c_1$, $c_2$, and $c_3$.
However, computing the matrix coherence takes time $\OM(m^2 k)$ and is thus impractical;
even the fast approximation approach of \cite{drineas2012fast} is not feasible here because $\A$ is a square matrix.
The use of the matrix coherence here is merely for theoretical analysis;
setting the parameter $\mu$ in Algorithm~\ref{alg:efficient} to be exactly the matrix coherence does not certainly result in the highest accuracy.
According to our off-line experiments, the resulting approximation accuracy is not sensitive to the value of $\mu$.
So we strongly suggest the users to set $\mu$ in Algorithm~\ref{alg:efficient} to be a constant
rather than actually computing the matrix coherence.
\end{remark}

Table~\ref{tab:algorithms} presents comparisons between the near-optimal+adaptive algorithm of \cite{wang2013improving} and our uniform+adaptive$^2$ algorithm.
The time complexity of our algorithm is lower than the near-optimal+adaptive algorithm,
and the space complexities of the two algorithms are the same.
To attain the same error bound, our algorithm needs to select $c=\OM\big( k \epsilon^{-2}+ \mu_k \epsilon^{-1} k \log k \big)$ columns,
which is a little larger than that of the near-optimal+adaptive algorithm.
When $\epsilon \rightarrow 0$, we have that
$\OM\big( k \epsilon^{-2}+ \mu_k \epsilon^{-1} k \log k \big) = \OM\big( k \epsilon^{-2})$.
Therefore, the error bound of our algorithm is nearly as good as the near-optimal+adaptive algorithm
because $\epsilon$ is usually set to be a very small value.

\begin{table}[t]\setlength{\tabcolsep}{0.3pt}
\caption{Comparisons between the two sampling algorithms in time complexity, space complexity,
        the number of selected columns, and the hardness of implementation.}
\label{tab:algorithms}
\begin{center}
\begin{footnotesize}
\begin{tabular}{p{32pt} p{105pt} p{105pt}}
\hline
        &	\multicolumn{1}{c}{Uniform+Adaptive$^2$ ~~}	& \multicolumn{1}{c}{~~Near-Optimal+Adaptive}  \\
\hline
Time    & \multicolumn{1}{c}{$\OM\big( m c^2 \big)+ \TimeMulti\big( m^2 c \big)$}
        & \multicolumn{1}{c}{$\OM\big( m c^2 +m k^3 \epsilon^{-2/3} \big)$ }  \\ [-2pt]
        & & \multicolumn{1}{c}{$+ \TimeMulti\big( m^2 c \big)$} \\ [3pt]
Space   & \multicolumn{1}{c}{$\OM\big( m c \big)$}
        & \multicolumn{1}{c}{$\OM\big( m c \big)$} \\ [3pt]
\#columns & \multicolumn{1}{c}{$\OM\big( k \epsilon^{-2}+ \mu_k \epsilon^{-1} k \log k \big)$}
        & \multicolumn{1}{c}{$\OM\big( k \epsilon^{-2} )$}\\ [3pt]
Implement  & \multicolumn{1}{c}{Easy to implement}
        & \multicolumn{1}{c}{Hard to implement}  \\
\hline
\end{tabular}
\end{footnotesize}
\end{center}
\end{table}

\section{Fast Computation of the Intersection Matrix} \label{sec:intersection}

Naively computing the intersection matrix $\U = \C^\dag \A (\C^\dag)^T$ takes time $\OM(m c^2) + T_{\mathrm{Multiply}} ( m^2 c)$,
which is much more expensive than computing $\W^\dag$ for the standard \nystrom method.
In this section we propose a more efficient algorithm for computing the intersection matrix,
which  only takes time $\OM(c^3) + \TimeMulti\big((m-c)^2 c\big)$.
The algorithm is described in Theorem~\ref{thm:intersection_matrix}.
The algorithm is obtained by expanding the Moore-Penrose inverse of $\C$ using the theorem in~\citep[Page~179]{adi2003inverse}.

%
%

\begin{theorem} \label{thm:intersection_matrix}
For an $m\times m$ symmetric matrix $\A$,
when the submatrix $\W$ is nonsingular,
the intersection matrix of the modified \nystrom method $\U = \C^\dag \A (\C^\dag)^T$
can be computed in time $\OM(c^3) + T_{\mathrm{Multiply}} \big( (m-c)^2 c\big)$ by the following formula:
\vspace{-1mm}
\[
\U \;=\; \C^\dag \A (\C^\dag)^T
\;=\; \T_1 \big( \W + \T_2 + \T_2^T + \T_3 \big) \T_1^T \textrm{,}
\vspace{-1mm}
\]
where the intermediate matrices are computed by
\vspace{-1mm}
\begin{eqnarray}
\T_0 \;=\; \A_{2 1}^T \A_{2 1} \textrm{,}&   &
\T_1 \;=\; \W^{-1} \big( \I_c + \W^{-1} \T_2 \big)^{-1} \textrm{,}  \nonumber \\
\T_2 \;=\; \T_0 \W^{-1}        \textrm{,}&   &
\T_3 \;=\; \W^{-1} \big(\A_{2 1}^T \A_{2 2} \A_{2 1} \big)\W^{-1} \textrm{.}  \nonumber
\vspace{-1mm}
\end{eqnarray}
The four intermediate matrices are all of size $c\times c$,
and the matrix inverse operations are on $c\times c$ small matrices.
\end{theorem}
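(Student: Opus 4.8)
The plan is to reduce everything to elementary block-matrix algebra, using two structural facts: $\W$ is a principal submatrix of the symmetric matrix $\A$, hence $\W^T=\W$; and when $\W$ is nonsingular the tall matrix $\C=[\W;\A_{2 1}]$ has full column rank $c$, so $\C^\dag=(\C^T\C)^{-1}\C^T$ and every ``inverse'' appearing in the statement is a genuine inverse. The whole argument is three short steps — a convenient form for $\C^\dag$, then assembly of $\U$, then the flop count.

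\emph{Step 1: a convenient form of $\C^\dag$.} Since $\W$ is nonsingular I would factor $\C=\D\W$, where $\D$ is the $m\times c$ matrix whose top block is $\I_c$ and whose bottom block is $\A_{2 1}\W^{-1}$. Then $\D$ has full column rank, so $\D^\dag=(\D^T\D)^{-1}\D^T$, and a direct computation using $\W^T=\W$ and $\T_0=\A_{2 1}^T\A_{2 1}$, $\T_2=\T_0\W^{-1}$ gives $\D^T\D=\I_c+\W^{-1}\T_0\W^{-1}=\I_c+\W^{-1}\T_2$ and $\D^T=[\,\I_c\ \ \W^{-1}\A_{2 1}^T\,]$. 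Hence $\C^\dag=\W^{-1}\D^\dag=\W^{-1}(\I_c+\W^{-1}\T_2)^{-1}[\,\I_c\ \ \W^{-1}\A_{2 1}^T\,]=\T_1\W^{-1}\C^T$. This is precisely the pseudoinverse expansion of \citep[Page~179]{adi2003inverse}. Here one must verify that $\I_c+\W^{-1}\T_2$ is invertible, so that $\T_1$ is well defined; this holds because $\I_c+\W^{-1}\T_2=\W^{-1}(\C^T\C)\W^{-1}$ and $\C^T\C$ is positive definite (equivalently, $\T_1=(\C^T\C)^{-1}\W$).

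\emph{Step 2: assembling $\U$.} Because $\W^{-1}$ is symmetric, $(\C^\dag)^T=\C\,\W^{-1}\T_1^T$, so $\U=\C^\dag\A(\C^\dag)^T=\T_1\,(\W^{-1}\C^T\A\C\,\W^{-1})\,\T_1^T$. I would then expand $\C^T\A\C$ blockwise — first $\A\C$, then left-multiply by $\C^T$ — to get $\C^T\A\C=\W^3+\W\T_0+\T_0\W+\A_{2 1}^T\A_{2 2}\A_{2 1}$; conjugating by $\W^{-1}$ and using $\T_0^T=\T_0$ yields $\W^{-1}\C^T\A\C\,\W^{-1}=\W+\T_2+\T_2^T+\T_3$ with $\T_3=\W^{-1}(\A_{2 1}^T\A_{2 2}\A_{2 1})\W^{-1}$, which is exactly the asserted identity. \emph{Step 3: complexity.} The only products touching the large dimension $m-c$ are $\T_0=\A_{2 1}^T\A_{2 1}$ and $\A_{2 1}^T(\A_{2 2}\A_{2 1})$; forming $\A_{2 2}\A_{2 1}$ costs $\TimeMulti((m-c)^2 c)$ and the other large-dimension products collapse to $c\times c$ at cost $\TimeMulti((m-c)c^2)$, which is dominated. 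Everything else — $\W^{-1}$, the $c\times c$ products $\T_2,\T_3$, the $c\times c$ inverse $(\I_c+\W^{-1}\T_2)^{-1}$, and forming $\T_1$ and $\U$ — is on $c\times c$ matrices and costs $\OM(c^3)$, giving the stated $\OM(c^3)+\TimeMulti((m-c)^2 c)$.

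\emph{Main obstacle.} There is little real difficulty: the argument is routine block algebra. The only genuine choice is the representation of $\C^\dag$ in Step~1 — pulling $\W$ out on the right and invoking the pseudoinverse expansion so that the resulting pieces regroup \emph{exactly} into $\W$, $\T_2$, and $\T_3$ — together with the bookkeeping needed to certify that $\I_c+\W^{-1}\T_2$ (and hence $\T_1$) is invertible under the hypothesis that $\W$ is nonsingular. Once that representation is in hand, the identity and the flop count both fall out mechanically.
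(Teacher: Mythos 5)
Your proposal is correct and follows essentially the same route as the paper: the paper likewise obtains $\C^\dag = \W^{-1}\big(\I_c+\S^T\S\big)^{-1}[\I_c\ \ \S^T]\PP$ with $\S=\A_{21}\W^{-1}$ from the partitioned-matrix pseudoinverse expansion of \citep[Page~179]{adi2003inverse}, expands the sandwiched block product to get $\W+\T_2+\T_2^T+\T_3$, and attributes the $\TimeMulti\big((m-c)^2c\big)$ term to $\A_{21}^T\A_{22}\A_{21}$. Your only additions are cosmetic: you re-derive that expansion via the factorization $\C=\D\W$ instead of citing it, and you make explicit the (correct) observation that $\I_c+\W^{-1}\T_2=\W^{-1}(\C^T\C)\W^{-1}$ is invertible, which the paper leaves implicit.
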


\begin{remark}
Since the submatrix $\W$ is not in general nonsingular,
before using the algorithm, the user should first test the rank of $\W$, which takes time $\OM(c^3)$.
Empirically, for graph Laplacian and the radial basis function (RBF)  kernel \citep{Genton2001classes},
the submatrix $\W$ is usually nonsingular, and the algorithm is useful;
for the linear kernel, $\W$ is often singular, so the algorithm does not work.
\end{remark}

\begin{figure*}[!ht]
\subfigtopskip = 0pt
\begin{center}
\centering
\includegraphics[width=56mm,height=33mm]{./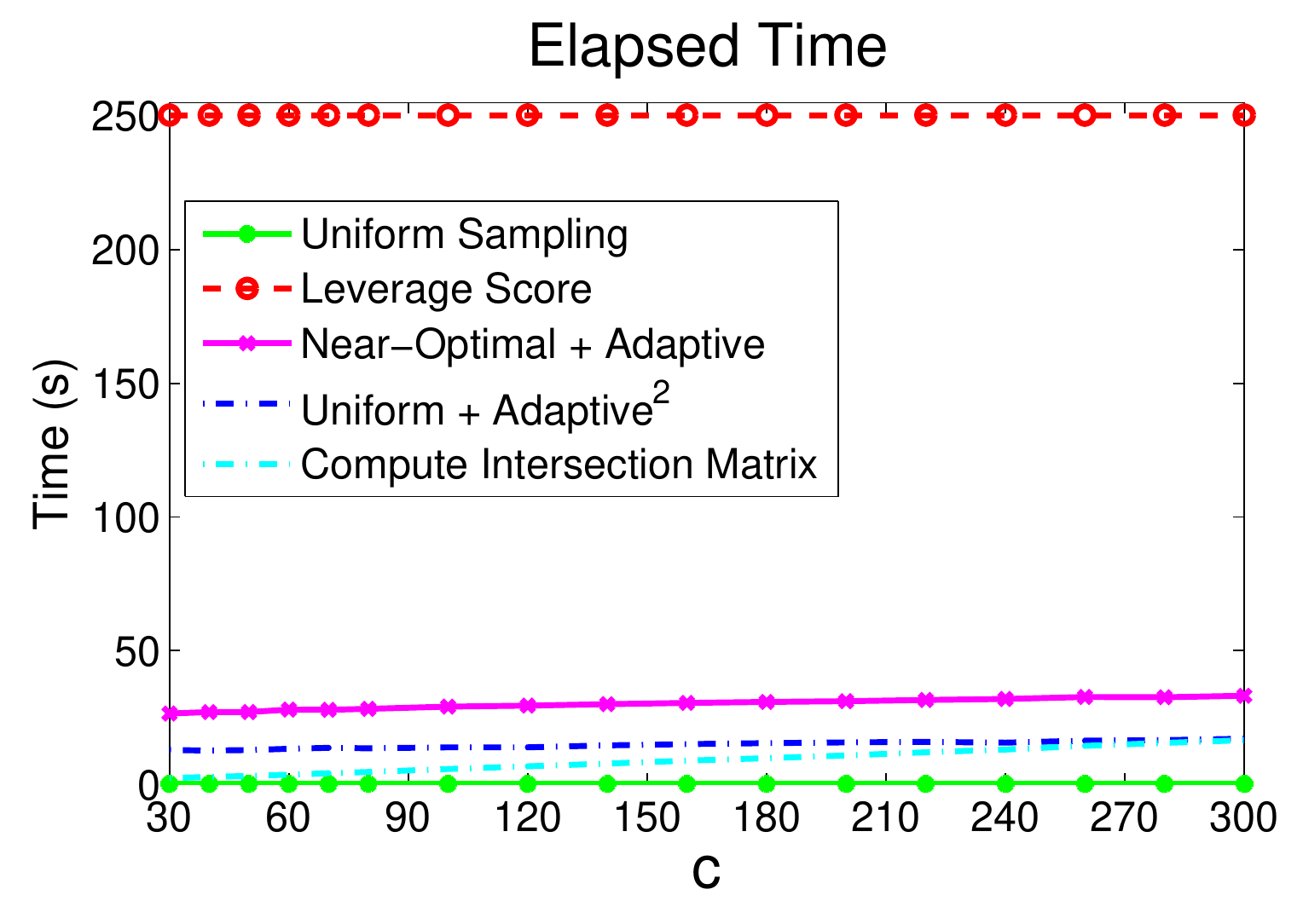}
\includegraphics[width=56mm,height=33mm]{./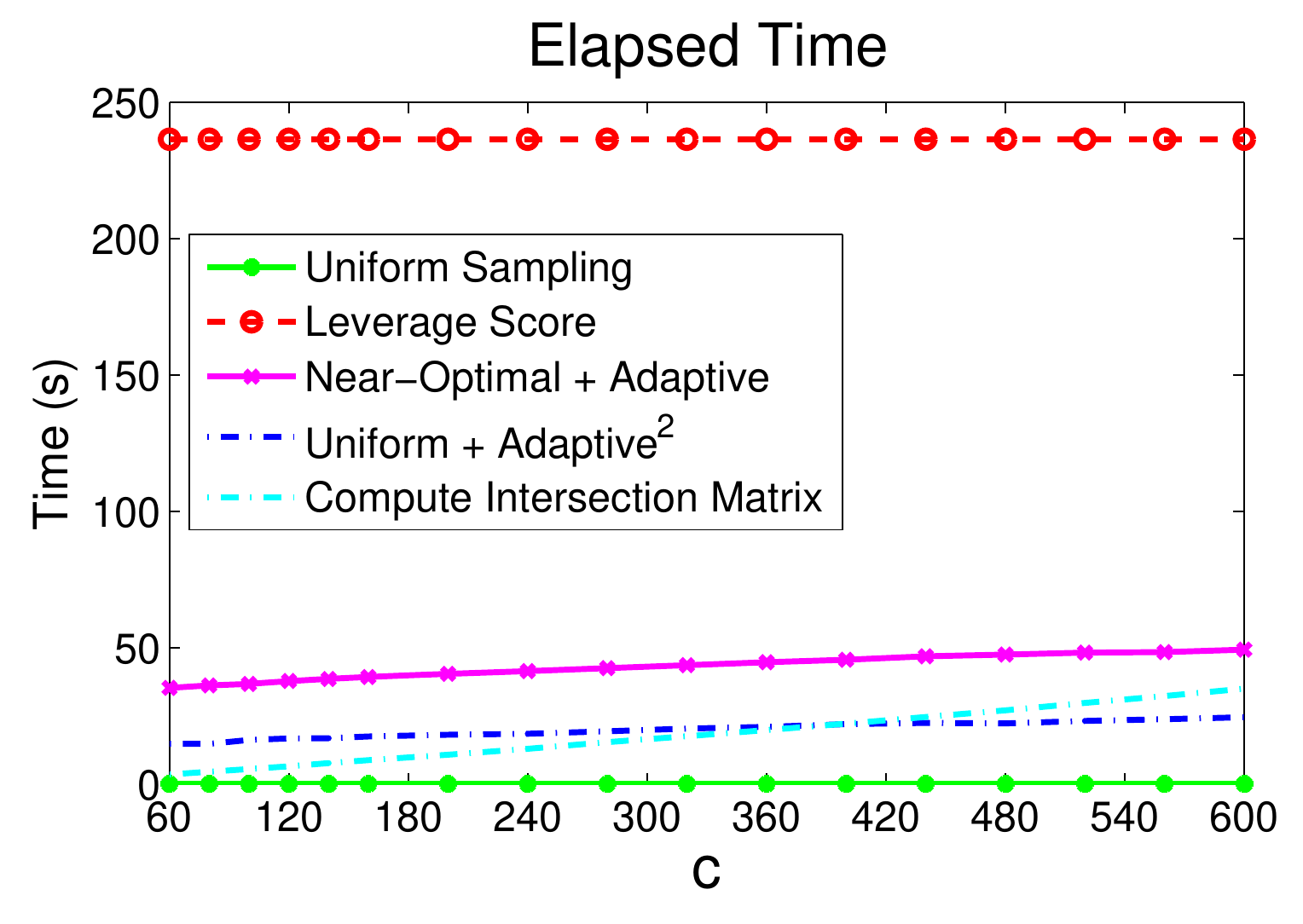}
\includegraphics[width=56mm,height=33mm]{./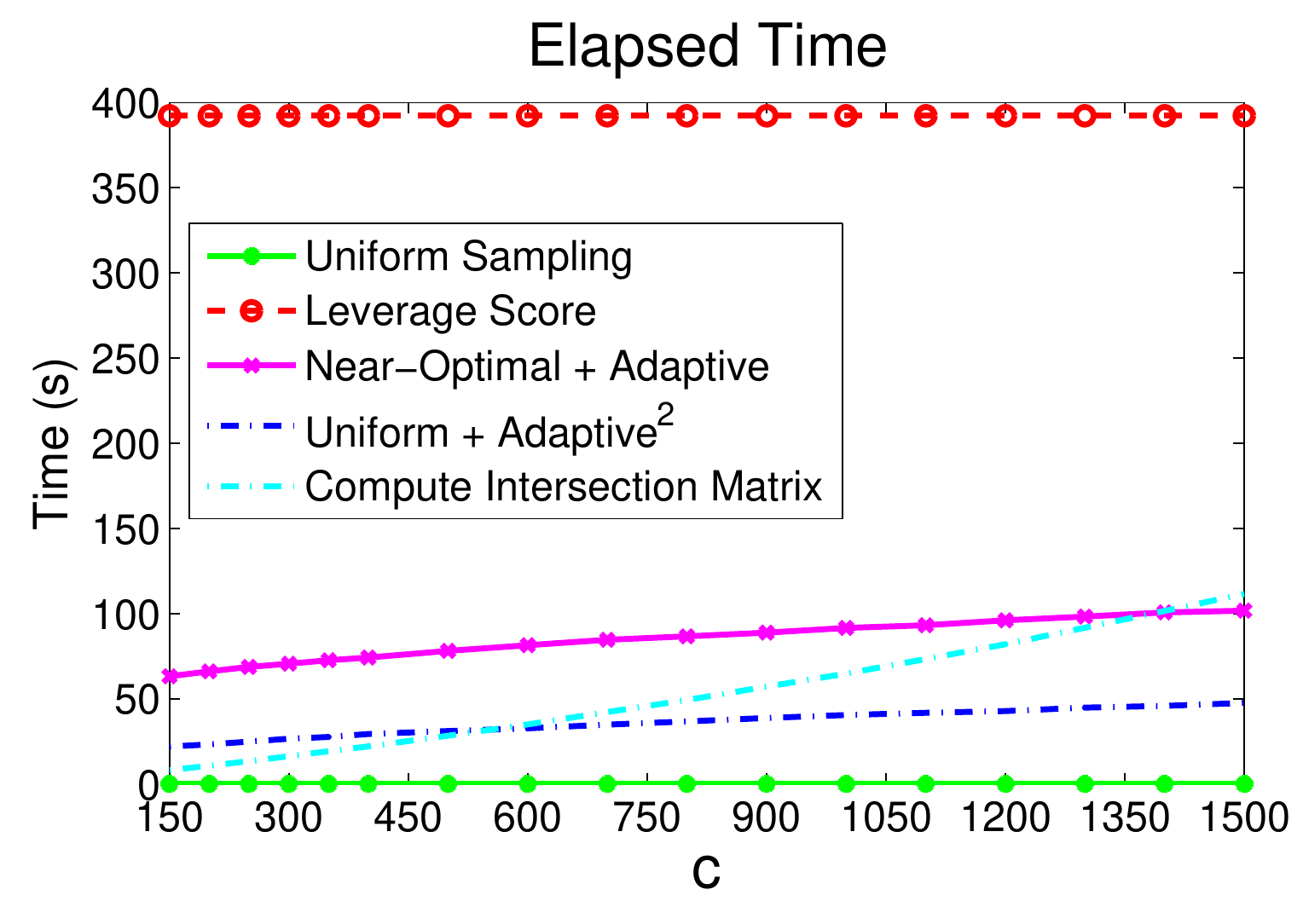}\\
\subfigure[$k = 10$]{\includegraphics[width=56mm,height=33mm]{./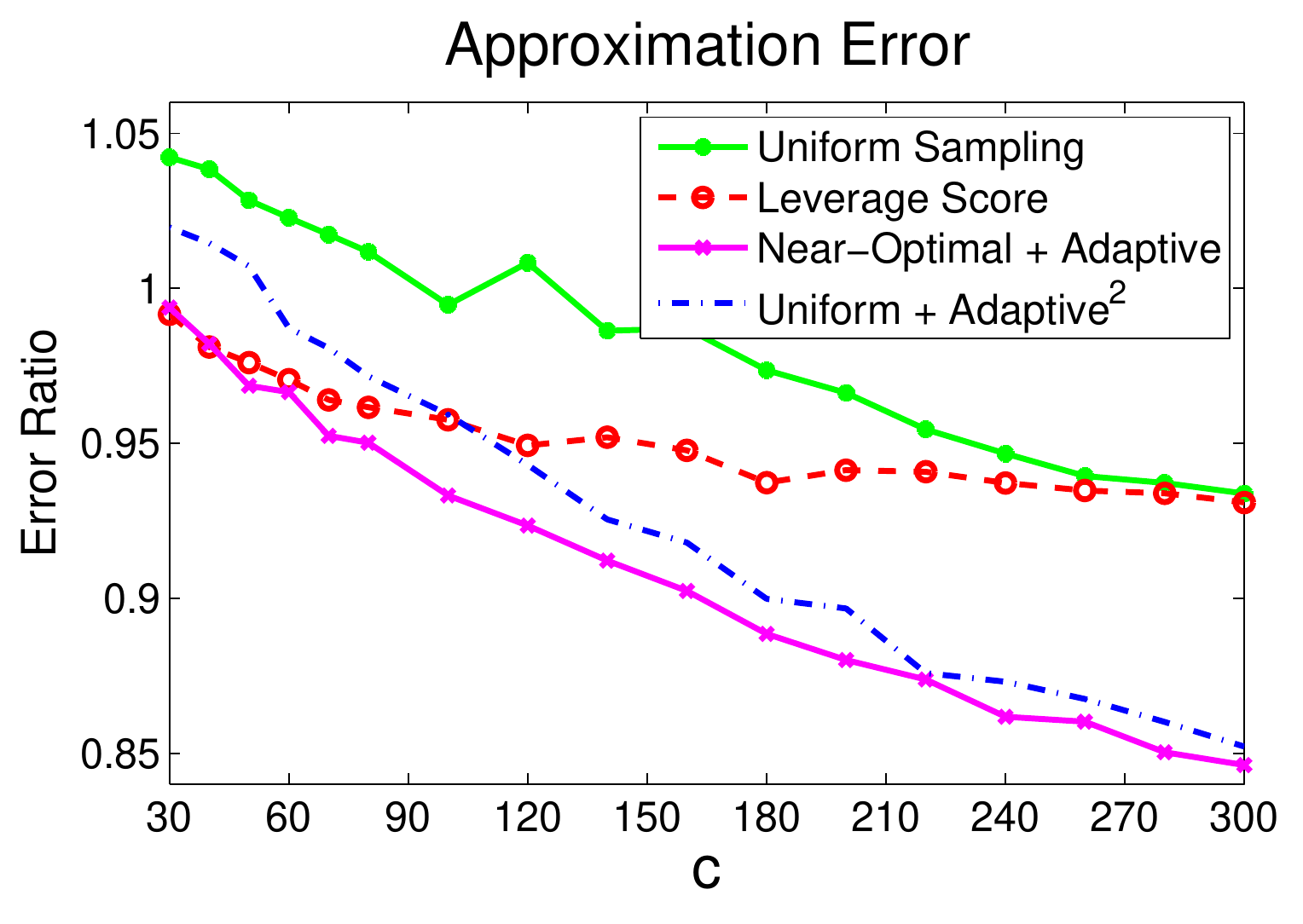}}
\subfigure[$k = 20$]{\includegraphics[width=56mm,height=33mm]{./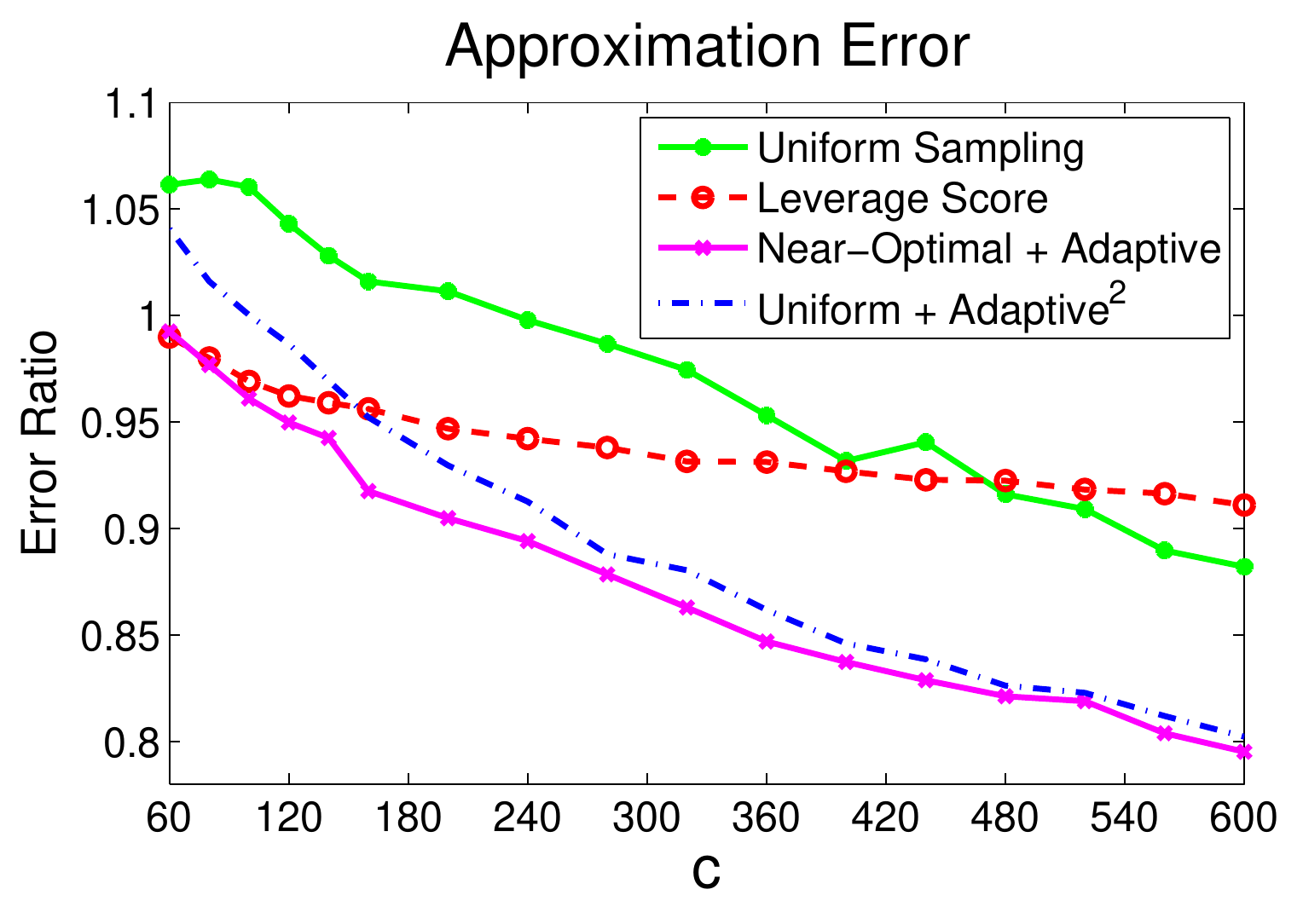}}
\subfigure[$k = 50$]{\includegraphics[width=56mm,height=33mm]{./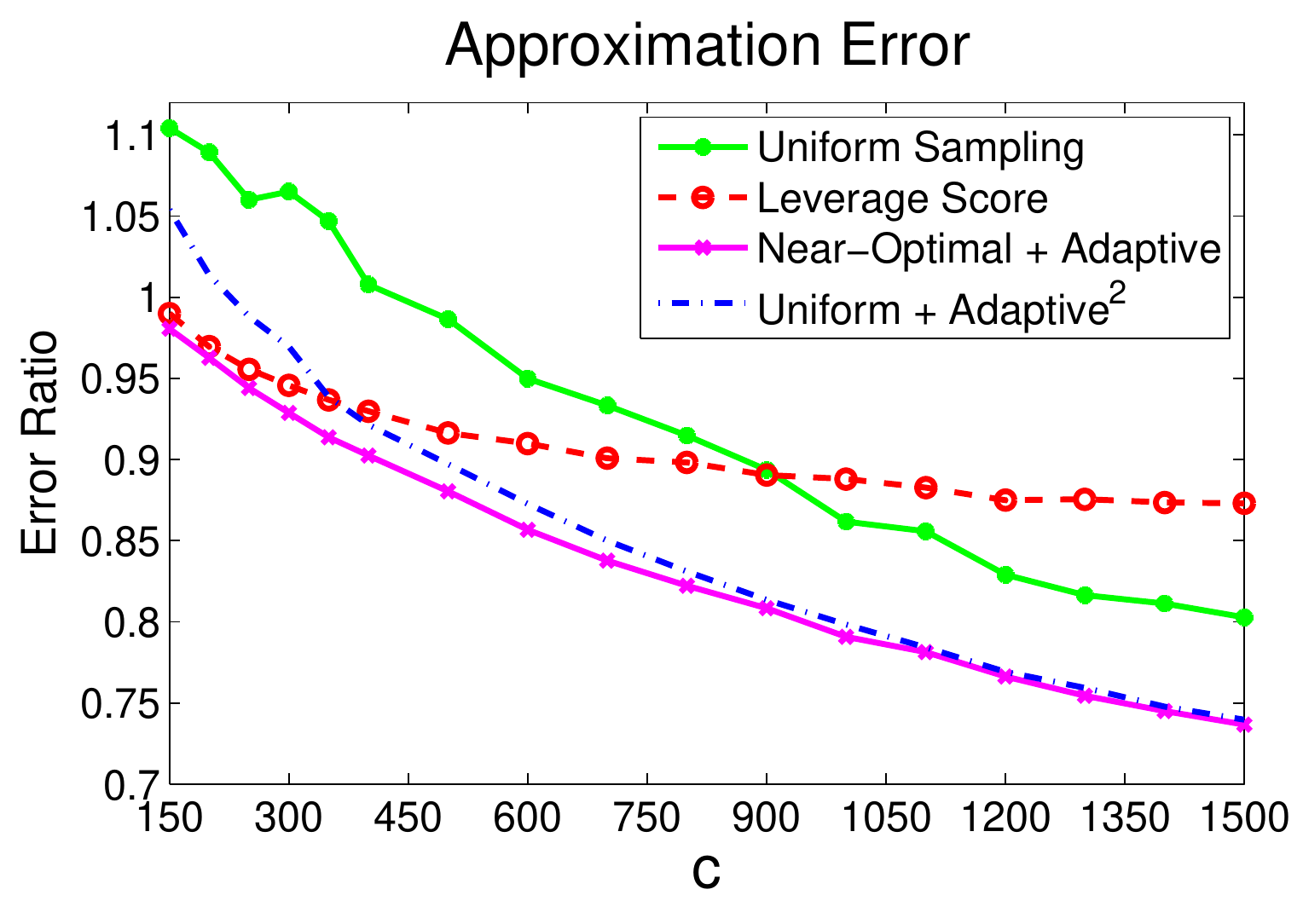}}
\end{center}
   \caption{Results on the RBF kernel of the Letters dataset.
            Here the matrix coherence of the kernel matrix is $\mu_{10} = 62.05$, $\mu_{20} = 34.87$, and $\mu_{50} = 19.16$.}
\label{fig:letter}
\end{figure*}

\begin{figure*}[!ht]
\subfigtopskip = 0pt
\begin{center}
\centering
\includegraphics[width=56mm,height=33mm]{./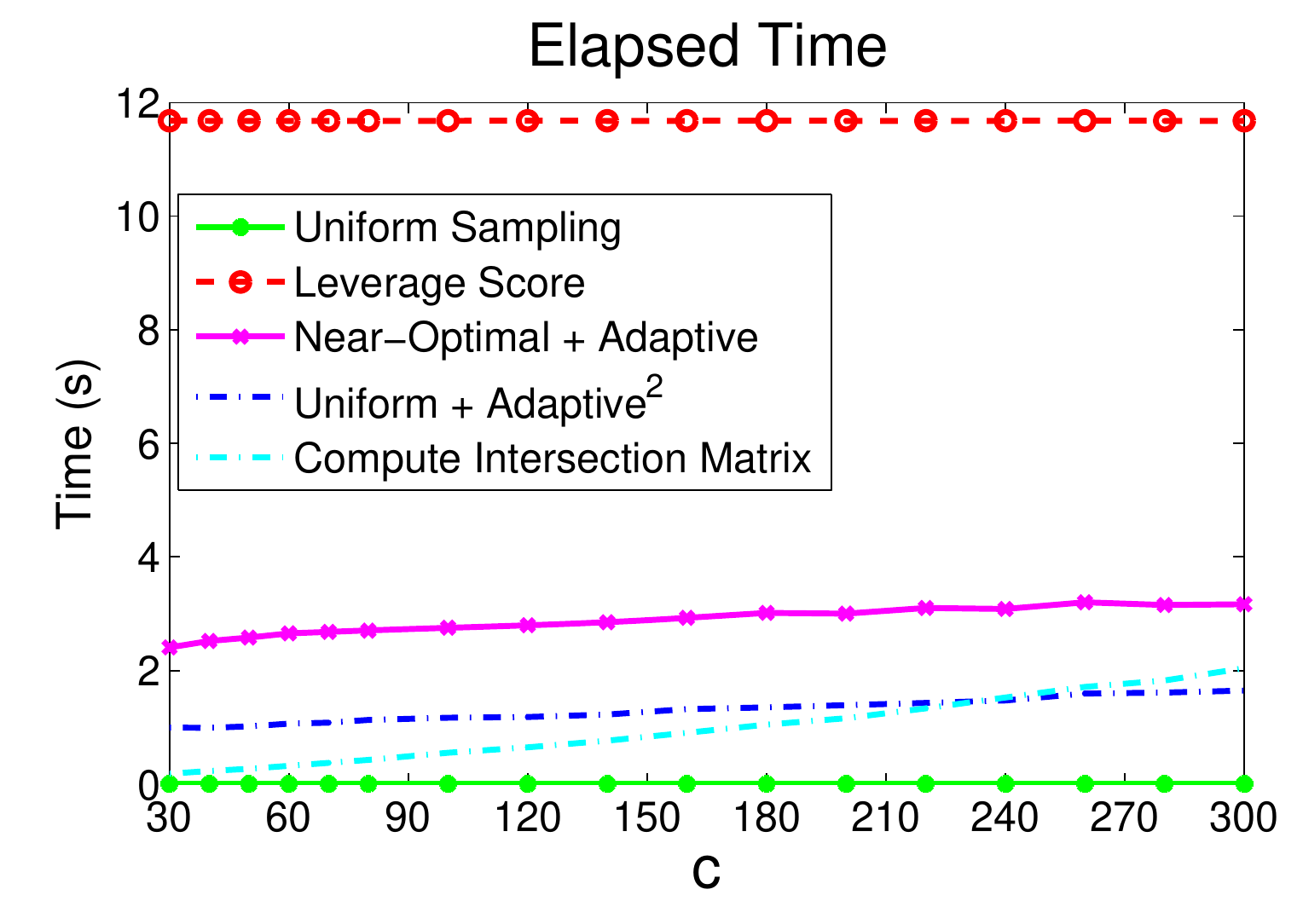}
\includegraphics[width=56mm,height=33mm]{./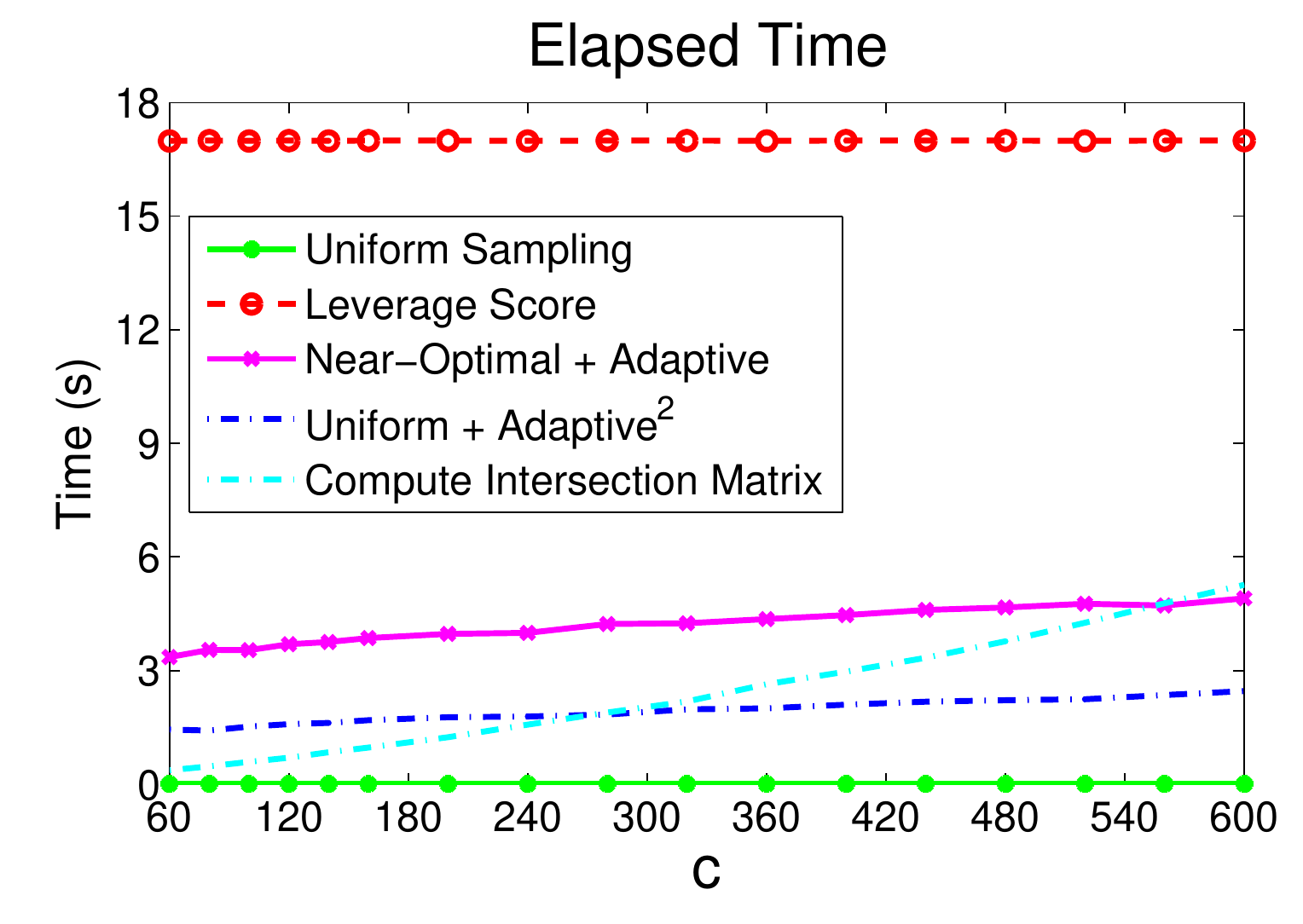}
\includegraphics[width=56mm,height=33mm]{./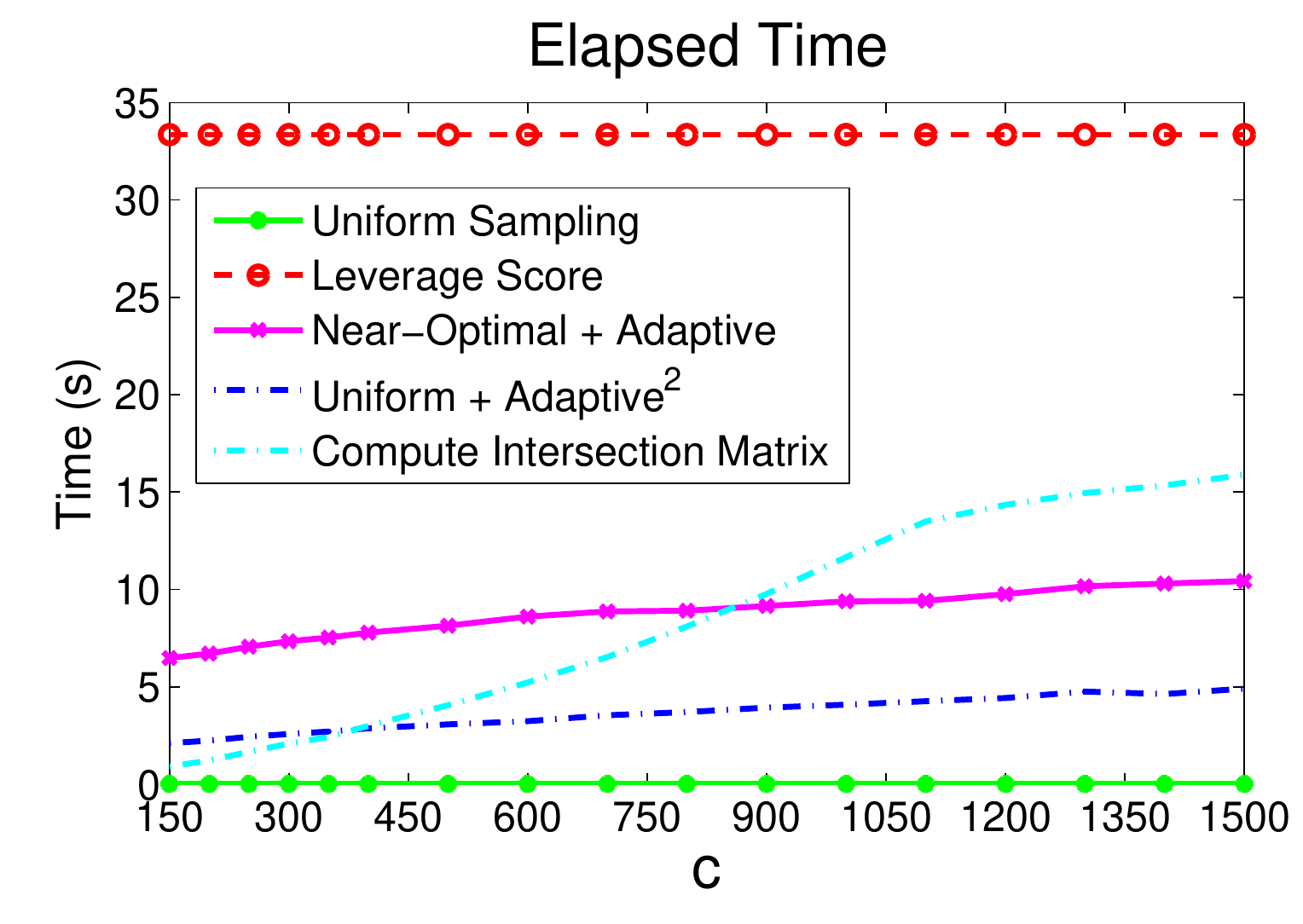}\\
\subfigure[$k = 10$]{\includegraphics[width=56mm,height=33mm]{./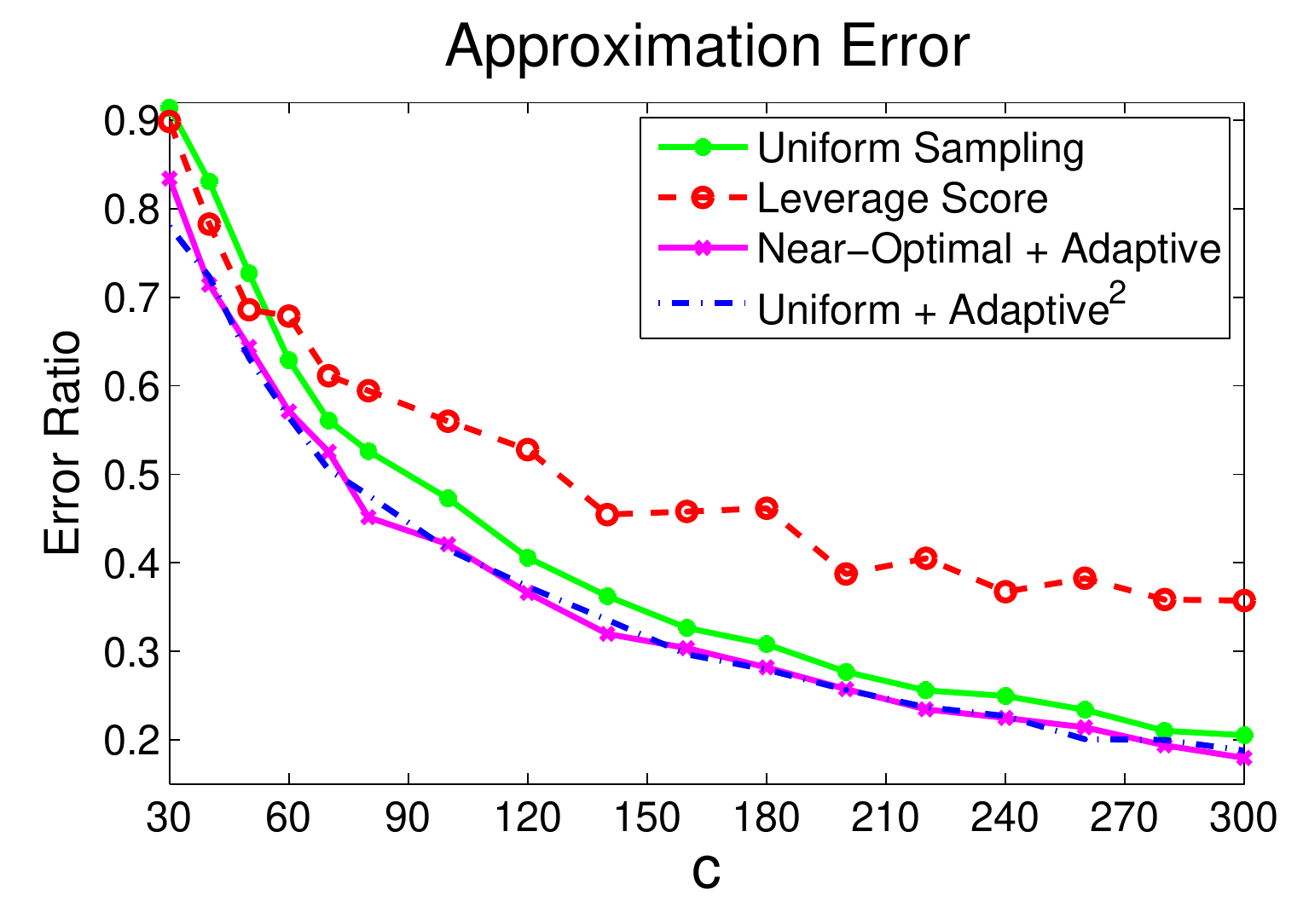}}
\subfigure[$k = 20$]{\includegraphics[width=56mm,height=33mm]{./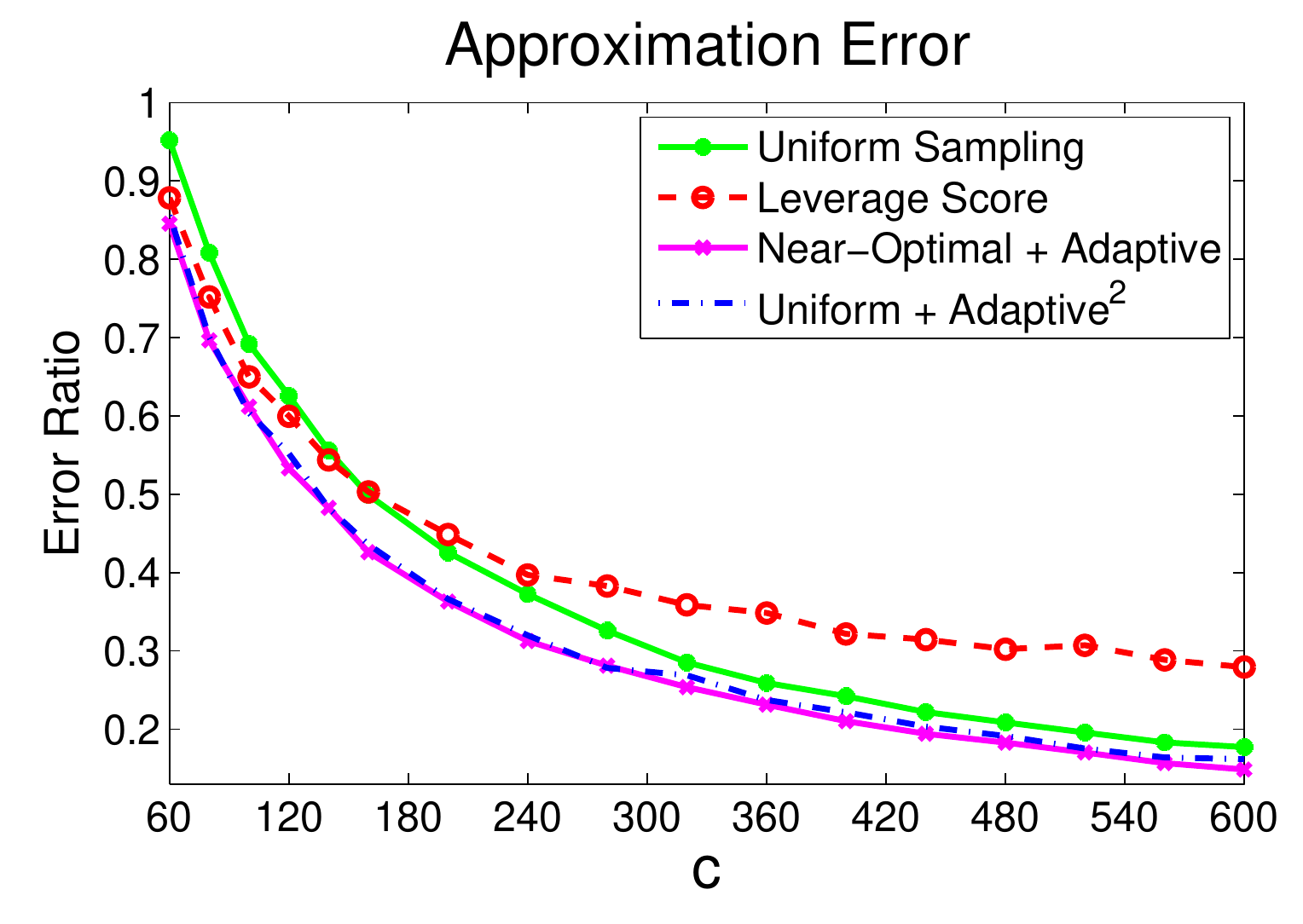}}
\subfigure[$k = 50$]{\includegraphics[width=56mm,height=33mm]{./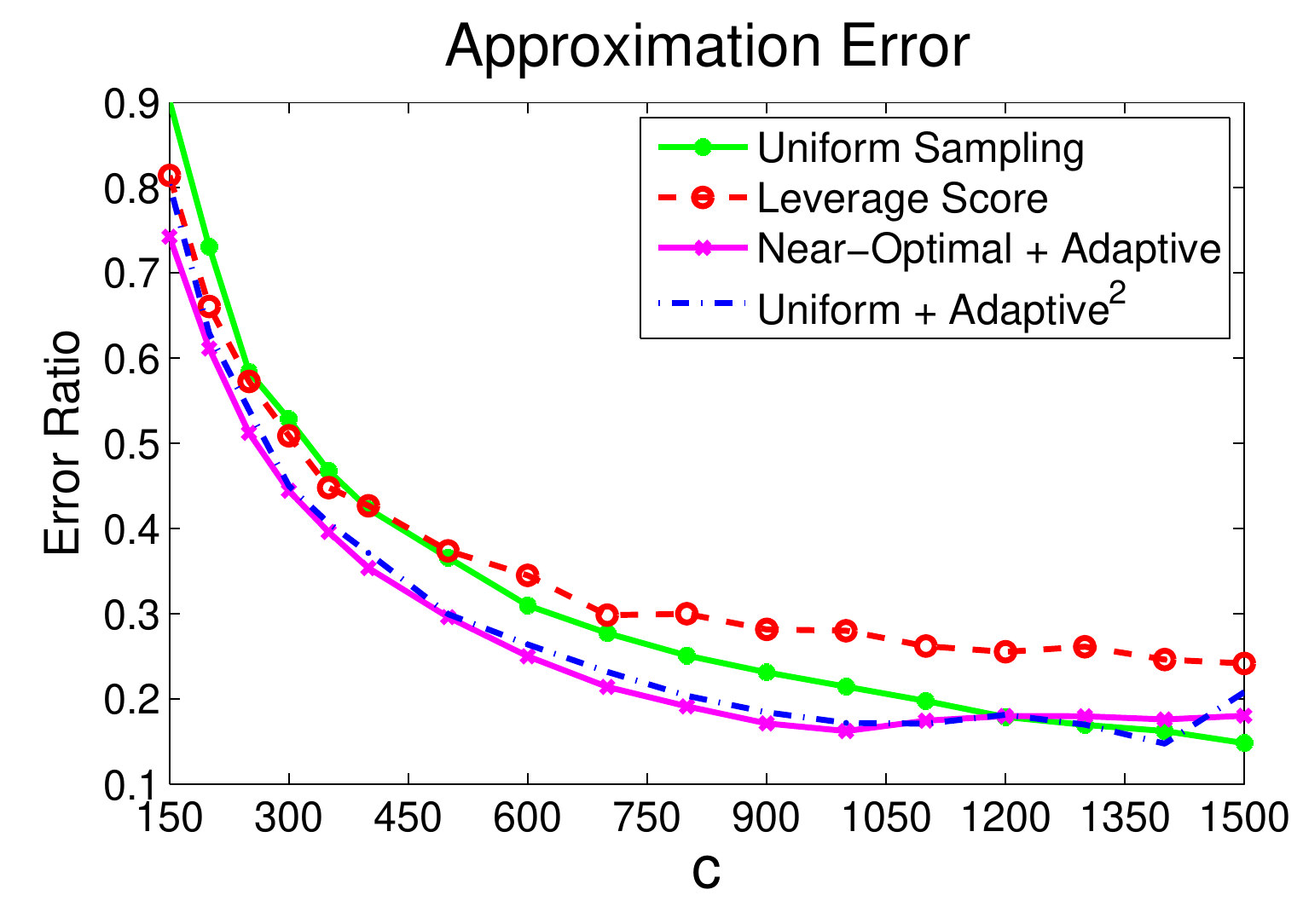}}
\end{center}
   \caption{Results on the RBF kernel of the Abalone dataset.
            Here the matrix coherence of the kernel matrix is $\mu_{10} = 3.28$, $\mu_{20} = 3.02$, and $\mu_{50} = 2.64$.}
\label{fig:abalone}
\end{figure*}

\begin{figure*}[!ht]
\subfigtopskip = 0pt
\begin{center}
\centering
\includegraphics[width=56mm,height=33mm]{./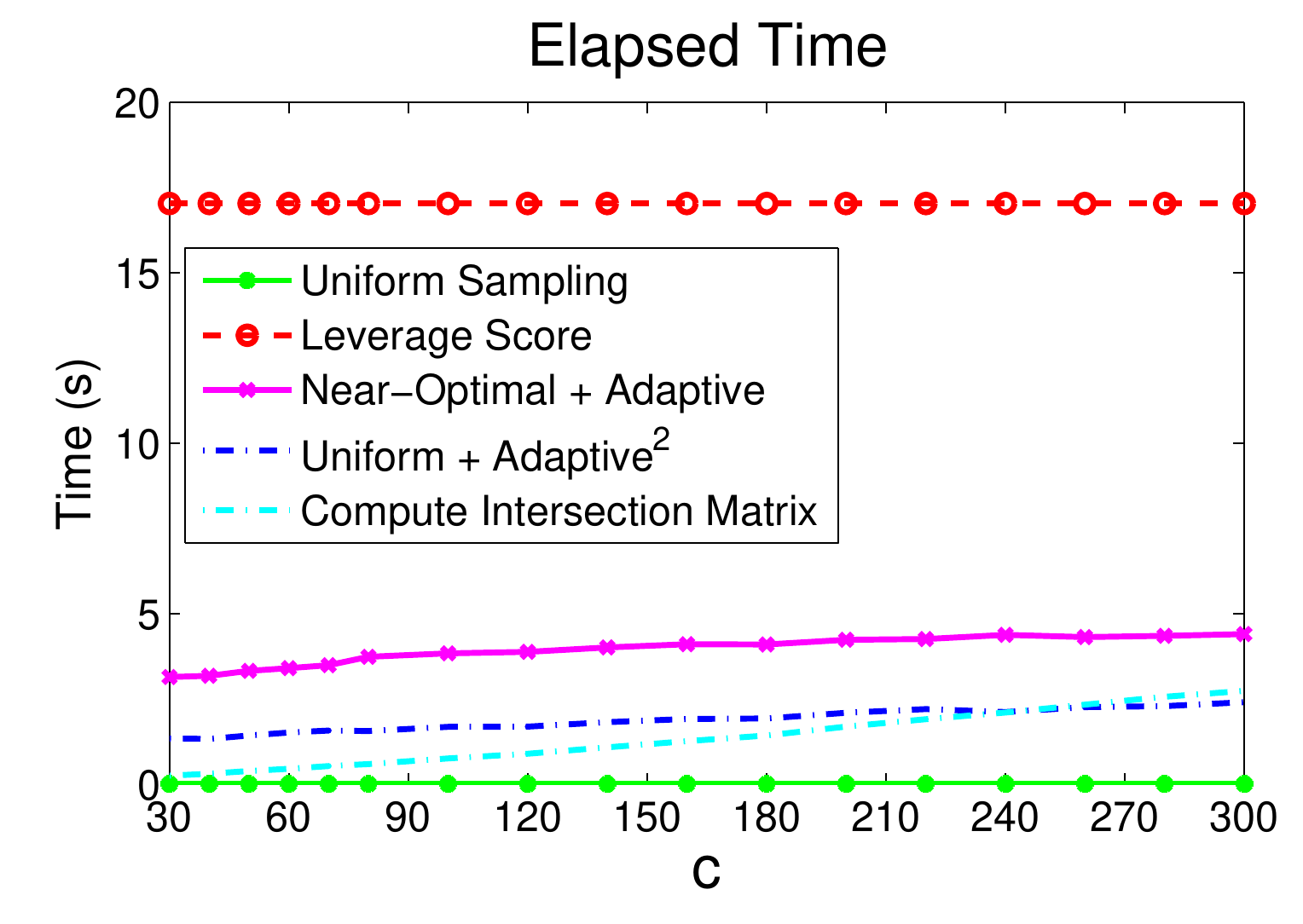}
\includegraphics[width=56mm,height=33mm]{./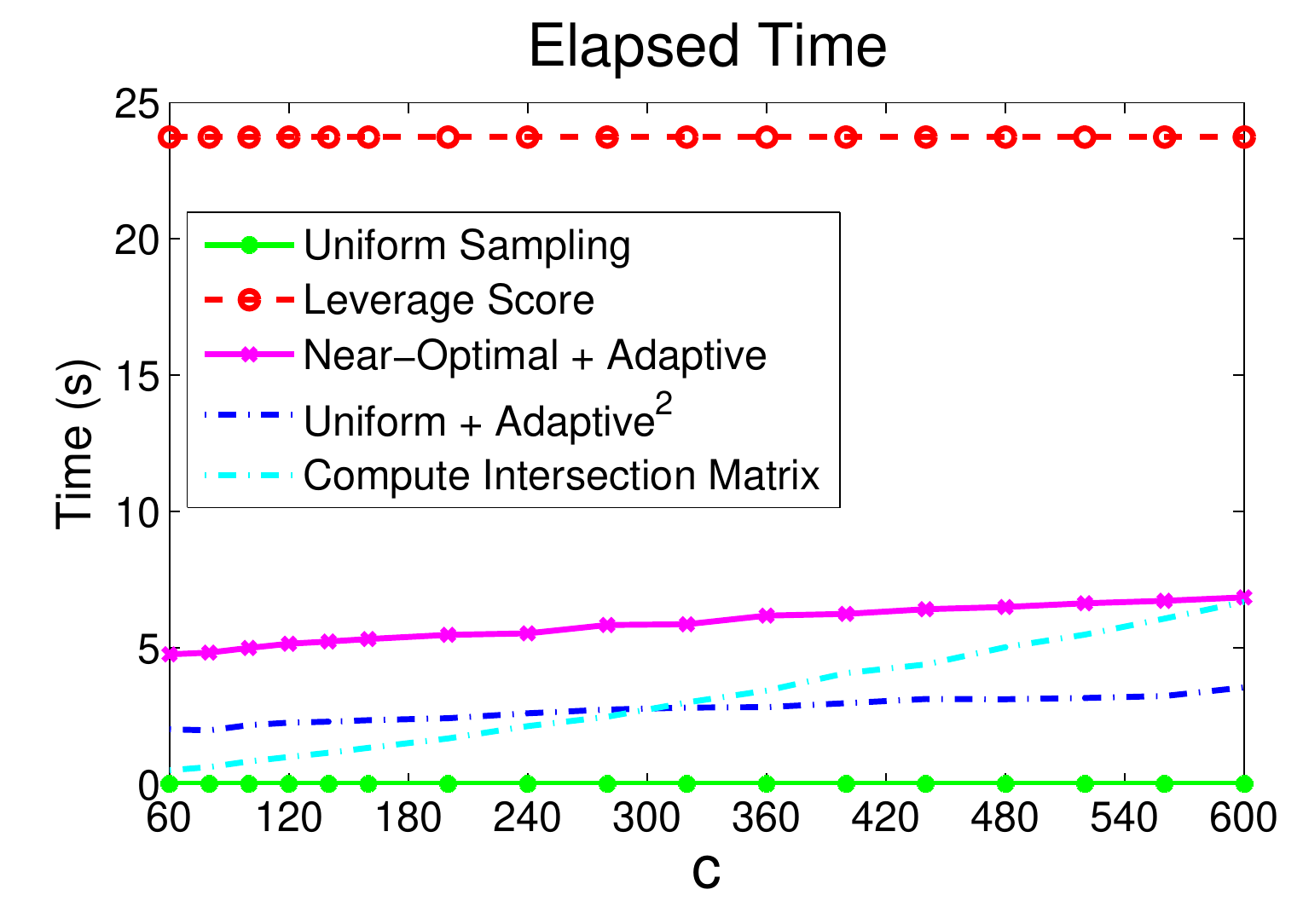}
\includegraphics[width=56mm,height=33mm]{./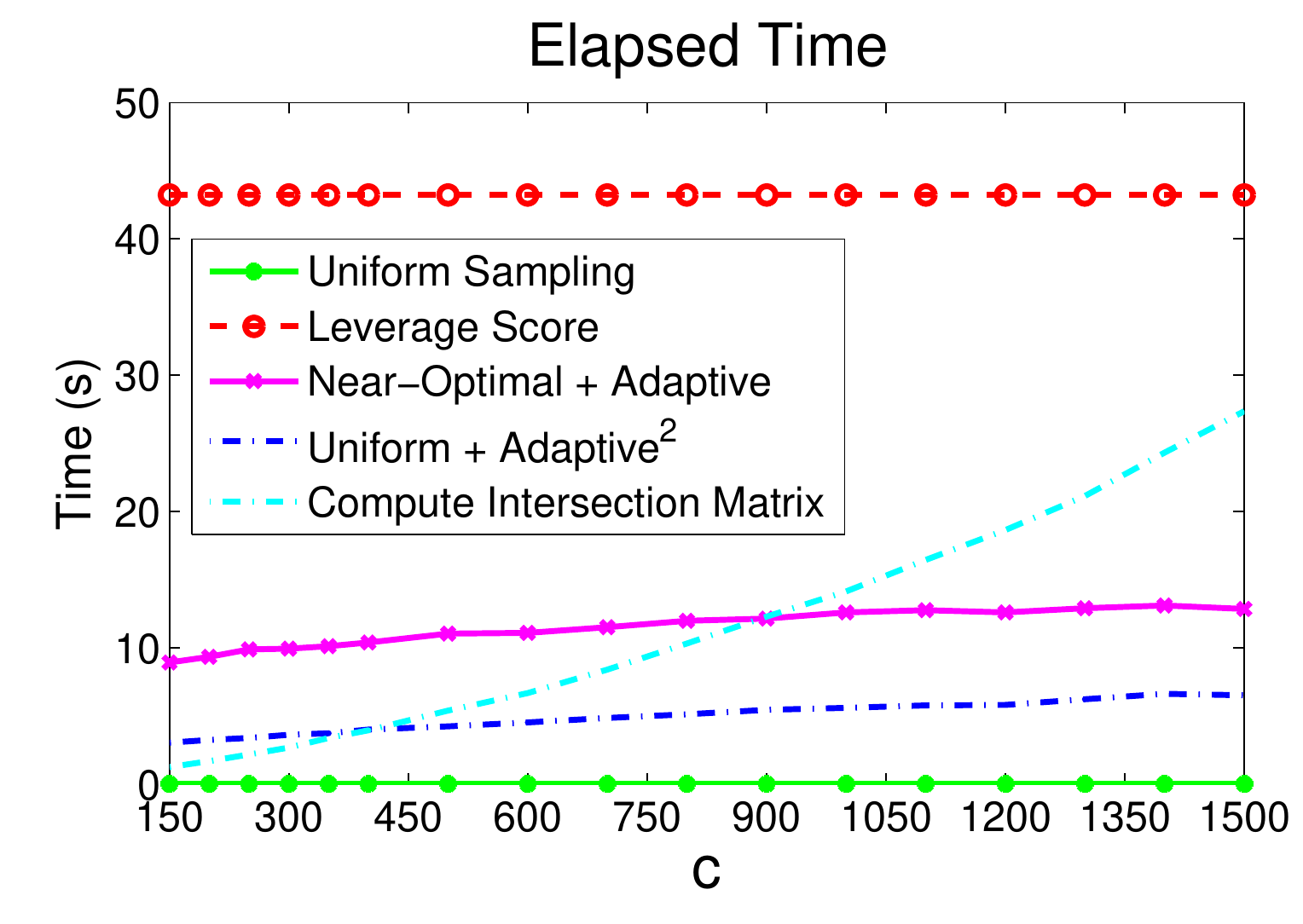}\\
\subfigure[$k = 10$]{\includegraphics[width=56mm,height=33mm]{./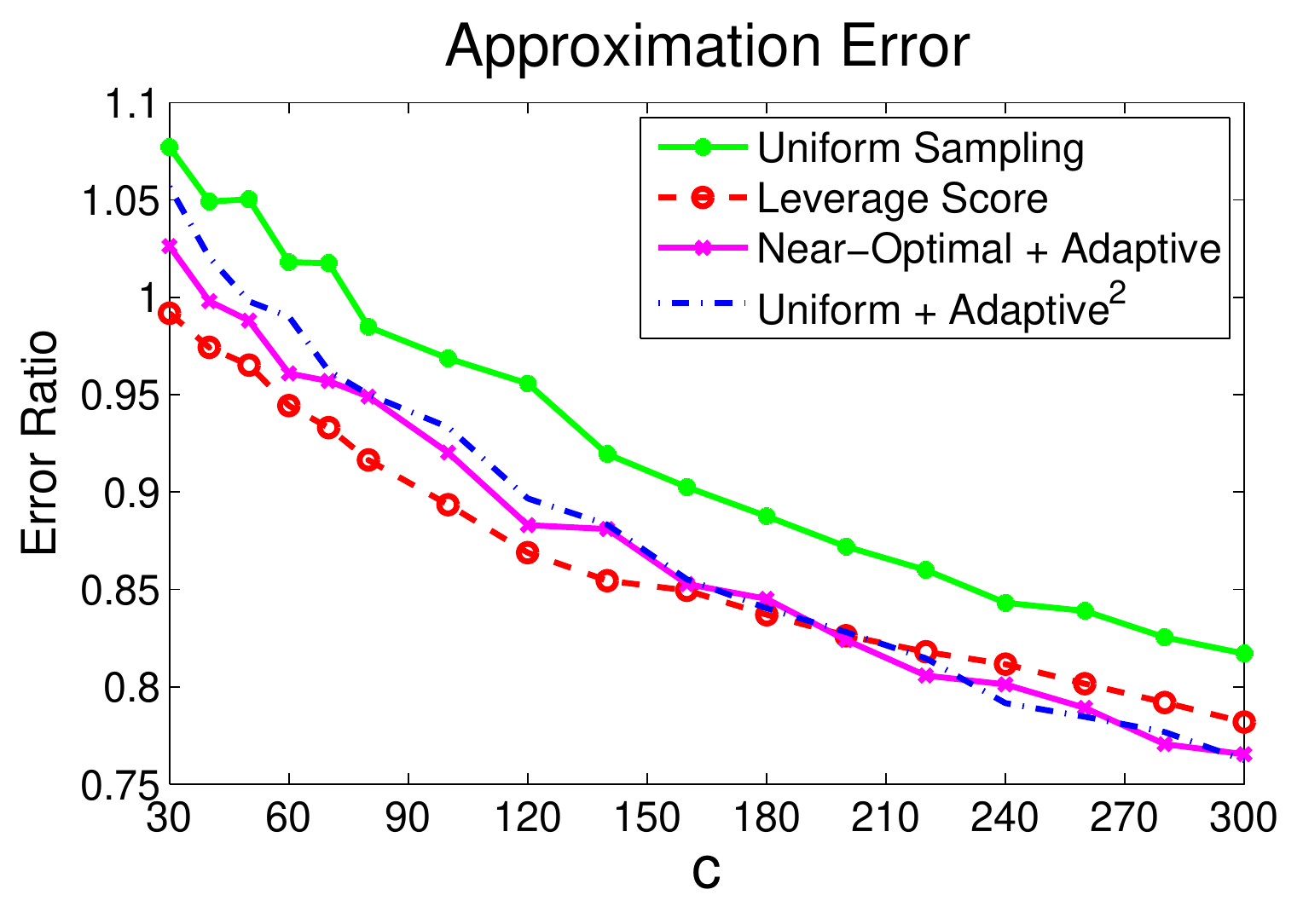}}
\subfigure[$k = 20$]{\includegraphics[width=56mm,height=33mm]{./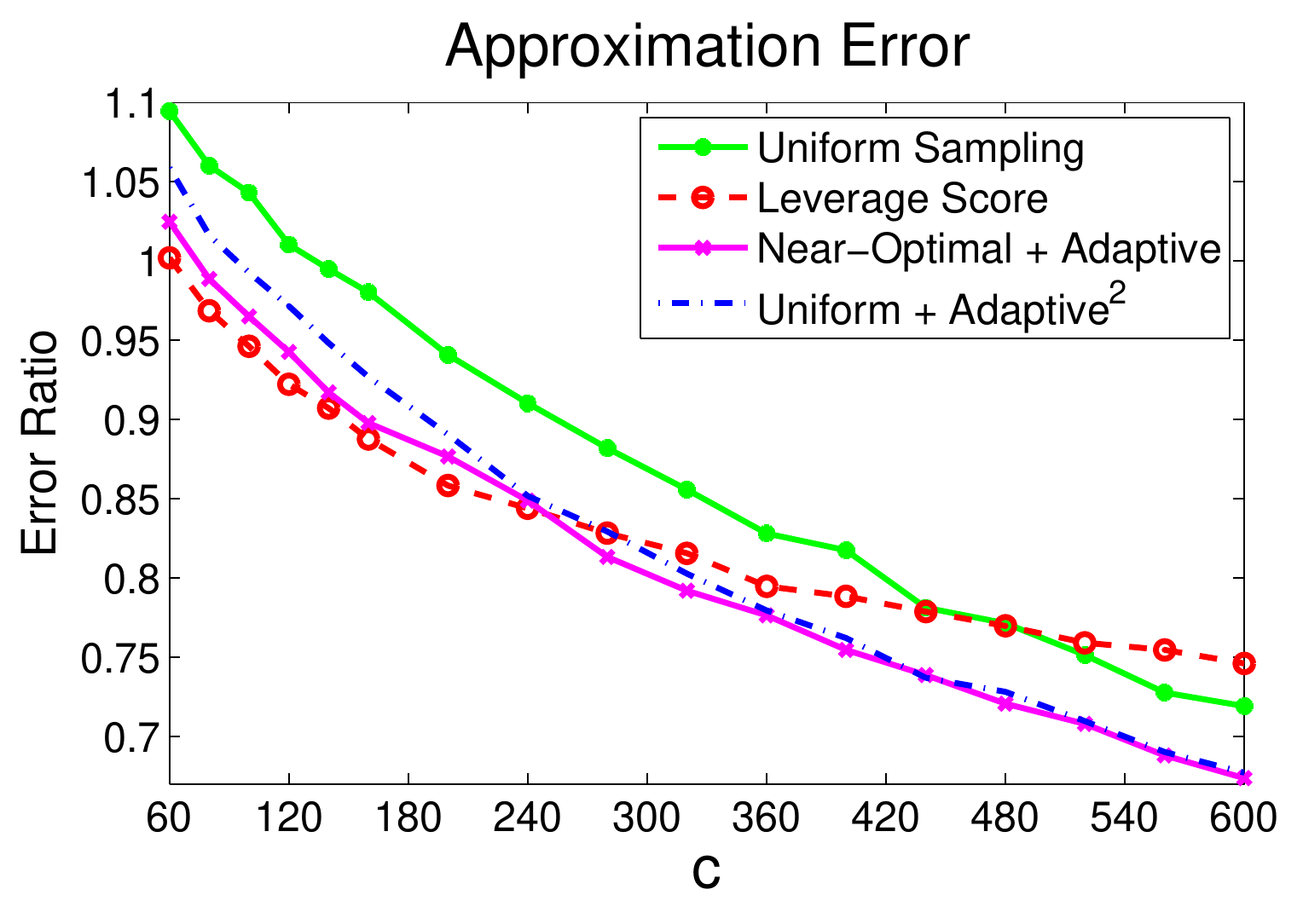}}
\subfigure[$k = 50$]{\includegraphics[width=56mm,height=33mm]{./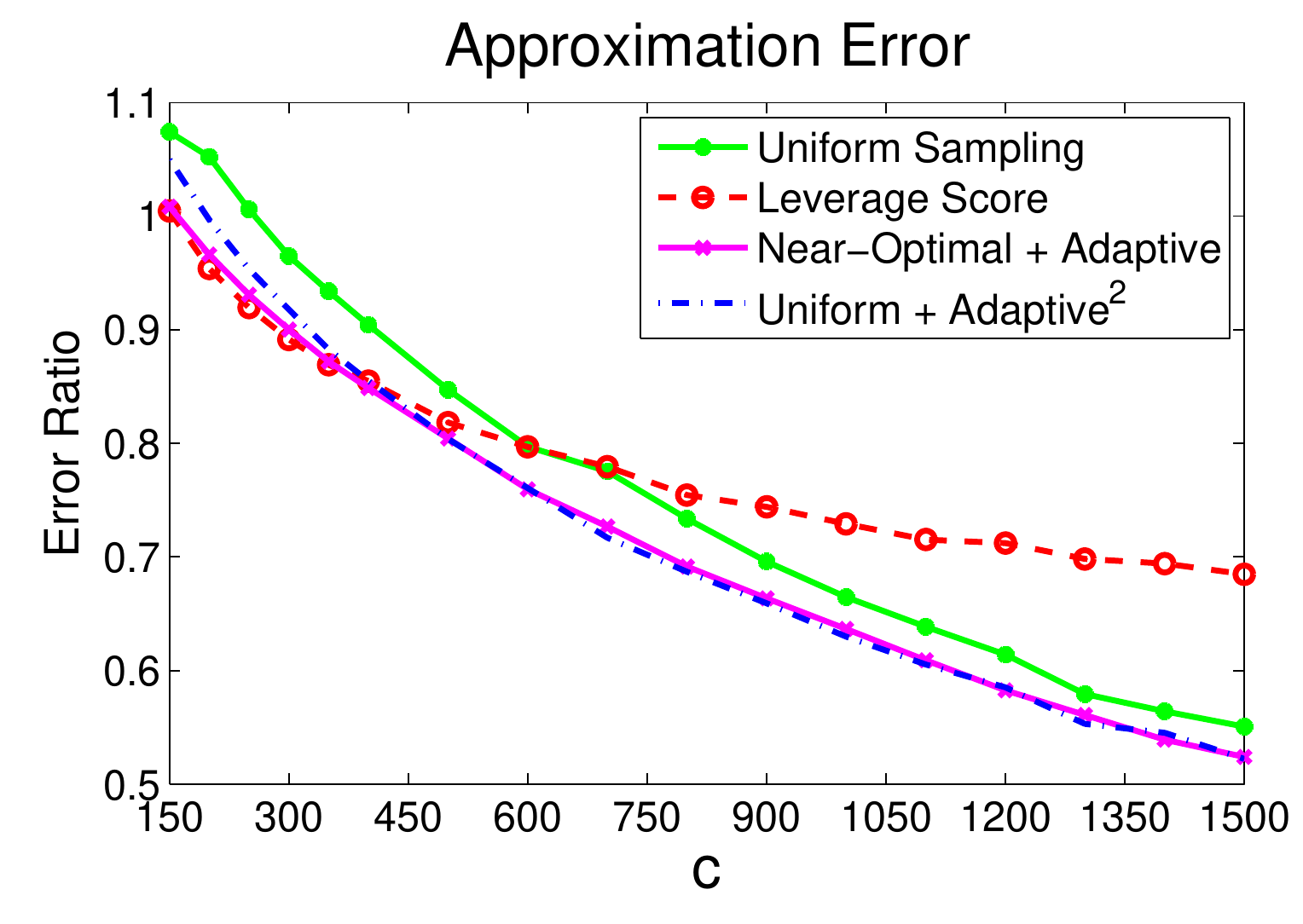}}
\end{center}
   \caption{Results on the RBF kernel of the Wine Quality dataset.
            Here the matrix coherence of the kernel matrix is $\mu_{10} = 16.17$, $\mu_{20} = 12.13$, and $\mu_{50} = 9.30$.}
\label{fig:wine}
\end{figure*}

\begin{table}[t]\setlength{\tabcolsep}{0.3pt}
\caption{A summary of the datasets for the \nystrom approximation.}
\label{tab:datasets_nystrom}
\begin{center}
\begin{footnotesize}
\begin{tabular}{c c c c c}
\hline
	{\bf Dataset}	&  {\bf ~\#Instance~}  &	{\bf ~\#Attribute~}   & {\bf ~Source}	\\
\hline
    Letters         &   $15,000$  &   $16$  &  \cite{michie1994machine} \\
    Abalone         &   $4,177$  &   $8$   &  \cite{uci2010} \\
    Wine Quality    &   $4,898$  &   $12$  &  \cite{cortez2009modeling} \\
\hline
\end{tabular}
\end{footnotesize}
\end{center}
\end{table}

\section{Experiments} \label{sec:experiments}

In this section we empirically evaluate our two algorithms proposed in Section~\ref{sec:efficient} and \ref{sec:intersection}.
In Section~\ref{sec:experiments:sampling} we compare the sampling algorithms for the modified \nystrom method in terms of approximation error and time expense.
In Section~\ref{sec:experiments:intersection} we illustrate the effect of our algorithm for computing the intersection matrix $\U=\C^\dag \A (\C^\dag)^T$.

We implement all of the compared algorithms in MATLAB and conduct
experiments on a workstation with Intel Xeon $2.40$GHz CPUs,
$24$GB RAM, and $64$bit Windows Server 2008 system.
To compare the running time, all the computations are carried out in a single thread in MATLAB.

\subsection{Comparisons among the Sampling Algorithms} \label{sec:experiments:sampling}

We mainly compare our uniform+adaptive$^2$ algorithm (Algorithm~\ref{alg:efficient}) with the near-optimal+adaptive algorithm~\citep{wang2013improving};
the two algorithms are the only provable algorithms for the modified \nystrom method.
We also employ the uniform sampling and the leverage-score based sampling \citep{drineas2008cur,gittens2013revisiting}
as baselines (they are widely used but not provable for the modified \nystrom method).
For all of the four algorithms, columns are sampled without replacement.

The experiment settings follows \cite{wang2013improving}.
We report the approximation error and running time of each algorithm on each dataset.
The approximation error is defined by
\begin{equation}  
\textrm{Approximation Error} \;=\; \frac{\| \A - \C \U \C^T\|_F}{\|\A - \A_k\|_F} , \nonumber
\end{equation}
where $k$ is a fixed target rank and $\U$ is the intersection matrix.

We test the algorithms on three datasets summarized in Table~\ref{tab:datasets_nystrom}.
For each dataset we generate an RBF kernel matrix $\A$ with
$a_{i j} =\exp \big( {-\frac{1}{2 \sigma^2}\|\x_i {-} \x_j \|_2^2} \big)$,
where $\x_i$ and $\x_j$ are data instances and $\sigma$ is the parameter defining the scale of the kernel.
We set $\sigma = 0.2$ in our experiments.
For each dataset we fix a target rank $k=10$, $20$, or $50$, and vary $c$ in a very large range.
We run each algorithm for $20$ times and report the the minimum approximation error of the $20$ repeats.
We also report the average elapsed time of column selection and the computation of the $c\times c$ intersection matrix, respectively.
Here we report the {\it average} elapsed time rather than the total time of the $20$ repeats because the $20$ repeats can be performed in parallel.
The results are depicted in Figures \ref{fig:letter}, \ref{fig:abalone}, and \ref{fig:wine}.

The empirical results in the figures show that our uniform+adaptive$^2$ algorithm achieves accuracy
comparable with the state-of-the-art algorithm---the near-optimal+adaptive algorithm of \cite{wang2013improving}.
Especially, when $c$ is large, those two algorithms have virtually the same accuracy,
which is in accordance with our analysis in the last paragraph of Section~\ref{sec:efficient}:
large $c$ implies small error term $\epsilon$, and the error bounds of the two algorithms coincide when $\epsilon$ is small.
We can also see that our uniform+adaptive$^2$ algorithm works nearly as good as the near-optimal+adaptive algorithm
when the matrix coherence $\mu_k$ is small (e.g. Figure~\ref{fig:abalone});
when the matrix coherence is large (e.g. Figure~\ref{fig:letter}),
the error of our algorithm is a little worse than the near-optimal+adaptive algorithm.
Furthermore, our uniform+adaptive$^2$ algorithm is much more accurate than uniform sampling and the leverage-score based sampling in most cases.

As for the running time, we can see that our algorithm performs column selection very efficiently
and the elapsed time grows slowly in $c$.
By comparison, our algorithm is much more efficient than the other two nonuniform sampling algorithms.

\vspace{-2mm}

\subsection{Effect of the Fast Computation of the Intersection Matrix} \label{sec:experiments:intersection}

\vspace{-2mm}

To illustrate the effect of our algorithm for computing the intersection matrix $\U=\C^\dag \A (\C^\dag)^T$,
we generate a kernel matrix of the Letters Dataset \citep{michie1994machine} which has $15,000$ instances and 16 attributes.
We first generate a dense RBF kernel matrix with scale parameter $\sigma=0.2$,
and then obtain a sparse symmetric matrix by by truncating the entries with small magnitude such that $1\%$ entries are nonzero.
We illustrate in Figure~\ref{fig:intersection_time} the speedup induced by our algorithm.
In both cases, our algorithm is faster than the naive approach, and the speedup is particularly significant when $\A$ is sparse.

\section{Theoretical Analysis for the Modified \nystrom Method} \label{sec:error_analysis}

\vspace{-2mm}

In Section~\ref{sec:exact} we show that the modified \nystrom approximation is exact when $\A$ is low-rank.
In Section~\ref{sec:lower} we provide a lower error bound of the modified \nystrom method.

\subsection{Theoretical Justifications} \label{sec:exact}

\vspace{-2mm}
\cite{kumar2009sampling,talwalkar2010matrix} showed that the standard \nystrom method is exact when $\rk(\W) = \rk(\A)$.
We show in Theorem~\ref{thm:connection} a similar result for the modified \nystrom approximations.

\begin{theorem}\label{thm:connection}
For a symmetric matrix $\A$ defined in (\ref{eq:definition_A}),
the following three statements are equivalent:
(i) $\rk(\W)=\rk(\A)$,
(ii) $\A = \C \W^\dag \C^T$,
(iii) $\A = \C \C^\dag \A (\C^\dag)^T \C^T$.
\end{theorem}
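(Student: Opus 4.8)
The plan is to prove the cycle of implications $(i)\Rightarrow(ii)\Rightarrow(iii)\Rightarrow(i)$, which suffices for the equivalence. The implication $(i)\Rightarrow(ii)$ is exactly the known exactness result for the standard \nystrom method cited from \cite{kumar2009sampling,talwalkar2010matrix}, so the main new work is $(ii)\Rightarrow(iii)$ and $(iii)\Rightarrow(i)$.

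For $(ii)\Rightarrow(iii)$, I would argue that whenever $\A$ admits \emph{any} exact representation of the form $\A = \C \M \C^T$ (here with $\M = \W^\dag$), the modified \nystrom intersection matrix $\C^\dag \A (\C^\dag)^T$ reproduces it. Indeed, substituting $\A = \C \M \C^T$ gives $\C \C^\dag \A (\C^\dag)^T \C^T = \C (\C^\dag \C) \M (\C^\dag \C)^T \C^T$, and since $\C^\dag \C$ is the (symmetric) orthogonal projector onto the row space of $\C^T$ while $\M$ is sandwiched by $\C$ on both sides, one has $\C (\C^\dag \C) = \C$ and $(\C^\dag \C)^T \C^T = \C^T$; hence the whole expression collapses back to $\C \M \C^T = \A$. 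This is a short computation using only $\C \C^\dag \C = \C$ and the symmetry of $\C^\dag \C$.

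For $(iii)\Rightarrow(i)$, note first that $(iii)$ forces $\A = \C \B$ for some matrix $\B$, so $\spann(\A) \subseteq \spann(\C)$; combined with the fact that $\C$ consists of a subset of the columns of $\A$ (so $\spann(\C) \subseteq \spann(\A)$), we get $\spann(\C) = \spann(\A)$, hence $\rk(\C) = \rk(\A)$. It then remains to show $\rk(\W) = \rk(\C)$. Writing $\C = \begin{bmatrix}\W \\ \A_{21}\end{bmatrix}$, I would show that the symmetry of $\A$ in (\ref{eq:definition_A}) together with $\spann(\C) = \spann(\A)$ implies $\spann(\W^T) = \spann(\C^T)$: the row space of $\C$ equals the row space of $[\W,\ \A_{21}^T]$, which is the first $c$ rows of $\A$; since $\rk$ of the first $c$ rows equals $\rk(\A)$ equals $\rk$ of the first $c$ columns $= \rk(\C)$, and the column space of the first $c$ rows is spanned by $\W$ (the top block determines the rest via the same column-dependence relations that hold in $\A$), we conclude $\rk(\W) = \rk(\C) = \rk(\A)$.

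The step I expect to be the main obstacle is the last one, $(iii)\Rightarrow(i)$: the equivalence of $\rk(\W)$ and $\rk(\C)$ is not purely formal and genuinely uses the symmetric block structure of $\A$ in (\ref{eq:definition_A}). The cleanest route is probably to observe that $\rk(\W)=\rk(\C)$ iff the rows of $\A_{21}$ lie in the row space of $\W$, and to derive this from the two containments $\spann(\C)=\spann(\A)$ and $\spann(\C^T)\subseteq\spann(\A)$ (the latter because $\C^T=[\W,\A_{21}^T]$ is a submatrix of $\A$ and $\A$ is symmetric). The other two implications are routine once the right projector identities are invoked.
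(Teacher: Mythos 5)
Your proposal is correct, but its logical route differs from the paper's. The paper does not prove the cycle $(i)\Rightarrow(ii)\Rightarrow(iii)\Rightarrow(i)$; it proves $(i)\Rightarrow(ii)$ and $(i)\Rightarrow(iii)$ directly and then $(ii)\Rightarrow(i)$, $(iii)\Rightarrow(i)$ by rank arguments. Its direct $(i)\Rightarrow(iii)$ step is the heaviest part: it writes $\C^\dag=(\C^T\C)^\dag\C^T$, expands everything through the factorization $\A=\bigl[\begin{smallmatrix}\I\\ \X\end{smallmatrix}\bigr]\W[\I\ \X^T]$, and needs an auxiliary lemma ($\X^T\V\X(\X^T\V\X)^\dag\X^T=\X^T$ for positive definite $\V$). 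Your $(ii)\Rightarrow(iii)$ step replaces all of that with the three-line identity $\C\C^\dag(\C\M\C^T)(\C^\dag)^T\C^T=\C\M\C^T$, which uses only $\C\C^\dag\C=\C$; this is in the same spirit as the paper's remark that $\C^\dag\A(\C^\dag)^T$ minimizes $\|\A-\C\U\C^T\|_F$, and it is genuinely simpler. Your $(iii)\Rightarrow(i)$ argument coincides with the paper's: from $\rk(\A)\le\rk(\C)$ one gets $[\A_{21}^T;\A_{22}]=\C\X^T$, the top block yields $\A_{21}^T=\W\X^T$, hence $\C=\bigl[\begin{smallmatrix}\I\\ \X\end{smallmatrix}\bigr]\W$ and $\rk(\C)\le\rk(\W)$, which with $\rk(\A)\ge\rk(\C)\ge\rk(\W)$ closes the loop — this explicit $\A_{21}=\X\W$ computation is the clean way to make your somewhat loosely phrased ``row space of $\C$ equals the row space of $[\W,\A_{21}^T]$'' step rigorous (as written there is a type mismatch between subspaces of $\RB^c$ and $\RB^m$, but the intended column-dependence argument restricted to the first $c$ rows is exactly the above). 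The only other difference is that you outsource $(i)\Rightarrow(ii)$ to the cited references, whereas the paper re-derives it in a few lines from the same factorization and $\W\W^\dag\W=\W$; including that short derivation would make your proof self-contained at essentially no cost.
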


Theorem~\ref{thm:connection} shows that the standard and modified \nystrom methods are equivalent when $\rk(\W) = \rk(\A)$.
However, it holds in general that $\rk(\A) \gg c \geq \rk(\W)$, where the two models are not equivalent.

Furthermore,  $\U^{\textrm{mod}} = \C^\dag \A (\C^{\dag})^T$ is the minimizer of the following minimization problem
\[
\min_{\U} \; \|\A - \C \U \C^T\|_F ,
\]
so we have that
\[
\big\|\A - \C \big(\C^\dag \A (\C^{\dag})^T \big) \C \big\|_F \; \leq \; \big\| \A - \C \W^\dag \C \big\|_F.
\]
This shows that
in general the modified \nystrom method is more accurate than the standard \nystrom method.

\begin{figure}[t]
\begin{center}
\centering
\subfigure[{Dense RBF kernel matrix.}]{\includegraphics[width=77mm,height=38mm]{./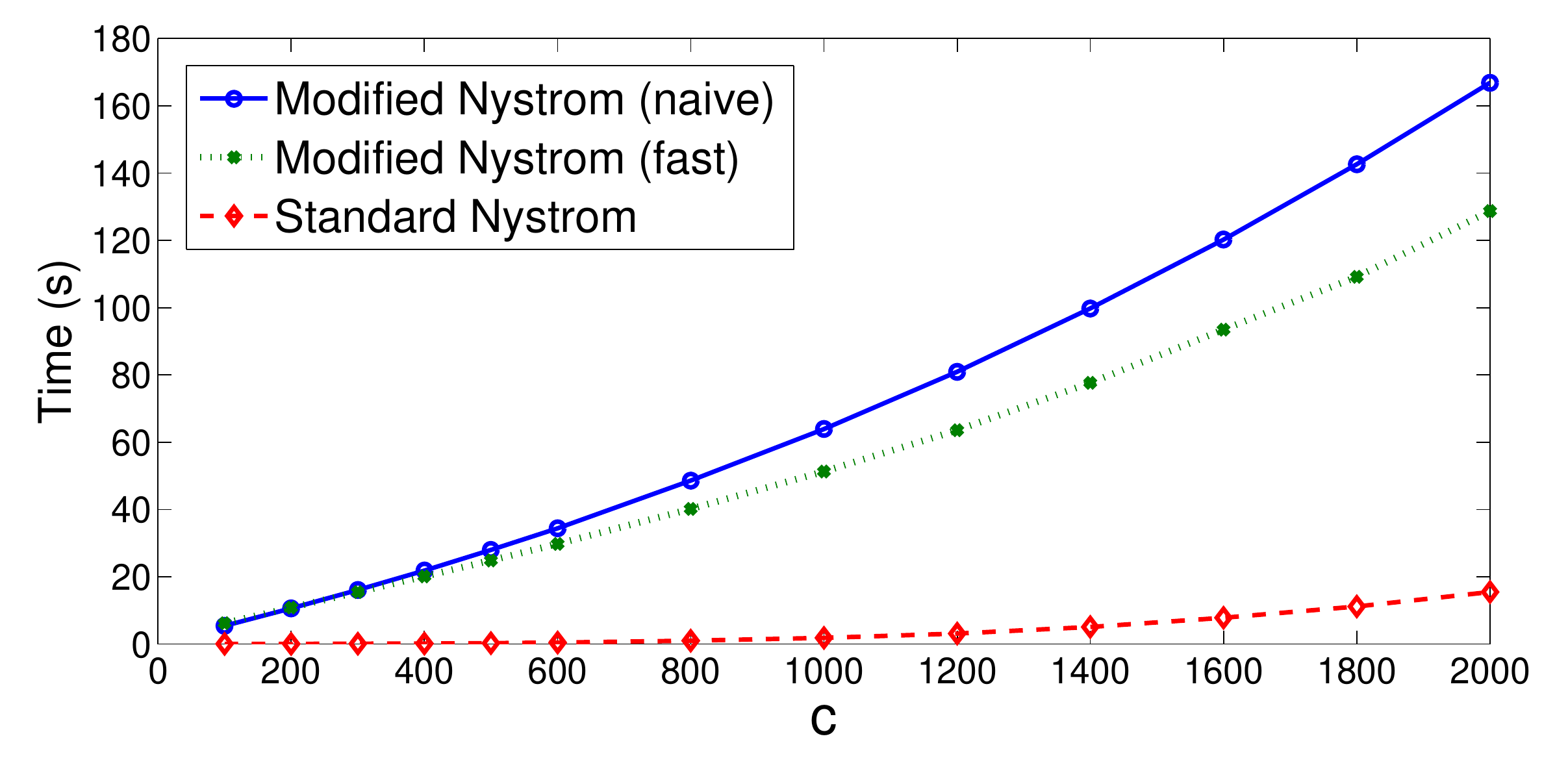}}
\subfigure[{Sparse RBF kernel matrix with $1\%$ nonzero entries.}]{\includegraphics[width=77mm,height=38mm]{./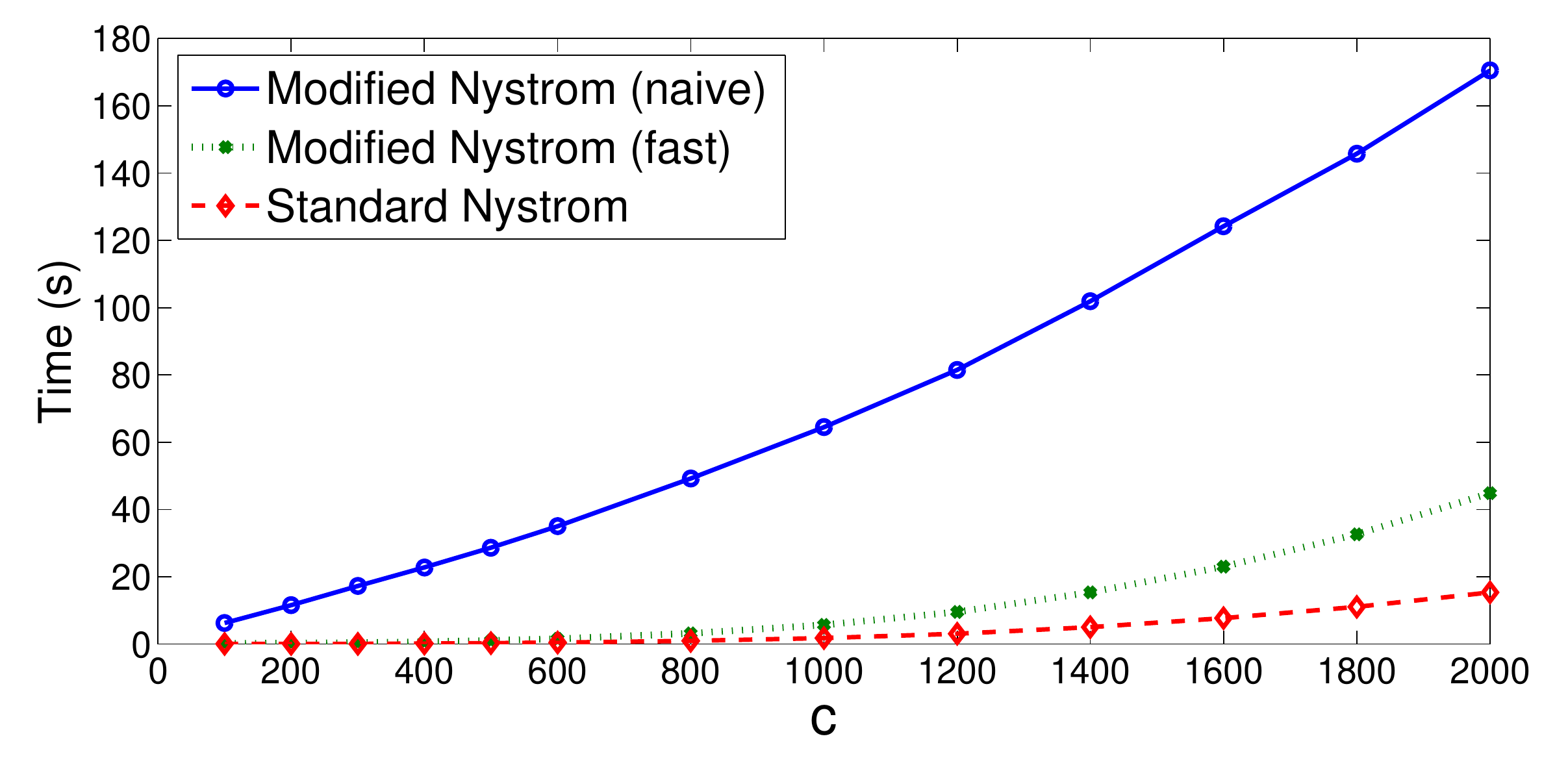}}
\end{center}
   \caption{Effect of our fast computation of the intersection matrix.
            The two matrices are both of size $15,000 \times 15,000$,
            and we sample $c$ columns uniformly to compute the intersection matrix $\U = \C^\dag \A (\C^\dag)^T$ (the modified Nystr\"om)
            and $\U= \W^\dag$ (the standard Nystr\"om).
            The time for computing $\U$ is plotted in the figures.}
\label{fig:intersection_time}
\end{figure}

\vspace{-2mm}

\subsection{Lower Error Bound of the Modified \nystrom Method} \label{sec:lower}

We establish in Theorem~\ref{thm:lower} a lower error bound of the modified \nystrom method.
Theorem~\ref{thm:lower} shows that whatever a column sampling algorithm is used to construct the modified \nystrom approximation,
at least $c \geq 2 k \epsilon^{-1}$ columns must be chosen to attain the $1+\epsilon$ bound.

\begin{theorem}[Lower Error Bound of the Modified \nystrom Method] \label{thm:lower}
Whatever a column sampling algorithm is used,
there exists an $m\times m$ SPSD matrix $\A$ such that the error incurred by the modified \nystrom method obeys:
\begin{eqnarray}
\big\| \A - \C \U \C^T \big\|_F^2 & \geq & \frac{m-c}{m-k} \Big( 1+ \frac{2 k}{c} \Big) \|\A - \A_k\|_F^2 .\nonumber
\end{eqnarray}
Here $k$ is an arbitrary target rank, $c$ is the number of selected columns,
and $\U = \C^\dag \A (\C^\dag)^T$.
\end{theorem}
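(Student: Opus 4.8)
The plan is to recast the modified \nystrom error as a two‑sided projection error and then exhibit an explicit, highly symmetric block‑diagonal matrix for which \emph{every} choice of $c$ columns is provably bad.

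\textbf{Reformulation.} First I would note that $\C\C^\dag$ is the orthogonal projector onto $\spann(\C)$, so $\C\U\C^T=(\C\C^\dag)\A(\C\C^\dag)$. Writing $\PM:=\C\C^\dag$ and using $\PM=\PM^T=\PM^2$ together with $\A=\A^T$, the residual splits orthogonally in the Frobenius inner product:
\[
\bigl\|\A-\PM\A\PM\bigr\|_F^2=\bigl\|(\I-\PM)\A(\I-\PM)\bigr\|_F^2+2\bigl\|(\I-\PM)\A\PM\bigr\|_F^2 .
\]
Thus I only have to lower bound this right‑hand side on a cleverly chosen $\A$.

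\textbf{The hard instance.} Assume $k\mid m$ and set $s=m/k$. Let $\A$ be block‑diagonal with $k$ identical blocks $\B=\delta\,\I_s+\tfrac{\Lambda-\delta}{s}\,\1_s\1_s^T$, where $\Lambda>\delta>0$. Each block is positive definite with eigenvalue $\Lambda$ once and $\delta$ with multiplicity $s-1$, so $\A$ is SPSD with $\rk(\A)=m$ and $\|\A-\A_k\|_F^2=(m-k)\delta^2$. Fix any $c$ columns, with $c_i$ of them in block $i$ ($\sum_i c_i=c$). If some $c_i=0$, that block alone contributes $\|\B\|_F^2=\Lambda^2+(s-1)\delta^2$ to the error, larger than the claimed bound once $\Lambda$ is large, so I may assume $c_i\ge1$. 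Since $\A$ and the selected columns are block‑diagonal, $\PM$ is block‑diagonal with blocks $\PM_i$ (orthogonal projectors onto the span of the chosen $c_i$ columns of $\B$), hence $\|\A-\PM\A\PM\|_F^2=\sum_i E_i$ with $E_i:=\|\B-\PM_i\B\PM_i\|_F^2$.

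\textbf{One block and the limit.} Substituting $\B$ into the split above and setting $q_i:=\|(\I-\PM_i)\1_s\|_2^2$, $b:=\tfrac{\Lambda-\delta}{s}$, I get the exact identity $E_i=\delta^2(s-c_i)+2\delta b\,q_i+b^2q_i(2s-q_i)$. The columns of $\B$ in block $i$ are $\delta\e_j+b\1_s$; their span is the $(c_i{-}1)$‑dimensional space of zero‑sum vectors on the selected coordinates together with the single ``heavy'' direction $\sum_j(\delta\e_j+b\1_s)$, and projecting $\1_s$ onto it and simplifying yields the closed form $q_i=\dfrac{s\,\delta^2(s-c_i)}{\delta^2 s+c_i(\Lambda^2-\delta^2)}$. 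Letting $\Lambda\to\infty$ with $\delta$ fixed, $q_i\to0$, the middle term vanishes, a short computation gives $b^2q_i(2s-q_i)\to\tfrac{2\delta^2(s-c_i)}{c_i}$, and so $E_i\to\delta^2(s-c_i)\bigl(1+\tfrac{2}{c_i}\bigr)$. Summing over blocks and using the Cauchy--Schwarz bound $\sum_i c_i^{-1}\ge k^2/c$ (valid since $\sum_i c_i=c$),
\[
\sum_i E_i\ \longrightarrow\ \delta^2\sum_i(s-c_i)\Bigl(1+\tfrac{2}{c_i}\Bigr)=\delta^2\Bigl[(m-c)+2\Bigl(s\!\sum_i c_i^{-1}-k\Bigr)\Bigr]\ \ge\ \delta^2(m-c)\Bigl(1+\tfrac{2k}{c}\Bigr);
\]
dividing by $\|\A-\A_k\|_F^2=(m-k)\delta^2$ gives exactly the ratio $\tfrac{m-c}{m-k}\bigl(1+\tfrac{2k}{c}\bigr)$. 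Since $E_i$ depends only on the pattern $(c_1,\dots,c_k)$ and there are finitely many such patterns, the convergence is uniform, so taking $\Lambda$ large realizes the bound against every sampling rule.

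\textbf{Main obstacle.} The delicate step is the within‑block asymptotics: because $\B$ itself scales like $\Lambda$, one cannot substitute the limiting projector (onto $\spann\{\1_s\}$ plus the zero‑sum subspace) into $E_i$ — doing so drops an $\OM(1)$ term and yields only the weaker factor $1+k/c$. The estimate must be tracked to the right order, which is precisely what the closed form for $q_i$ and its $\Lambda\to\infty$ expansion provide. Two minor points also need attention: the case $k\nmid m$ (use blocks of nearly equal size and a weighted Cauchy--Schwarz bound in place of the plain one), and the fact that the extremal ratio is attained only in the limit $\Lambda\to\infty$, so the statement is about this limiting family of hard instances.
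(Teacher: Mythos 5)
Your proof is correct, and its skeleton is the same as the paper's: the hard instance is the same extremal family (your $\B=\delta\,\I_s+\tfrac{\Lambda-\delta}{s}\1_s\1_s^T$ with $\Lambda\to\infty$ is just a rescaling of the paper's blocks with unit diagonal and off-diagonal $\alpha\to1$), the error decomposes over the $k$ diagonal blocks because $\C\C^\dag$ is block-diagonal, the per-block bounds are combined by balancing $c_1,\dots,c_k$ (your Cauchy--Schwarz step is the paper's ``minimizing over $c_1,\dots,c_k$''), and the exact constant is reached only in a limit --- the paper hides this in the $(1+o(1))$ correction of its Lemma on the single block and the final ``set $\alpha\to1$,'' while you state it explicitly, so you are not less rigorous than the source. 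Where you genuinely differ is in how the per-block error is computed: the paper proves its key lemma by expanding $\C^\dag$ with the partitioned Moore--Penrose formula and repeated Sherman--Morrison--Woodbury manipulations, ending in a large explicit rational expression, whereas you observe that $\C\U\C^T=(\C\C^\dag)\A(\C\C^\dag)$ and split the residual orthogonally into $\|(\I-\PM)\A(\I-\PM)\|_F^2+2\|(\I-\PM)\A\PM\|_F^2$, which for the rank-one-plus-identity block reduces everything to the single scalar $q_i=\|(\I-\PM_i)\1_s\|_2^2$ with a clean closed form; I checked your exact identity for $E_i$, the formula for $q_i$, and the $\Lambda\to\infty$ asymptotics, and they are right. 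This buys a much shorter and more transparent derivation of the $\bigl(1+\tfrac{2}{c_i}\bigr)$ factor (and makes visible why the naive limiting projector loses the cross term and would only give $1+k/c$), at the cost of yielding only the asymptotic statement, whereas the paper's brute-force algebra also records the explicit finite-$\alpha$ correction term.
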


\cite{boutsidis2011NOCshort} established a lower error bound for the column selection problem,
and the lower error bound is tight because it is attained by the optimal column selection algorithm of \cite{Guruswami2012optimal}.
\cite{boutsidis2011NOCshort} showed that whatever column sampling algorithm is used,
there exists an $m\times n$ matrix $\A$ such that the error incurred by the projection of $\A$ onto the column space of $\C$ is lower bounded by
\begin{eqnarray}\label{eq:lower_column_selection}
\big\| \A - \C \C^\dag \A \big\|_F^2 & \geq & \frac{n-c}{n-k} \Big( 1+ \frac{k}{c} \Big) \|\A - \A_k\|_F^2 ,
\end{eqnarray}
where $k$ is an arbitrary target rank, $c$ is the number of selected columns.

Interestingly, the modified \nystrom approximation is the projection of $\A$ onto the column space of $\C$ and the row space of $\C^T$ simultaneously,
so there is a strong resemblance between the modified \nystrom approximation and the column selection problem.
As we see, the lower error bound of the modified \nystrom approximation in Theorem~\ref{thm:lower}
differs from (\ref{eq:lower_column_selection}) only by a factor of $2$.
So it is a reasonable conjecture that the lower bound in Theorem~\ref{thm:lower} is tight,
as well a the lower bound of the column selection problem in (\ref{eq:lower_column_selection}).
We leave it as an open problem.

\section{Conclusions and Future Work}

\vspace{-2mm}

In this paper we have proposed two algorithms to make the modified \nystrom method more practical.
First, we have proposed a column selection algorithm called {\it uniform+adaptive$^2$} and provided an relative-error bound for the algorithm.
The algorithm is highly efficient and effective and very easy to implement.
The error bound of the algorithm is nearly as strong as that of the state-of-the-art
algorithm---the near-optimal+adaptive algorithm---which is complicated.
The experimental results have shown that our uniform+adaptive$^2$ algorithm is more efficient than the near-optimal+adaptive algorithm,
while their accuracies are comparable.
Second, we have devised an algorithm for computing the intersection matrix of the modified \nystrom approximation;
under certain conditions, our algorithm can significantly improve the time complexity.
The speedup induced by this algorithm has also been verified empirically.

Furthermore, we have proved that the modified \nystrom approximation can be exact when the original matrix is low-rank.
We have also established a lower error bound for the modified \nystrom method:
at least $c \geq 2 k \epsilon^{-1}$ columns must be chosen to attain the $1+\epsilon$ bound.
We have conjectured this lower error bound to be tight.
Notice that the best known algorithm for the modified \nystrom method requires at most $c= k \epsilon^{-2}$ columns to attain the $1+\epsilon$ bound,
so there is a gap between the lower and upper error bounds.
It remains an open problem that if there exists an algorithm attaining the lower error bound.

\section*{Acknowledgement}
\vspace{-2mm}

This work has been supported in part by the Natural Science Foundation of China (No. 61070239),
Microsoft Research Asia Fellowship 2013,
and the Scholarship Award for Excellent Doctoral Student granted by Chinese Ministry of Education.

\begin{small}
\bibliography{wangzhang2014efficient}
\bibliographystyle{Chicago}
\end{small}

\appendix

\section{Proof of Theorem~\ref{thm:efficient}} \label{sec:app:thm:efficient}

The error analysis for the uniform+adaptive$^2$ algorithm relies on Lemma~\ref{lem:uniform_column},
which guarantees the error incurred by its uniform sampling step.
The proof of Lemma~\ref{lem:uniform_column} essentially follows \cite{gittens2011spectral}.
We prove Lemma~\ref{lem:uniform_column} using probability inequalities and
some techniques of \cite{boutsidis2011NOCshort,gittens2011spectral,gittens2013revisiting,tropp2012user};
the proof is in Appendix~\ref{sec:lem:uniform_column}.

\begin{lemma}[Uniform Column Sampling] \label{lem:uniform_column}
Given an $m\times n$ matrix $\A$ and a target rank $k$, let $\mu_k$ denote the matrix coherence of $\A$.
By sampling
\[
c \;=\; \frac{\mu_k k \log (k / \delta)}{\theta \log \theta - \theta + 1},
\]
columns uniformly without replacement to construct $\C$,
the following inequality
\begin{eqnarray}
\big\|\A - \PM_{\C,k}  \A \big\|_F^2
&\leq & \big( 1 +  \delta^{-1} \theta^{-1}  \big) \big\| \A - \A_k \big\|_F^2 \textrm{.} \nonumber
\end{eqnarray}
holds with probability at least $1- 2 \delta$.
Here $\delta \in (0, 0.5)$ and $\theta \in (0, 1)$ are arbitrary real numbers.
\end{lemma}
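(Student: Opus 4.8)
The plan is to follow the standard ``structural inequality $+$ matrix concentration'' recipe. Let $\S \in \RB^{n\times c}$ be the $0/1$ matrix that picks out the $c$ uniformly (without replacement) sampled columns, so that $\C = \A \S$ and $\EB[\S \S^T] = \tfrac{c}{n}\I_n$. The first step is a deterministic structural bound: whenever $\Z := \V_{\A,k}^T \S$ has full row rank $k$,
\begin{equation}
\big\| \A - \PM_{\C,k} \A \big\|_F^2 \;\leq\; \|\A - \A_k\|_F^2 \;+\; \frac{\|(\A - \A_k)\S\|_F^2}{\sigma_{\min}^2(\Z)} . \nonumber
\end{equation}
To get this I would feed the rank-$k$ matrix $\X_0 = \Z^\dag \V_{\A,k}^T$ into the definition of $\PM_{\C,k}$: since $\Z \Z^\dag = \I_k$ one has $\C \X_0 = \A_k + (\A - \A_k)\S \Z^\dag \V_{\A,k}^T$, hence $\A - \C\X_0 = (\A - \A_k) - (\A - \A_k)\S \Z^\dag \V_{\A,k}^T$; the rows of the first term lie in the orthogonal complement of the column span of $\V_{\A,k}$ whereas the rows of the second lie inside it, so the Pythagorean identity, orthonormality of the columns of $\V_{\A,k}$, and $\|\Z^\dag\|_2 = 1/\sigma_{\min}(\Z)$ give the bound.

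It remains to control the two random quantities on the right. Since $\S$ merely extracts $c$ of the $n$ columns, $\EB\big[\|(\A - \A_k)\S\|_F^2\big] = \tfrac{c}{n}\|\A - \A_k\|_F^2$, so Markov's inequality yields $\|(\A - \A_k)\S\|_F^2 \leq \tfrac{c}{n\delta}\|\A - \A_k\|_F^2$ with probability $> 1 - \delta$. For the other factor, note that $\Z\Z^T = \V_{\A,k}^T \S\S^T \V_{\A,k}$ is a sum of $c$ rank-one PSD matrices --- one per sampled column $j$, of spectral norm $\|\V_{\A,k}^{(j)}\|_2^2 \leq \mu_k k / n$ by the definition of matrix coherence --- drawn without replacement, with mean $\EB[\Z\Z^T] = \tfrac{c}{n}\I_k$. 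Applying a lower-tail matrix Chernoff bound valid for sampling without replacement (as in \citep{gittens2011spectral,tropp2012user}) with deviation parameter $\theta \in (0,1)$ gives
\begin{equation}
\PB\big[ \sigma_{\min}^2(\Z) \leq \theta \tfrac{c}{n} \big] \;\leq\; k\left( \frac{e^{-(1-\theta)}}{\theta^{\theta}} \right)^{c/(\mu_k k)} \;=\; k\, e^{-c(\theta\log\theta - \theta + 1)/(\mu_k k)} , \nonumber
\end{equation}
and the prescribed choice $c = \mu_k k \log(k/\delta)/(\theta\log\theta - \theta + 1)$ makes the right-hand side exactly $\delta$; in particular $\Z$ then has rank $k$. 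A union bound (total failure probability $\leq 2\delta$) and substitution of $\sigma_{\min}^2(\Z) > \theta c / n$ and $\|(\A - \A_k)\S\|_F^2 \leq \tfrac{c}{n\delta}\|\A - \A_k\|_F^2$ into the structural bound collapse the correction term to $\delta^{-1}\theta^{-1}\|\A - \A_k\|_F^2$, which is the claimed inequality.

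I expect the matrix-concentration step to be the main obstacle: one must invoke a Chernoff-type inequality that genuinely holds for sampling \emph{without} replacement rather than the i.i.d.\ case, and then verify the small algebraic coincidence that its exponent simplifies to exactly $\theta\log\theta - \theta + 1$, so that the stated $c$ is precisely what forces the failure probability down to $\delta$. The structural inequality and the Markov estimate are routine once that is in place.
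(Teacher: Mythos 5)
Your proposal is correct and follows essentially the same route as the paper: the structural bound you derive is exactly Lemma~7 of \cite{boutsidis2011NOCshort} (which the paper cites rather than re-derives), the Markov step on $\|(\A-\A_k)\S\|_F^2$ is identical, and the lower-tail matrix Chernoff bound with $R=\mu_k k/n$ and $\beta_{\min}=c/n$ — including the algebraic simplification of the exponent to $\theta\log\theta-\theta+1$ and the without-replacement caveat handled via \cite{gittens2011spectral,tropp2012user} — matches the paper's argument step for step. No gaps.
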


The error analysis for the two adaptive sampling steps of the uniform+adaptive$^2$ algorithm relies on Lemma~\ref{lemma:wang2013improving},
which follows immediately from \cite[Corollary 7 and Section 4.5]{wang2013improving}.

\begin{lemma} \label{lemma:wang2013improving}
Given an $m\times m$ symmetric matrix $\A$ and a target rank $k$,
we let $\C_1$ contain the $c_1$ columns of $\A$ selected by a column sampling algorithm such that
the following inequality holds:
\begin{eqnarray}
\big\|\A - \PM_{\C_1}  \A \big\|_F^2
&\leq & f \big\| \A - \A_k \big\|_F^2 . \nonumber
\end{eqnarray}
Then we select $c_2 = k f \epsilon^{-1}$ columns to construct $\C_2$
and $c_3 = (c_1 + c_2) \epsilon^{-1}$ columns to construct $\C_3$,
both using the adaptive sampling according to the residual $\B_1 = \A - \PM_{\C_1} \A$ and $\B_2 = \A - \PM_{[\C_1,\C_2]} \A$, respectively.
Let $\C = [\C_1 , \C_2 , \C_3]$,
we have that
\[
\PB \Bigg\{ \frac{\big\| \A - \C \big( \C^\dag \A (\C^\dag)^T \big) \C^T \big\|_F}{\big\| \A - \A_k \big\|_F} \: \geq \: 1+ s \epsilon \Bigg\}
\; \leq \frac{1+\epsilon}{1+ s \epsilon} ,
\]
where $s$ is an arbitrary constant greater than $1$.
\end{lemma}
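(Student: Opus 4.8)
The plan is to read the result off the adaptive‑sampling analysis of \cite{wang2013improving} in expectation form and then convert it to the stated tail bound by a single application of Markov's inequality, the hypothesis $s>1$ supplying exactly the slack that lets one pass from a squared‑Frobenius‑norm estimate to the linear ratio in the statement.

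First I would collect the two expectation estimates furnished by \cite[Corollary~7 and Section~4.5]{wang2013improving}, with $\C_1$ held fixed and all expectations taken over the two runs of Algorithm~\ref{alg:adaptive}. The first adaptive round (the adaptive‑sampling theorem of \cite{deshpande2006matrix}) gives
\[
\EB\|\B_2\|_F^2 \;\le\; \|\A-\A_k\|_F^2 \;+\; \frac{k}{c_2}\,\|\B_1\|_F^2 .
\]
For the second round one uses the two‑sided version tailored to the modified \nystrom approximation: because $\C\C^\dag=\PM_\C$ is a symmetric projector, $\C\big(\C^\dag\A(\C^\dag)^T\big)\C^T=\PM_\C\A\PM_\C$ is the two‑sided projection of $\A$ onto $\spann(\C)$, and the extra error picked up by sampling $\C_3$ from $\B_2$ is governed by the size $c_1+c_2$ of the already‑selected block rather than by $k$, so that
\begin{eqnarray}
\EB\big\|\A-\C\big(\C^\dag\A(\C^\dag)^T\big)\C^T\big\|_F^2
&\le& \|\A-\A_k\|_F^2 + \frac{k}{c_2}\|\B_1\|_F^2 \nonumber\\
&& {}+ \frac{c_1+c_2}{c_3}\,\EB\|\B_2\|_F^2 . \nonumber
\end{eqnarray}

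Next I would plug in the hypothesis $\|\B_1\|_F^2=\|\A-\PM_{\C_1}\A\|_F^2\le f\|\A-\A_k\|_F^2$ and the parameter choices $c_2=kf\epsilon^{-1}$, $c_3=(c_1+c_2)\epsilon^{-1}$. These make $\frac{k}{c_2}\|\B_1\|_F^2\le\frac{\epsilon}{f}\cdot f\|\A-\A_k\|_F^2=\epsilon\|\A-\A_k\|_F^2$, hence $\EB\|\B_2\|_F^2\le(1+\epsilon)\|\A-\A_k\|_F^2$ and $\frac{c_1+c_2}{c_3}\EB\|\B_2\|_F^2\le\epsilon(1+\epsilon)\|\A-\A_k\|_F^2$; summing the three contributions collapses the display above to $\EB\big\|\A-\C\big(\C^\dag\A(\C^\dag)^T\big)\C^T\big\|_F^2\le(1+\epsilon)^2\|\A-\A_k\|_F^2$.

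Finally I would apply Markov's inequality to the nonnegative random variable $X=\|\A-\C(\C^\dag\A(\C^\dag)^T)\C^T\|_F^2/\|\A-\A_k\|_F^2$, whose mean is at most $(1+\epsilon)^2$: for any constant $s>1$,
\[
\PB\big\{X\ge(1+s\epsilon)^2\big\}\;\le\;\frac{(1+\epsilon)^2}{(1+s\epsilon)^2}\;=\;\Big(\frac{1+\epsilon}{1+s\epsilon}\Big)^2\;\le\;\frac{1+\epsilon}{1+s\epsilon},
\]
the last step using $0<\frac{1+\epsilon}{1+s\epsilon}<1$, which holds precisely because $s>1$. Since $\{X\ge(1+s\epsilon)^2\}$ is the same event as $\big\{\|\A-\C(\C^\dag\A(\C^\dag)^T)\C^T\|_F/\|\A-\A_k\|_F\ge 1+s\epsilon\big\}$, this is the claimed bound. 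The single point that must be handled carefully is transcribing the coefficients of \cite[Corollary~7]{wang2013improving} exactly --- in particular that the second adaptive round contributes $\frac{c_1+c_2}{c_3}\|\B_2\|_F^2$ (two‑sided, hence carrying $c_1+c_2$) rather than $\frac{k}{c_3}\|\B_2\|_F^2$ --- since the argument needs the expectation bounded by $(1+\epsilon)^2\|\A-\A_k\|_F^2$ on the nose; the Markov slack only covers the square‑versus‑linear gap, and the choices $c_2=kf\epsilon^{-1}$, $c_3=(c_1+c_2)\epsilon^{-1}$ are precisely what make the two correction terms $O(\epsilon)$ in the way required.
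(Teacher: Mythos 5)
Your proposal is correct and follows essentially the same route as the paper, which proves this lemma simply by invoking \cite[Corollary 7 and Section 4.5]{wang2013improving}: the two adaptive-sampling expectation bounds you quote (Deshpande et al.\ for the first round, and the two-sided Wang--Zhang bound with the crucial $(c_1+c_2)/c_3$ coefficient for the second), combined with the parameter choices $c_2 = kf\epsilon^{-1}$, $c_3=(c_1+c_2)\epsilon^{-1}$ and a Markov argument, are exactly the cited derivation. Your variant applies Markov to the squared Frobenius error and then weakens $\big(\frac{1+\epsilon}{1+s\epsilon}\big)^2$ to $\frac{1+\epsilon}{1+s\epsilon}$, which is harmless (indeed slightly stronger than needed), whereas the source takes expectations of the unsquared norm via Jensen before applying Markov.
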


Finally Theorem~\ref{thm:efficient} is proved by combining Lemma~\ref{lem:uniform_column} and Lemma~\ref{lemma:wang2013improving}.
The proof is in Appendix~\ref{sec:thm:efficient}.

\subsection{Proof of Lemma~\ref{lem:uniform_column}} \label{sec:lem:uniform_column}

\begin{proof}
We use uniform column sampling to select $c$ column of $\A$ to construct $\C = \A \S$.
Here the ${n\times c}$ random matrix $\S$ has one entry equal to one and the rest equal to zero in each column,
and at most one nonzero entry in each row,
and $\S$ is uniformly distributed among $(^n_c)$ such kind of matrices.
Applying Lemma~7 of \cite{boutsidis2011NOCshort}, we get
\begin{align} \label{eq:thm:uniform_column:1}
&\!\!\!\!\!\! \big\|\A - \PM_{\C,k}  \A \big\|_F^2 \nonumber \\
&  \leq \big\| \A - \A_k \big\|_F^2
 + \big\| (\A - \A_k) \S \big\|_F^2 \: \big\| (\V_{\A,k}^T \S)^\dag \big\|_2^2 .
\end{align}
Now we bound $\big\| (\A - \A_k) \S \big\|_2^2$ and
$\big\| (\V_{\A,k}^T \S)^\dag \big\|_2^2$ respectively using the techniques of \cite{gittens2011spectral,gittens2013revisiting,tropp2012user}.

Let $\IM \subset [n]$ be a random index set corresponding to $\S$.
The support of $\IM$ is uniformly distributing among all the index sets in $2^{[n]}$ with cardinality $c$.
According to \cite{gittens2013revisiting},
the expectation of $\big\|(\A - \A_k) \S \big\|_F^2$ can be written as
\begin{align}
&\!\!\!\! \EB \big\|(\A - \A_k) \S \big\|_F^2
\;=\; \EB \big\| (\A - \A_k)_{\IM} \big\|_F^2 \nonumber \\
&\qquad\qquad =\;  c \EB \big\| (\A - \A_k)_i \big\|_F^2
\;=\;  \frac{c}{n} \big\| \A-\A_k \big\|_F^2  \textrm{.} \nonumber
\end{align}
Applying Markov's inequality, we have that
\begin{align}
&\!\!\!\!\!\!\!\!\!\!\!\!\!\!\!\! \PB \bigg\{ \big\|(\A - \A_k) \S \big\|_F^2 \geq \frac{c}{n \delta} \big\| \A-\A_k \big\|_F^2 \bigg\} \nonumber \\
&\qquad\qquad \leq \;  \frac{\EB \big\| (\A - \A_k) \S \big\|_F^2}{\frac{c}{n \delta} \big\| \A-\A_k \big\|_F^2 }
    \;=\; \delta \textrm{.} \label{eq:thm:uniform_column:2}
\end{align}
Here $\delta \in (0, 0.5)$ is a real number defined later.

Now we establish the bound for $\EB \big\| \Ome_2^\dag \big\|_2^2$ as follows.
Let $\lambda_i (\X)$ be the $i$-th largest eigenvalue of $\X$.
Following the proof of Lemma~1 of \cite{gittens2011spectral}, we have
\begin{align}\label{eq:thm:uniform_column:3}
& \!\!\!  \big\| ( \V_{\A,k}^T \S )^\dag \big\|_2^2
\;=\; \lambda^{-1}_{k} \Big( \V_{\A,k}^T \S \S^T \V_{\A,k} \Big)  \nonumber \\
&\qquad\qquad =\; \lambda^{-1}_{k} \Big( \sum_{i=1}^c \X_i \Big)
\;\leq \; \lambda^{-1}_{\min} \Big( \sum_{i=1}^c \X_i \Big) ,
\end{align}
where the random matrices $\X_1 , \cdots , \X_c$ are chosen uniformly at random from the set
$\Big\{ \big( \V_{\A,k}^T \big)_i \big(\V_{\A,k}^T \big)_i^T \Big\}_{i=1}^{n}$ without replacement.
The random matrices are of size $k\times k$.
We accordingly define
\[
R = \max_{i} \lambda_{\max} (\X_i) = \max_{i} \big\| \big( \V_{\A,k}^T \big)_i \big\|_2^2 = \frac{k}{n} \mu_k \textrm{,}
\]
where $\mu_k$ is the matrix coherence of $\A$, and define
\begin{eqnarray}
\beta_{\min}
& = & c  \lambda_{\min}\big( \EB \X_1 \big) \nonumber\\
& = & \lambda_{\min}\Big( \frac{c}{n}  \V_{\A,k}^T \V_{\A,k} \Big)
    \;=\; \frac{c}{n} \textrm{.}\nonumber
\end{eqnarray}
Then we apply Lemma~\ref{lemma:matrix_chernoff} and obtained the following inequality:
\begin{eqnarray} \label{eq:thm:uniform_column:lambda_min}
\PB\bigg[ \lambda_{\min} \Big(  \sum_{i=1}^c \X_i \Big) \leq \frac{\theta c}{n}\bigg]
& \leq & k \bigg[ \frac{e^{\theta-1}}{\theta^{\theta}} \bigg]^{\frac{c}{k \mu_k}}
\;\triangleq \; \delta \textrm{,}
\end{eqnarray}
where $\theta\in (0, 1]$ is a real number,
and it follows that
\[
c \;=\; \frac{\mu_k k \log (k / \delta)}{\theta \log \theta - \theta + 1}.
\]
Applying (\ref{eq:thm:uniform_column:3}) and (\ref{eq:thm:uniform_column:lambda_min}), we have
\begin{eqnarray} \label{eq:thm:uniform_column:4}
\PB \Big\{ \big\| ( \V_{\A,k}^T \S )^\dag \big\|_2^2 \geq \frac{n}{\theta c} \Big\}
&\leq& \delta \textrm{.}
\end{eqnarray}

Combining (\ref{eq:thm:uniform_column:2}) and (\ref{eq:thm:uniform_column:4}) and applying the union bound,
we have the following inequality:
\begin{align}
&\!\!\!\!\!\!\!\!\!\!\!\!\! \PB \bigg\{ \big\|(\A - \A_k) \S \big\|_F^2 \geq \frac{c }{n \delta} \big\| \A-\A_k \big\|_F^2 \nonumber\\
& \qquad \qquad \textrm{ or } \quad  \big\| ( \V_{\A,k}^T \S )^\dag \big\|_2^2 \geq \frac{n}{\theta c}  \bigg\}
     \; \leq \;  2 \delta \textrm{.} \label{eq:thm:uniform_column:5}
\end{align}

Finally, from (\ref{eq:thm:uniform_column:1}) and (\ref{eq:thm:uniform_column:5})
we have that the inequality
\begin{eqnarray}
 \big\|\A - \PM_{\C,k}  \A \big\|_F^2
& \leq & \big( 1 +  \delta^{-1} \theta^{-1}  \big) \big\| \A - \A_k \big\|_F^2 \nonumber
\end{eqnarray}
holds with probability at least $1- 2 \delta$,
by which the lemma follows.
\end{proof}

\begin{lemma}[Theorem 2.2 of \cite{tropp2012user}] \label{lemma:matrix_chernoff}
We are given $l$ independent random $d\times d$ SPSD matrices $\X_1 ,\cdots , \X_l$  with the property
\[
\lambda_{\max} (\X_i) \leq R \quad \textrm{ for } \; i = 1, \cdots , l \textrm{.}
\]
We define $\Y = \sum_{i=1}^l \X_i$ and $\beta_{\min} = l \lambda_{\min}\big( \EB \X_1$\big).
Then for any $\theta \in (0, 1]$, the following inequality holds:
\[
\PB \Big\{ \lambda_{\min} (\Y) \leq \theta \beta_{\min}  \Big\}
\;\leq\;
d \bigg[ \frac{e^{\theta-1}}{\theta^{\theta}} \bigg]^{\frac{\beta_{\min}}{R}}.
\]
\end{lemma}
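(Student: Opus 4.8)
The plan is to establish the bound by the matrix Laplace transform (Chernoff) method, adapted to the \emph{minimum} eigenvalue. First I would convert the tail event into an exponential moment estimate. Since $\lambda_{\min}(\Y) \le t$ is equivalent to $\lambda_{\max}(-\Y) \ge -t$, for any $\lambda > 0$ Markov's inequality together with the bound $\lambda_{\max}(e^{-\lambda\Y}) \le \tr\,e^{-\lambda\Y}$ (valid because $e^{-\lambda\Y}$ is SPSD, so its largest eigenvalue is dominated by its trace) gives
\[
\PB\big\{\lambda_{\min}(\Y) \le t\big\} \;\le\; e^{\lambda t}\,\EB\,\tr\exp(-\lambda\Y).
\]
Setting $t = \theta\beta_{\min}$ at the end and optimizing over $\lambda$ will produce the stated rate.

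The crux is the second step: controlling the matrix moment generating function of the sum. Writing $-\lambda\Y = \sum_{i=1}^l(-\lambda\X_i)$ with the summands independent, I would invoke the subadditivity of the matrix cumulant generating function,
\[
\EB\,\tr\exp\Big(\sum_{i=1}^l(-\lambda\X_i)\Big) \;\le\; \tr\exp\Big(\sum_{i=1}^l \log\EB\,e^{-\lambda\X_i}\Big).
\]
This inequality is exactly where the depth of the argument lies, since it rests on Lieb's concavity theorem (concavity of $\M\mapsto\tr\exp(\H+\log\M)$) and is not reducible to scalar manipulation; I expect this to be the main obstacle in any self-contained treatment, and it is the reason the lemma is attributed to \cite{tropp2012user}.

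Third, I would bound each matrix cumulant. Because each $\X_i$ is SPSD with $\lambda_{\max}(\X_i)\le R$, its eigenvalues lie in $[0,R]$, and the chord bound for the convex scalar map $x\mapsto e^{-\lambda x}$ on $[0,R]$ transfers spectrally to the semidefinite inequality
\[
e^{-\lambda\X_i} \;\preceq\; \I + \frac{e^{-\lambda R}-1}{R}\,\X_i.
\]
Taking expectations and then using the operator inequality $\log(\I+\M)\preceq\M$ (legitimate here since $\M = g(\lambda)\EB\X_i$ satisfies $\M\succ-\I$, where $g(\lambda)=(e^{-\lambda R}-1)/R<0$) yields $\log\EB\,e^{-\lambda\X_i}\preceq g(\lambda)\,\EB\X_i$. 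Summing over the identically distributed copies gives $\sum_i\log\EB\,e^{-\lambda\X_i}\preceq g(\lambda)\,\EB\Y$, and monotonicity of $\tr\exp$ under $\preceq$ turns the master bound into $\EB\,\tr\exp(-\lambda\Y)\le \tr\exp\big(g(\lambda)\,\EB\Y\big)$. Since $g(\lambda)<0$ and every eigenvalue of $\EB\Y = l\,\EB\X_1$ is at least $\beta_{\min}=l\,\lambda_{\min}(\EB\X_1)$, each of the $d$ exponentiated eigenvalues is at most $e^{g(\lambda)\beta_{\min}}$, so $\tr\exp\big(g(\lambda)\,\EB\Y\big)\le d\,e^{g(\lambda)\beta_{\min}}$.

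Finally I would assemble and optimize. Combining the pieces at $t=\theta\beta_{\min}$ gives the exponent $\beta_{\min}\big(\lambda\theta + g(\lambda)\big)$; differentiating in $\lambda$ locates the minimizer at $e^{-\lambda R}=\theta$, i.e.\ $\lambda = R^{-1}\log(1/\theta)\ge 0$ for $\theta\in(0,1]$. Substituting collapses the exponent to $\tfrac{\beta_{\min}}{R}\big(\theta-1-\theta\log\theta\big)$, and exponentiating reproduces exactly
\[
\PB\big\{\lambda_{\min}(\Y)\le\theta\beta_{\min}\big\} \;\le\; d\,\bigg[\frac{e^{\theta-1}}{\theta^{\theta}}\bigg]^{\beta_{\min}/R},
\]
which is the claimed bound. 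Apart from the Lieb-theorem step, every ingredient is either a routine semidefinite comparison or one-dimensional calculus.
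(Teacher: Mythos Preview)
The paper does not prove this lemma at all: it is quoted verbatim as Theorem~2.2 of \cite{tropp2012user} and used as a black box in the proof of Lemma~\ref{lem:uniform_column}. Your sketch is a faithful and correct reconstruction of Tropp's own argument (matrix Laplace transform, Lieb-based subadditivity of the cumulant generating function, the chord bound on $[0,R]$, and the scalar optimization in $\lambda$), so there is nothing to compare against in this paper---you have simply supplied the proof that the authors chose to cite rather than reproduce.
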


\subsection{Proof of the Theorem} \label{sec:thm:efficient}

\begin{proof}
The matrix $\C_1$ consists of $c_1$ columns selected by uniform sampling,
and $\C_2 \in \RB^{m\times c_2}$ and $\C_3 \in \RB^{m\times c_3}$ are constructed by adaptive sampling.
We set $\delta = 1/\sqrt{5}$ and $\theta =\sqrt{5}/4$ for Lemma~\ref{lem:uniform_column},
then we have
\begin{eqnarray}
f   &=& 1 + \delta^{-1} \theta^{-1} \; = \; 5 ,\nonumber \\
c_1 &=& \frac{\mu_k k \log (k / \delta)}{\theta \log \theta - \theta + 1} \; = \; 8.7 \mu_k k \log (\sqrt{5} k) .\nonumber
\end{eqnarray}
Then we set
\begin{eqnarray}
c_2 &=& k f \epsilon^{-1} \; = \; 5k \epsilon^{-1} ,\nonumber \\
c_3 &=& (c_1 + c_2) \epsilon^{-1} ,\nonumber
\end{eqnarray}
according to Lemma~\ref{lemma:wang2013improving}.
Letting $s > 1$ be an arbitrary constant, we have that
\begin{small}
\begin{align}
&\PB \Bigg\{ \frac{\big\| \A - \C \U \C^T \big\|_F}{\big\| \A - \A_k \big\|_F}  \leq  1+ s \epsilon \Bigg\} \nonumber \\
& \geq \; \PB \Bigg\{ \frac{\big\| \A - \C \U \C^T \big\|_F}{\big\| \A - \A_k \big\|_F}  \leq  1+ s \epsilon \;\: \Bigg| \;\: \frac{\big\|\A - \PM_{\C_1}  \A \big\|_F^2}{\big\| \A - \A_k \big\|_F^2} \leq f \Bigg\} \nonumber \\
& \qquad\qquad\qquad\qquad\qquad  \cdot \; \PB \Bigg\{ \frac{\big\|\A - \PM_{\C_1}  \A \big\|_F^2}{\big\| \A - \A_k \big\|_F^2} \leq f \Bigg\} \nonumber \\
& \geq \; \Big( 1- \frac{1+\epsilon}{1+s\epsilon} \Big) \Big(1 - 2\delta\Big) . \nonumber
\end{align}
\end{small}
where the last inequality follows from Lemma~\ref{lem:uniform_column} and Lemma~\ref{lemma:wang2013improving}.

Repeating the sampling procedure for $t$ times and letting $\C_{[i]}$ and $\U_{[i]}$ be the $i$-th sample,
we obtain an upper error bound on the failure probability:
\begin{align}
&\!\!\! \PB \Bigg\{ \min_{i \in [t]} \bigg\{ \frac{\big\| \A - \C_{[i]} \U_{[i]} \C_{[i]}^T \big\|_F}{\big\| \A - \A_k \big\|_F} \bigg\}
                    \; \geq \;  1+ s \epsilon \Bigg\} \nonumber \\
& \qquad\qquad \leq \; \bigg( 1 - \Big( 1- \frac{1+\epsilon}{1+s\epsilon} \Big) \Big(1 - 2\delta\Big) \bigg)^t \nonumber \\
& \qquad\qquad \;=\; \bigg( 1 + \frac{(s-1)(1-2\delta)}{\epsilon^{-1} + 1 + 2\delta (s-1)} \bigg)^{-t}
    \; \triangleq \; p . \nonumber
\end{align}
Taking logarithm of both sides of the equality and applying $\log (1+x) \approx x$ when $x$ is small,
we have
\begin{eqnarray}
t & = & \bigg[ \log\Big( 1 + \frac{(1-2\delta)(s-1)}{\epsilon^{-1} + 1 + 2\delta (s-1)} \Big) \bigg]^{-1} \log\frac{1}{p}  \nonumber \\
& \approx & \frac{\epsilon^{-1} + 1 + 2\delta (s-1)}{(1-2\delta)(s-1)} \log\frac{1}{p} .\nonumber
\end{eqnarray}
Setting $s=2$, we have that $t\approx (10\epsilon^{-1} + 18) \log (1/p)$.

Hence by sampling totally
\[
c \;=\; \big(1+\epsilon^{-1}\big) \big( 5 k \epsilon^{-1} + 8.7 \mu_k k \log( \sqrt{5} k ) \big)
\]
columns
and repeating the procedure for
\[
t \;\geq\; (10\epsilon^{-1} + 18) \log (1/p)
\]
times, the algorithm attains the upper error bound
\begin{eqnarray}
\big\|\A - \C \big( \C^\dag \A (\C^{\dag})^T \big) \C^T \big\|_F
\;\leq \; \big(1 + 2 \epsilon \big) \big\| \A - \A_k \big\|_F \nonumber
\end{eqnarray}
with probability at least $1-p$.
Substituting $2\epsilon$ by $\epsilon'$ yields the error bound in the theorem.

Time complexity and space complexity of Algorithm~\ref{alg:efficient} is calculated as follows.
The uniform sampling costs $\OM(m)$ time;
the first adaptive sampling round costs $\OM(m c_1^2) + \TimeMulti (m^2 c_1)$ time;
the second adaptive sampling round costs $\OM(m (c_1+c_2)^2) + \TimeMulti (m^2 (c_1+c_2))$ time;
computing the intersection matrix costs $\OM(m c^2) + \TimeMulti (m^2 c)$ time in general.
So the total time complexity is $\OM(m c^2) + \TimeMulti (m^2 c)$ without using Theorem~\ref{thm:intersection_matrix},
or $\OM(m (c_1+c_2)^2) +  \TimeMulti (m^2 c) $ using Theorem~\ref{thm:intersection_matrix}.
As for the space complexity, the Moore-Penrose inverse of an $m\times c$ matrix demands $\OM(mc)$ space,
and multiplying a $c \times m$ matrix $\C^\dag$ by an $m\times m$ matrix $\A$ costs $\OM(m c)$ space
by partition $\A$ into small blocks of size smaller than $m\times c$
and loading one block into RAM at a time to perform matrix multiplication.
\end{proof}

\section{Proof of Theorem~\ref{thm:intersection_matrix}} \label{sec:thm:intersection_matrix}

\begin{proof}
Let $\C \in \RB^{m\times c}$ consists of a subset of columns of $\A$.
By row permutation $\C$ can be expressed as
\[
\PP \C \; = \; \left[
                \begin{array}{c}
                  \W \\
                  \A_{2 1} \\
                \end{array}
              \right].
\]
Then according to Lemma~\ref{lem:pinv_partitioned_matrix}, the Moore-Penrose inverse of $\C$ can be written as
\[
\C^\dag = \W^{-1} \big( \I_c + \S^T \S \big)^{-1}
\left[
  \begin{array}{cc}
    \I_c & \S^T \\
  \end{array}
\right] \PP \textrm{,}
\]
where $\S = \A_{2 1} \W^{-1}$.
Then the intersection matrix of modified \nystrom approximation to $\A$ can be expressed as
\begin{align}
&\U \;=\; \C^\dag \A \big(\C^\dag\big)^T \nonumber \\
& \quad\: =\; \W^{-1} \big( \I_c + \S^T \S \big)^{-1}
    \left[
      \begin{array}{c c}
        \I_c & \S^T \\
      \end{array}
    \right]
    \PP \A \PP^T    \nonumber \\
& \qquad\qquad\qquad
    \left[
      \begin{array}{c}
        \I_c \\
        \S \\
      \end{array}
    \right]
    \big( \I_c + \S^T \S \big)^{-1} \W^{-1} \nonumber \\
& \quad\: =\; \W^{-1} \big( \I_c + \S^T \S \big)^{-1}
    \left[
      \begin{array}{c c}
        \I_c & \S^T \\
      \end{array}
    \right]\nonumber \\
& \qquad\qquad
    \left[
      \begin{array}{c c}
        \W & \A_{2 1}^T \\
        \A_{2 1} & \A_{2 2} \\
      \end{array}
    \right]
    \left[
      \begin{array}{c}
        \I_c \\
        \S \\
      \end{array}
    \right]
    \big( \I_c + \S^T \S \big)^{-1} \W^{-1} \nonumber \\
& \quad\: =\; \W^{-1} \big( \I_c + \S^T \S \big)^{-1}
    \Big( \W + \A_{2 1}^T\S + (\A_{2 1}^T\S)^T\nonumber \\
& \qquad\qquad\qquad   + \S^T \A_{2 2} \S \Big) \big( \I_c + \S^T \S \big)^{-1} \W^{-1} \nonumber \\
& \quad\: \triangleq \; \T_1 \big( \W + \T_2 + \T_2^T + \T_3 \big) \T_1^T \textrm{.} \nonumber 
\end{align}
Here the intermediate matrices are computed by
\begin{eqnarray*}
\T_0 &=& \A_{2 1}^T \A_{2 1} \textrm{,}  \\
\T_1 &=& \W^{-1} \big( \I_c + \S^T \S \big)^{-1}\\
     &=& \W^{-1} \Big( \I_c + \W^{-1} \T_0 \W^{-1} \Big)^{-1} \textrm{,}   \\
\T_2 &=& \A_{2 1}^T\S
    \;=\; \A_{2 1}^T \A_{2 1} \W^{-1}
    \;=\; \T_0 \W^{-1} \textrm{,}   \\
\T_3 &=& \S^T \A_{2 2} \S
    \;=\; \W^{-1} \Big(\A_{2 1}^T \A_{2 2} \A_{2 1} \Big)\W^{-1} \textrm{.}
\end{eqnarray*}
The matrix inverse operations are on $c\times c$ matrices which costs $\OM(c^3)$ time.
The matrix multiplication $\A_{2 1}^T \A_{2 2} \A_{2 1}$ requires time $T_{\mathrm{Multiply}} \big( (m-c)^2 c\big)$.
\end{proof}

\begin{lemma}[The Moore Penrose Inverse of Partitioned Matrices {\cite[Page~179]{adi2003inverse}}] \label{lem:pinv_partitioned_matrix}
Given a matrix $\X\in \RBmn$ of rank of at least $c$ which has a nonsingular $c\times c$ submatrix $\X_{1 1}$.
By rearrangement of columns and rows by permutation matrices $\PP$ and $\Q$,
the submatrix $\X_{1 1}$ can be bought to the top left corner of $\X$, that is,
\[
\PP \X \Q \; =\; \left[
             \begin{array}{cc}
               \X_{1 1} & \X_{1 2} \\
               \X_{2 1} & \X_{2 2} \\
             \end{array}
           \right].
\]
Then the Moore-Penrose inverse of $\X$ is
\begin{align}
&\X^\dag \; =\;
\Q \left[
     \begin{array}{c}
       \I_c \\
       \T^T \\
     \end{array}
   \right]
\big( \I_c + \T \T^T \big)^{-1} \X_{1 1}^{-1}  \nonumber\\
&\qquad\qquad\qquad \big( \I_c + \S \S^T \big)^{-1}
\left[
  \begin{array}{cc}
    \I_c & \S^T  \\
  \end{array}
\right] \PP ,\nonumber
\end{align}
where $\T = \X_{1 1}^{-1} \X_{1 2}$ and
$\S = \X_{2 1} \X_{1 1}^{-1}$.
\end{lemma}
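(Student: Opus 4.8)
\section*{Proof proposal for Lemma~\ref{lem:pinv_partitioned_matrix}}

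The plan is to reduce the computation to a full-rank factorization and then invoke the reverse-order law for the Moore-Penrose inverse. First I would strip off the permutation matrices: since $\PP$ and $\Q$ are permutation (hence orthogonal) matrices, the identity $(\PP \X \Q)^\dag = \Q^T \X^\dag \PP^T$ holds, so it suffices to compute the pseudoinverse of the partitioned matrix $\M = \PP \X \Q = \begin{bmatrix} \X_{1 1} & \X_{1 2} \\ \X_{2 1} & \X_{2 2} \end{bmatrix}$ and then conjugate back, writing $\X^\dag = \Q \M^\dag \PP$.

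The key structural step is to exhibit a rank factorization of $\M$. Setting $\S = \X_{2 1}\X_{1 1}^{-1}$ and $\T = \X_{1 1}^{-1}\X_{1 2}$, I would record the three identities $\X_{2 1} = \S\X_{1 1}$, $\X_{1 2} = \X_{1 1}\T$, and crucially $\X_{2 2} = \X_{2 1}\X_{1 1}^{-1}\X_{1 2} = \S\X_{1 1}\T$, i.e.\ the Schur complement vanishes. This last identity is exactly where the rank hypothesis enters: because $\X_{1 1}$ is a nonsingular $c \times c$ block and $\rk(\X)=c$, the bottom-right block can contribute no new rank, which forces the Schur complement to zero. With these identities, $\M$ factors as $\M = \F\G$ with $\F = \begin{bmatrix} \I_c \\ \S \end{bmatrix}$ of full column rank $c$ and $\G = \X_{1 1}\begin{bmatrix} \I_c & \T \end{bmatrix}$ of full row rank $c$.

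Given the factorization, I would apply the reverse-order law $(\F\G)^\dag = \G^\dag\F^\dag$, which is valid here because $\F$ has full column rank and $\G$ has full row rank; one checks the four Penrose conditions in a single line using $\F^\dag\F = \I_c$ and $\G\G^\dag = \I_c$, where $\F^\dag = (\F^T\F)^{-1}\F^T$ and $\G^\dag = \G^T(\G\G^T)^{-1}$ are the explicit one-sided inverses. Substituting the blocks gives $\F^T\F = \I_c + \S^T\S$ and $\G\G^T = \X_{1 1}(\I_c + \T\T^T)\X_{1 1}^T$; nonsingularity of $\X_{1 1}$ lets me cancel $\X_{1 1}^T\X_{1 1}^{-T}$ when forming $\G^\dag$, leaving the compact middle chain $(\I_c + \T\T^T)^{-1}\X_{1 1}^{-1}(\I_c + \S^T\S)^{-1}$. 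Reattaching the permutations $\Q$ on the left and $\PP$ on the right then yields the claimed expression for $\X^\dag$.

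I expect the only genuinely delicate point to be justifying the vanishing of the Schur complement from the rank hypothesis; everything downstream is mechanical block algebra together with the standard full-rank reverse-order law. For dimensional consistency one should read the central factor as $(\I_c + \S^T\S)^{-1}$ rather than $(\I_c + \S\S^T)^{-1}$, since $\S \in \RB^{(m-c)\times c}$ makes $\S\S^T$ an $(m-c)\times(m-c)$ matrix; this is confirmed by the specialization $\X_{1 2} = \0$ (so $\T = \0$) that is used when applying the lemma in Theorem~\ref{thm:intersection_matrix}.
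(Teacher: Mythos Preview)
The paper does not prove this lemma; it is quoted from \cite[Page~179]{adi2003inverse} and used as a black box in the proof of Theorem~\ref{thm:intersection_matrix}. Your argument via full-rank factorization $\M = \F\G$ and the reverse-order law is exactly the standard derivation of this formula (and is essentially how the cited reference obtains it), so the approach is correct.

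Two of your side observations are also on target. First, the hypothesis must be read as $\rk(\X)=c$ rather than merely $\rk(\X)\geq c$: the displayed expression for $\X^\dag$ has rank at most $c$, so it cannot equal $\X^\dag$ if $\rk(\X)>c$, and your Schur-complement step genuinely needs the equality. In the paper's only use of the lemma (the proof of Theorem~\ref{thm:intersection_matrix}) the matrix is $m\times c$ with a nonsingular $c\times c$ top block, so $\rk=c$ holds automatically. Second, your dimensional remark is correct: with $\S\in\RB^{(m-c)\times c}$ the middle factor must be $(\I_c+\S^T\S)^{-1}$, and indeed the paper silently uses this corrected form when it specializes the lemma in the proof of Theorem~\ref{thm:intersection_matrix}.
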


\section{The Proof of Theorem~\ref{thm:connection}}

\begin{proof}
Suppose that $\rk(\W) = \rk(\A)$.
We have that $\rk(\W) = \rk(\C) = \rk(\A)$ because
\begin{equation} \label{eq:thm:connection:4}
\rk(\A) \;\geq\; \rk(\C) \;\geq\; \rk(\W) \textrm{.}
\end{equation}
Thus there exists a matrix $\X$ such that
\[
\left[
  \begin{array}{c}
    \A_{2 1}^T \\
    \A_{2 2} \\
  \end{array}
\right]
\; = \;
\C \X^T
\; = \;
\left[
  \begin{array}{c}
    \W \X^T \\
    \A_{2 1} \X^T \\
  \end{array}
\right] \textrm{,}
\]
and it follows that $\A_{2 1} = \X \W$
and $\A_{2 2} = \A_{2 1} \X^T = \X \W \X^T$.
Then we have that
\begin{eqnarray}
\A
& = & \left[
        \begin{array}{cc}
          \W & (\X \W)^T \\
          \X \W & \X \W \X^T \\
        \end{array}
      \right] \nonumber\\
& = &
    \left[
        \begin{array}{c}
          \I \\
          \X \\
        \end{array}
      \right]
      \W
      \left[
        \begin{array}{cc}
          \I & \X^T \\
        \end{array}
      \right] \textrm{,} \label{eq:thm:connection:1} \\
\C \W^\dag \C^T
& = &
    \left[
        \begin{array}{c}
          \W \\
          \X \W \\
        \end{array}
      \right]
      \W^\dag
      \left[
        \begin{array}{cc}
          \W & (\X \W)^T \\
        \end{array}
      \right] \nonumber\\
& = &
    \left[
        \begin{array}{c}
          \I \\
          \X \\
        \end{array}
      \right]
      \W
      \left[
        \begin{array}{cc}
          \I & \X^T \\
        \end{array}
      \right] \textrm{.} \label{eq:thm:connection:2}
\end{eqnarray}
Here the second equality in (\ref{eq:thm:connection:2})
follows from $\W \W^\dag \W = \W$.
We obtain that $\A = \C \W^\dag \C$.
Then we show that $\A = \C \C^\dag \A (\C^\dag)^T \C^T$.

Since $\C^\dag = (\C^T \C)^\dag \C^T$, we have that
\[
\C^\dag
\;=\;
\big( \W (\I + \X^T \X) \W \big)^\dag \W \: [\I \, , \, \X^T] \textrm{,}
\]
and thus
\begin{align}
&\C^\dag \A (\C^\dag)^T \W \nonumber\\
&=\;
\big( \W (\I + \X^T \X) \W \big)^\dag
\W (\I + \X^T \X) \Big[ \W (\I + \X^T \X) \nonumber\\
&\qquad\qquad \W \big( \W (\I + \X^T \X) \W \big)^\dag \W \Big]\nonumber\\
&=\;
\big( \W (\I + \X^T \X) \W \big)^\dag
\W (\I + \X^T \X) \W \textrm{,} \nonumber
\end{align}
where the second equality follows from Lemma~\ref{lem:thm:connection}
because $(\I + \X^T \X)$ is positive definite.
Similarly we have
\begin{align}
&\W \C^\dag \A (\C^\dag)^T \W \nonumber\\
&=\;
\W \big( \W (\I + \X^T \X) \W \big)^\dag
\W (\I + \X^T \X) \W
\;=\; \W\textrm{.} \nonumber
\end{align}
Thus we have
\begin{small}
\begin{eqnarray}
\C \C^\dag \A (\C^\dag)^T \C
&=&    \left[
        \begin{array}{c}
          \I \\
          \X \\
        \end{array}
      \right]
      \W \C^\dag \A (\C^\dag)^T \W
      \left[
        \begin{array}{cc}
          \I & \X^T \\
        \end{array}
      \right]  \nonumber \\
&=&    \left[
        \begin{array}{c}
          \I \\
          \X \\
        \end{array}
      \right]
      \W
      \left[
        \begin{array}{cc}
          \I & \X^T \\
        \end{array}
      \right] \textrm{.}  \label{eq:thm:connection:3}
\end{eqnarray}
\end{small}
It follows from Equations (\ref{eq:thm:connection:1}) (\ref{eq:thm:connection:2}) (\ref{eq:thm:connection:3})
that $\A = \C \W^\dag \C^T = \C \C^\dag \A (\C^\dag)^T \C^T$.

Conversely, when $\A = \C \W^\dag \C^T$, we have that $\rk(\A) \leq \rk(\W^\dag) = \rk(\W)$.
By applying (\ref{eq:thm:connection:4}) we have that $\rk(\A) = \rk(\W)$.

When $\A = \C \C^\dag \A (\C^\dag)^T \C^T$,
we have $\rk(\A) \leq \rk(\C)$.
Thus there exists a matrix $\X$ such that
\[
\left[
  \begin{array}{c}
    \A_{2 1}^T \\
    \A_{2 2} \\
  \end{array}
\right]
\; = \;
\C \X^T
\; = \;
\left[
  \begin{array}{c}
    \W \X^T \\
    \A_{2 1} \X^T \\
  \end{array}
\right] \textrm{,}
\]
and therefore $\A_{2 1} = \X \W$.
Then we have that
\[
\C \;=\;
\left[
  \begin{array}{c}
    \W \\
    \A_{2 1} \\
  \end{array}
\right]
\;=\;
\left[
  \begin{array}{c}
    \I \\
    \X \\
  \end{array}
\right]
\W \textrm{,}
\]
so $\rk(\C) \leq \rk(\W)$.
Apply (\ref{eq:thm:connection:4}) again we have $\rk(\A) = \rk(\W)$.
\end{proof}

\begin{lemma} \label{lem:thm:connection}
$\X^T \V \X \big( \X^T \V \X\big)^\dag \X^T = \X^T$ for any positive definite matrix $\V$.
\end{lemma}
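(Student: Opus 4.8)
The plan is to reduce the identity $\X^T \V \X (\X^T \V \X)^\dag \X^T = \X^T$ to a statement about column spaces, using the fact that for any matrix $\M$, the product $\M \M^\dag$ is the orthogonal projector onto the column space of $\M$. Applying this with $\M = \X^T \V \X$, the claim becomes: the orthogonal projector onto $\mathrm{range}(\X^T \V \X)$ fixes every column of $\X^T$, i.e.\ $\mathrm{range}(\X^T) \subseteq \mathrm{range}(\X^T \V \X)$. Since both sides are subspaces of the same ambient space and $\mathrm{range}(\X^T \V \X) \subseteq \mathrm{range}(\X^T)$ trivially, it suffices to show the reverse inclusion, which will follow once I show the two ranges have the same dimension, i.e.\ $\rk(\X^T \V \X) = \rk(\X^T) = \rk(\X)$.

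First I would factor $\V = \LL \LL^T$ using that $\V$ is positive definite (e.g.\ its symmetric positive-definite square root, or a Cholesky factor), with $\LL$ square and nonsingular. Then $\X^T \V \X = (\LL^T \X)^T (\LL^T \X)$, so $\rk(\X^T \V \X) = \rk((\LL^T \X)^T(\LL^T\X)) = \rk(\LL^T \X)$, using the standard fact $\rk(\M^T \M) = \rk(\M)$. Since $\LL^T$ is nonsingular, $\rk(\LL^T \X) = \rk(\X)$. Combining, $\rk(\X^T \V \X) = \rk(\X) = \rk(\X^T)$, which together with the trivial inclusion $\mathrm{range}(\X^T \V \X) \subseteq \mathrm{range}(\X^T)$ forces $\mathrm{range}(\X^T \V \X) = \mathrm{range}(\X^T)$.

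Finally I would conclude: since $\X^T \V \X (\X^T \V \X)^\dag$ is the orthogonal projector onto $\mathrm{range}(\X^T \V \X) = \mathrm{range}(\X^T)$, and every column of $\X^T$ lies in $\mathrm{range}(\X^T)$, the projector acts as the identity on $\X^T$, giving $\X^T \V \X (\X^T \V \X)^\dag \X^T = \X^T$.

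I do not expect any serious obstacle here; the only point requiring a little care is making the range inclusion rigorous, since $\M \M^\dag \N = \N$ holds precisely when $\mathrm{range}(\N) \subseteq \mathrm{range}(\M)$, so the dimension-counting step (via the factorization $\V = \LL \LL^T$ and $\rk(\M^T\M) = \rk(\M)$) is the crux of the argument. An alternative, essentially equivalent route would be to write $\V = \LL\LL^T$ and verify directly that $\X^T \V \X (\X^T \V \X)^\dag (\LL^T \X)^T = (\LL^T\X)^T$ by the analogous range argument for $\LL^T\X$, then right-multiply by $(\LL^T)^{-1}$ on the appropriate side; but the column-space formulation above is cleaner.
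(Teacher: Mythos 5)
Your proof is correct, and it takes a somewhat different route from the paper's. The paper also starts from a factorization of the positive definite matrix, writing $\V = \B^T\B$ with $\B$ nonsingular, but then finishes by a purely algebraic chain: it rewrites $\X^T\V\X\big(\X^T\V\X\big)^\dag\X^T$ in terms of $\B\X$, applies the Moore--Penrose identity $\M\big(\M^T\M\big)^\dag = \big(\M^T\big)^\dag$ with $\M=\B\X$, uses $\N\N^\dag\N=\N$ to collapse $(\B\X)^T\big((\B\X)^T\big)^\dag(\B\X)^T$ to $(\B\X)^T$, and cancels $(\B^T)^{-1}$. You instead use the factorization $\V=\LL\LL^T$ only to establish the rank equality $\rk(\X^T\V\X)=\rk(\X)$, and then conclude via the characterization of $\M\M^\dag$ as the orthogonal projector onto $\mathrm{range}(\M)$ together with the trivial inclusion $\mathrm{range}(\X^T\V\X)\subseteq\mathrm{range}(\X^T)$. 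Your argument is more conceptual: it makes explicit that the identity is exactly the statement $\mathrm{range}(\X^T)\subseteq\mathrm{range}(\X^T\V\X)$, and it isolates where positive definiteness enters (rank preservation), whereas the paper's computation is shorter but relies on remembering the specific pseudoinverse identity $\M^\dag=\big(\M^T\M\big)^\dag\M^T$. Both proofs are complete and hinge on the same nonsingular factorization of $\V$; which one to prefer is a matter of taste.
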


\begin{proof}
Since the positive definite matrix $\V$ have a decomposition $\V= \B^T \B$ for some nonsingular matrix $\B$,
so we have
\begin{align}
& \X^T \V \X \big( \X^T \V \X\big)^\dag \X^T \nonumber \\
& = \; (\B \X)^T \Big(\B \X \big( (\B \X)^T (\B \X) \big)^\dag \Big) (\B \X)^T \B (\B^T \B)^{-1} \nonumber \\
& = \; (\B \X)^T \big((\B \X)^T\big)^{\dag} (\B \X)^T (\B^T)^{-1} \nonumber \\
& = \; (\B \X)^T (\B^T)^{-1}\nonumber \\
& = \; \X^T \textrm{.} \nonumber
\end{align}
\end{proof}

\section{Proof of Theorem~\ref{thm:lower}} \label{sec:proof:lower_bound}

In Section~\ref{sec:proof:lower_bound:lemma} we provide two key lemmas,
and then in Section~\ref{sec:proof:lower_bound:theorem} we prove Theorem~\ref{thm:lower} using the two lemmas.

\subsection{Key Lemmas} \label{sec:proof:lower_bound:lemma}

\begin{lemma} \label{lem:lower_bound_improved}
For an $m\times m$ matrix $\B$ with diagonal entries equal to one and off-diagonal entries equal to $\alpha$,
the error incurred by the modified \nystrom method is lower bounded by
\begin{align}
&\|\B - \tilde{\B}^{\textrm{mod}}_{c} \|_F^2 \nonumber \\
& \qquad \geq\; (1 - \alpha)^2 (m-c)\bigg(1+\frac{2}{c}- (1-\alpha) \frac{1+o(1)}{\alpha c m /2}  \bigg) \textrm{.} \nonumber
\end{align}
\end{lemma}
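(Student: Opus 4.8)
The plan is to exploit the very rigid structure of $\B = (1-\alpha)\I_m + \alpha \1\1^T$. This matrix has exactly two distinct eigenvalues: $\lambda_1 = 1 + (m-1)\alpha$ with eigenvector $\1/\sqrt m$, and $\lambda = 1-\alpha$ with multiplicity $m-1$ on the orthogonal complement of $\1$. Consequently, for a target rank $k$ that we will ultimately take to be $1$ (this is the relevant regime for a clean lower bound; the general statement in the theorem is recovered later via a block/padding construction), $\|\B - \B_k\|_F^2 = (m-1)(1-\alpha)^2$. The first step is therefore to record these spectral facts and fix the correspondence between the abstract quantities $\|\A-\A_k\|_F^2$ and the concrete value for $\B$.

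The second and central step is to lower-bound $\|\B - \C(\C^\dag \B (\C^\dag)^T)\C^T\|_F^2$ over all choices of $c$ columns $\C$. Since $\U^{\textrm{mod}} = \C^\dag\B(\C^\dag)^T$ is by Theorem~\ref{thm:connection}'s surrounding discussion the exact minimizer of $\min_\U \|\B - \C\U\C^T\|_F$, we have $\|\B - \C\U^{\textrm{mod}}\C^T\|_F = \min_\U \|\B-\C\U\C^T\|_F$, and this is at least the error of the (single-sided) projection is not quite what we want — instead I would use that $\C\U^{\textrm{mod}}\C^T = \PM_\C \B \PM_\C$ (the two-sided orthogonal projection onto $\spann(\C)$), so that $\|\B - \PM_\C\B\PM_\C\|_F^2 = \|\B\|_F^2 - \|\PM_\C \B \PM_\C\|_F^2 = \tr(\B^2) - \tr(\PM_\C\B\PM_\C\B\PM_\C)$. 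Writing $\PM_\C$ in terms of an orthonormal basis $\Q$ of $\spann(\C)$, this becomes $\tr(\B^2) - \tr((\Q^T\B\Q)^2)$, and we must \emph{upper} bound $\tr((\Q^T\B\Q)^2)$ over all $m\times c$ orthonormal $\Q$ whose columns lie in the span of $c$ coordinate columns of $\B$. Substituting $\B = (1-\alpha)\I + \alpha\1\1^T$ gives $\Q^T\B\Q = (1-\alpha)\I_c + \alpha (\Q^T\1)(\Q^T\1)^T$, a rank-one perturbation of $(1-\alpha)\I_c$; its squared Frobenius norm is $(1-\alpha)^2 c + 2\alpha(1-\alpha)\|\Q^T\1\|_2^2 + \alpha^2\|\Q^T\1\|_2^4$. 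Here $\|\Q^T\1\|_2^2 = \|\PM_\C\1\|_2^2 \le \|\1\|_2^2 = m$, and more precisely, because $\spann(\C)$ is spanned by $c$ columns of $\B$ and each such column is $(1-\alpha)\e_i + \alpha\1$, a short argument shows $\|\PM_\C\1\|_2^2 \le c + \alpha^{-1}(1-\alpha)(1+o(1))\cdot(\text{lower order})$ — this is exactly where the $o(1)$ and the $1/(\alpha c m/2)$ correction term in the statement come from. Combining $\tr(\B^2) = m + \alpha^2 m(m-1)$ (or the analogous exact value) with the upper bound on $\tr((\Q^T\B\Q)^2)$, and simplifying, yields the claimed inequality.

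The main obstacle I anticipate is the third, bookkeeping-heavy step: getting the constant $2/c$ rather than $1/c$ and pinning down the precise form of the negative correction term $(1-\alpha)\frac{1+o(1)}{\alpha c m/2}$. The factor of $2$ versus the $1$ in the column-selection bound (\ref{eq:lower_column_selection}) reflects that the two-sided projection $\PM_\C\B\PM_\C$ ``loses'' energy on both the column and row sides, so one must carefully track that $\tr(\B^2)-\tr((\Q^T\B\Q)^2)$ is essentially twice the single-sided deficit $\tr(\B^2)-\tr(\Q^T\B^2\Q)$ up to the cross term $\tr(\Q^T\B\Q\Q^T\B\Q) - \tr(\Q^T\B^2\Q) = -\|\B\Q - \Q\Q^T\B\Q\|$-type quantity, which is nonpositive and must be bounded below. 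I would handle this by writing $\B\Q = \Q(\Q^T\B\Q) + \Q_\perp R$ for the orthogonal complement part and estimating $\|R\|_F^2$ using the eigenstructure of $\B$ and the fact that $\spann(\C)$ is coordinate-aligned; this is the computation the author presumably ``does not grind through'' in the sketch but which drives the final bound. Once that cross term is controlled, the rest is algebraic simplification using $m\gg c$ and collecting the $o(1)$ terms.
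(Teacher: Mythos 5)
Your reduction is correct as far as it goes, and it is a genuinely different route from the paper's: the paper substitutes the partitioned Moore--Penrose formula (Lemma~\ref{lem:pinv_partitioned_matrix}) into $\C^\dag\B(\C^\dag)^T$ and grinds out $\tilde{\B}^{\textrm{mod}}_{c}$ blockwise through a chain of scalar coefficients, whereas you use the identity $\C\C^\dag\B(\C^\dag)^T\C^T=\PM_\C\B\PM_\C$ (which is valid, since $\C\C^\dag$ is the orthogonal projector onto $\spann(\C)$), the Pythagorean identity $\|\B-\PM_\C\B\PM_\C\|_F^2=\tr(\B^2)-\|\Q^T\B\Q\|_F^2$, and the rank-one structure $\Q^T\B\Q=(1-\alpha)\I_c+\alpha(\Q^T\1)(\Q^T\1)^T$. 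This correctly reduces the lemma to a single scalar $s=\|\PM_\C\1\|_2^2$: writing $s=m-\delta$ with $\delta=\|(\I_m-\PM_\C)\1\|_2^2$, an exact expansion gives $\|\B-\tilde{\B}^{\textrm{mod}}_{c}\|_F^2=(1-\alpha)^2(m-c)+2\alpha(1-\alpha+\alpha m)\delta-\alpha^2\delta^2$, so the whole lemma hinges on a quantitative lower bound for $\delta$.

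That is exactly where your proposal has a genuine gap. The only estimate you actually justify is $s\le m$ (i.e.\ $\delta\ge 0$), which yields only $(1-\alpha)^2(m-c)$ and loses the $2/c$ term---but that term is the entire content of the lemma (it is what produces the $1+2k/c$ factor in Theorem~\ref{thm:lower}). The sharper bound you assert, $\|\PM_\C\1\|_2^2\le c+\alpha^{-1}(1-\alpha)(1+o(1))\cdot(\text{lower order})$, cannot be right even qualitatively: each selected column is $(1-\alpha)\e_i+\alpha\1$, so $\1$ is nearly inside $\spann(\C)$ and $s$ is close to $m$, not to $c$. What is needed, and what you neither state correctly nor derive, is a bound of the form $\delta\ge(1+o(1))\,(1-\alpha)^2/(\alpha^2 c)$. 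It is obtainable along your lines: by the permutation symmetries fixing the chosen index set, $\PM_\C\1$ must be a scalar multiple of $\sum_{i\le c}\bb_i=(1-\alpha)\1_{[c]}+c\alpha\1$, so $\delta=\min_\lambda\big[c\,(1-\lambda(1-\alpha+c\alpha))^2+(m-c)(1-\lambda c\alpha)^2\big]=c(m-c)(1-\alpha)^2/\big(c(1-\alpha+c\alpha)^2+(m-c)c^2\alpha^2\big)$, and inserting this into the exact expansion recovers the stated bound, including the $(1-\alpha)\frac{1+o(1)}{\alpha cm/2}$ correction. Finally, the cross-term discussion in your last paragraph is unnecessary and somewhat confused: once $\Q^T\B\Q$ is written as a rank-one perturbation of $(1-\alpha)\I_c$, the error is an exact function of $s$, no estimate of $\B\Q-\Q\Q^T\B\Q$ is required, and the factor $2$ (versus $1$ in (\ref{eq:lower_column_selection})) simply falls out of the term $2\alpha^2 m\,\delta\approx 2(1-\alpha)^2 m/c$.
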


\begin{proof}
Without loss of generality, we assume the first $c$ column of $\B$ are selected to construct $\C$.
We partition $\B$ and $\C$ as:
\[
\B = \left[
       \begin{array}{cc}
         \W & \B_{2 1}^T \\
         \B_{2 1} & \B_{2 2} \\
       \end{array}
     \right]
\qquad
\textrm{ and }
\qquad
\C = \left[
       \begin{array}{c}
         \W  \\
         \B_{2 1} \\
       \end{array}
     \right] \textrm{.}
\]
Here the matrix $\W$ can be expressed by $\W = (1-\alpha) \I_{c} + \alpha \1_{c} \1_{c}^T$.
We apply the Sherman-Morrison-Woodbury formula
\[
(\A + \B \C \D)^{-1}  = \A^{-1} - \A^{-1} \B (\C^{-1} + \D \A^{-1} \B)^{-1} \D \A^{-1}
\]
to compute $\W^{-1}$, yielding
\begin{equation} \label{eq:lower_bound_improved:pinv_W}
\W^{-1} \; = \; \frac{1}{1-\alpha} \I_c - \frac{\alpha}{(1-\alpha)(1-\alpha+c\alpha)} \1_{c} \1_{c}^T \textrm{.}
\end{equation}
We expand the Moore-Penrose inverse of $\C$ by Lemma~\ref{lem:pinv_partitioned_matrix} and obtain
\[
\C^\dag = \W^{-1} \big( \I_c + \S^T \S \big)^{-1}
\left[
  \begin{array}{cc}
    \I_c & \S^T \\
  \end{array}
\right]
\]
where
\[
\S = \B_{2 1} \W^{-1} = \frac{\alpha}{1-\alpha + c\alpha} \1_{m-c} \1_c^T.
\]
It is easily verified that $\S^T \S = \big( \frac{\alpha}{1-\alpha + c\alpha} \big)^2 (m-c) \1_c \1_c^T$.

Now we express the matrix constructed by the modified \nystrom method in a partitioned form:
\begin{small}
\begin{eqnarray}
\tilde{\B}^{\textrm{mod}}_{c}
&=& \C \C^\dag \B \big(\C^\dag\big)^T \C^T \nonumber \\
&=& \left[
      \begin{array}{c}
        \W \\
        \B_{2 1} \\
      \end{array}
    \right] \W^{-1} \big( \I_c + \S^T \S \big)^{-1}
    \left[
      \begin{array}{c c}
        \I_c & \S^T \\
      \end{array}
    \right]
    \B  \nonumber \\
& & \quad \left[
      \begin{array}{c}
        \I_c \\
        \S \\
      \end{array}
    \right]
     \big( \I_c + \S^T \S \big)^{-1} \W^{-1}
     \left[
      \begin{array}{c}
        \W \\
        \B_{2 1} \\
      \end{array}
    \right]^T \nonumber \\
&=& \left[
      \begin{array}{c}
        \big( \I_c + \S^T \S \big)^{-1} \\
        \B_{2 1}\W^{-1} \big( \I_c + \S^T \S \big)^{-1} \\
      \end{array}
    \right]
    \left[
      \begin{array}{c c}
        \I_c & \S^T \\
      \end{array}
    \right]
    \B\nonumber \\
& & \quad  \left[
      \begin{array}{c}
        \I_c \\
        \S \\
      \end{array}
    \right]
     \left[
      \begin{array}{c}
        \big( \I_c + \S^T \S \big)^{-1} \\
        \B_{2 1}\W^{-1} \big( \I_c + \S^T \S \big)^{-1} \\
      \end{array}
    \right]^T \textrm{.} \label{eq:lower_bound_improved:partitioned_Bimp}
\end{eqnarray}
\end{small}
We then compute the submatrices $\big(\I_c + \S^T \S\big)^{-1}$ and $\B_{2 1} \W^{-1} \big(\I_c + \S^T \S \big)^{-1}$ respectively as follows.
We apply the Sherman-Morrison-Woodbury formula
to compute $\big(\I_c + \S^T \S\big)^{-1}$, yielding
\begin{small}
\begin{eqnarray} \label{eq:lower_bound_improved:IcSS}
\big(\I_c + \S^T \S\big)^{-1}
&=& \bigg( \I_c + \Big( \frac{\alpha}{1-\alpha + c\alpha} \Big)^2 (m-c) \1_c \1_c^T \bigg)^{-1} \nonumber\\
&=& \I_c - \gamma_1 \1_c \1_c^T \textrm{,}
\end{eqnarray}
\end{small}
where
\begin{eqnarray}
\gamma_1 = \frac{m-c}{mc + \big( \frac{1-\alpha}{\alpha}\big)^2 + \frac{2(1-\alpha)c}{\alpha}} . \nonumber
\end{eqnarray}
It follows from (\ref{eq:lower_bound_improved:pinv_W}) and (\ref{eq:lower_bound_improved:IcSS}) that
\begin{small}
\begin{align}
&\W^{-1} \big(\I_c + \S^T \S \big)^{-1}
\;=\; (\gamma_2 \I_c - \gamma_3 \1_c \1_c^T) ( \I_c - \gamma_1 \1_c \1_c^T ) \nonumber\\
& \qquad\qquad\qquad =\; \gamma_2 \I_c + (\gamma_1 \gamma_3 c - \gamma_1 \gamma_2 - \gamma_3) \1_c \1_c^T
\end{align}
\end{small}
where
\[
\gamma_2 = \frac{1}{1-\alpha}  \quad \textrm{ and } \quad
\gamma_3 = \frac{\alpha}{(1-\alpha)(1-\alpha + \alpha c)}.
\]
Then we have that
\begin{align} \label{eq:lower_bound_improved:BWIcSS}
& \!\!\!\!\!\!\!\! \B_{2 1} \W^{-1} \big(\I_c + \S^T \S \big)^{-1}\nonumber\\
& \qquad =\; \alpha \big( \gamma_1 \gamma_3 c^2 - \gamma_3 c - \gamma_1 \gamma_2 c + \gamma_2 \big) \1_{m-c} \1_c^T\nonumber\\
& \qquad \triangleq \; \gamma \1_{m-c} \1_c^T ,
\end{align}
where
\begin{eqnarray}
\gamma &=& \alpha \big( \gamma_1 \gamma_3 c^2 - \gamma_3 c - \gamma_1 \gamma_2 c + \gamma_2 \big) \nonumber\\
&=& \frac{\alpha (\alpha c - \alpha + 1)}{2 \alpha c - 2 \alpha - 2 \alpha^2 c + \alpha^2 + \alpha^2 c m + 1}\textrm{.}
\end{eqnarray}

Since $\B_{2 1}=\alpha \1_{m-c} \1_c^T$ and $\B_{2 2} = (1-\alpha)\I_{m-c}+ \alpha\1_{m-c} \1_{m-c}^T$,
it is easily verified that
\begin{align} \label{eq:lower_bound_improved:middle_term}
&\left[
  \begin{array}{cc}
    \I_c & \S^T \\
  \end{array}
\right]
\B
\left[
  \begin{array}{c}
    \I_c \\
    \S \\
  \end{array}
\right] \nonumber\\
&\qquad\qquad  =
\left[
  \begin{array}{cc}
    \I_c & \S^T \\
  \end{array}
\right]
\left[
       \begin{array}{cc}
         \W & \B_{2 1}^T \\
         \B_{2 1} & \B_{2 2} \\
       \end{array}
\right]
\left[
  \begin{array}{c}
    \I_c \\
    \S \\
  \end{array}
\right] \nonumber\\
&\qquad \qquad = (1-\alpha) \I_c + \lambda \1_c \1_c^T \textrm{,}
\end{align}
where
\begin{small}
\[
\lambda = \frac{\alpha (3 \alpha m - \alpha c - 2 \alpha + \alpha^2 c - 3 \alpha^2 m + \alpha^2 + \alpha^2 m^2 + 1)}{(\alpha c - \alpha + 1)^2}
\]
\end{small}

It follows from (\ref{eq:lower_bound_improved:partitioned_Bimp}), (\ref{eq:lower_bound_improved:IcSS}),
(\ref{eq:lower_bound_improved:BWIcSS}), and (\ref{eq:lower_bound_improved:middle_term}) that
\begin{small}
\begin{align}
&\tilde{\B}^{\textrm{mod}}_{c} \nonumber\\
&=
\left[
  \begin{array}{c}
    \I_c - \gamma_1 \1_c \1_c^T \\
    \gamma \1_{m-c} \1_c^T \\
  \end{array}
\right]
\Big( (1-\alpha) \I_c + \lambda \1_c \1_c^T \Big)
\left[
  \begin{array}{c}
    \I_c - \gamma_1 \1_c \1_c^T \\
    \gamma \1_{m-c} \1_c^T \\
  \end{array}
\right]^T \nonumber\\
&\triangleq
\left[
  \begin{array}{cc}
    \tilde{\B}_{1 1} & \tilde{\B}_{2 1}^T \\
    \tilde{\B}_{2 1} & \tilde{\B}_{2 2} \\
  \end{array}
\right], \nonumber
\end{align}
\end{small}
where
\begin{eqnarray}
\tilde{\B}_{1 1}
& = & (1-\alpha) \I_c + \big[ (1-\gamma_1 c) \nonumber \\
&  &  (\lambda - \lambda \gamma_1 c - (1-\alpha) \gamma_1) - (1-\alpha) \gamma_1 \big] \1_c \1_c^T \nonumber \\
& = & (1-\alpha) \I_c + \eta_1 \1_c \1_c^T , \nonumber \\
\tilde{\B}_{2 1}  & = & \tilde{\A}_{1 2}^T
\; = \; \gamma (1-\gamma_1 c) (1-\alpha + \lambda c) \1_{m-c} \1_c^T \nonumber \\
& = & \eta_2  \1_{m-c} \1_c^T  , \nonumber \\
\tilde{\B}_{2 2}
& = & \gamma^2 c (1-\alpha + \lambda c) \1_{m-c} \1_{m-c}^T \nonumber \\
& = & \eta_3 \1_{m-c} \1_{m-c}^T  , \nonumber
\end{eqnarray}
where
\begin{eqnarray}
\eta_1
& = &
 (1-\gamma_1 c)(\lambda - \lambda \gamma_1 c - (1-\alpha) \gamma_1) - (1-\alpha) \gamma_1  , \nonumber\\
\eta_2
& = & \gamma (1-\gamma_1 c) (1-\alpha + \lambda c),   \nonumber \\
\eta_3
& = & \gamma^2 c (1-\alpha + \lambda c) ,\nonumber
\end{eqnarray}
By dealing with the four blocks of $\tilde{\B}^{\textrm{mod}}_{c}$ respectively, we finally obtain that
\begin{align}
&\|\B - \tilde{\B}^{\textrm{mod}}_{c} \|_F^2  \nonumber\\
& = \|\W - \tilde{\B}_{1 1}\|_F^2 + 2\|\B_{2 1} - \tilde{\B}_{2 1}\|_F^2 + \|\B_{2 2} - \tilde{\B}_{2 2}\|_F^2  \nonumber\\
& =  c^2 (\alpha - \eta_1)^2 + 2 c (m-c) (\alpha - \eta_2)^2  \nonumber\\
& \quad + (m-c)(m-c-1)(\alpha - \eta_3)^2 + (m-c) (1 - \eta_3)^2 \nonumber\\
& = (m - c) (\alpha - 1)^2 \big(\alpha^4 c^2 m^2 - 4 \alpha^4 c^2 m + 4 \alpha^4 c^2  \nonumber\\
& \quad + 2 \alpha^4 c m^2 - 4 \alpha^4 c m + \alpha^4 c + \alpha^4 m - \alpha^4 + 4 \alpha^3 c^2 m \nonumber\\
& \quad - 8 \alpha^3 c^2 + 2 \alpha^3 c m + 2 \alpha^3 c - 2 \alpha^3 m + 2 \alpha^3 + 4 \alpha^2 c^2\nonumber\\
& \quad + 2 \alpha^2 c m  - 7 \alpha^2 c+ \alpha^2 m + 4 \alpha c - 2 \alpha + 1 \big)/ \big(2 \alpha c   \nonumber\\
& \quad - 2 \alpha - 2 \alpha^2 c + \alpha^2 + \alpha^2 c m + 1 \big)^2 \nonumber \\
& = (m-c)(\alpha-1)^2\bigg(1+\frac{2}{c}-\frac{(1-\alpha)}{c}\big(6\alpha c-6\alpha \nonumber \\
& \quad -12\alpha^2 c+6\alpha^3 c+6\alpha^2-2 \alpha^3+3\alpha^2 c^2 -3\alpha^3 c^2 \nonumber \\
& \quad +2\alpha^3 c^2 m + 3\alpha^2 c m- 3\alpha^3 c m + 2 \big)/ \big(2\alpha c - 2\alpha \nonumber\\
& \quad - 2\alpha^2 c +\alpha^2 +\alpha^2 c m+1 \big)^2\bigg)\nonumber\\
& = (m-c)(\alpha-1)^2\bigg(1+\frac{2}{c}- \big(1+o(1)\big) \frac{1-\alpha}{\alpha c m /2}  \bigg) \textrm{.} \nonumber
\end{align}
\end{proof}

\begin{lemma}[{Lemma~19 of \cite{wang2013improving}}] \label{lem:nystrom_residual_new}
Given $m$ and $k$, we let $\B$ be an $\frac{m}{k} \times \frac{m}{k}$ matrix
whose diagonal entries equal to one and off-diagonal entries equal to $\alpha \in [0, 1)$.
We let $\A$ be an $m\times m$ block-diagonal matrix
\begin{equation}\label{eq:construction_bad_nystrom_blk}
\A \; =\;  \diag(\underbrace{\B, \cdots , \B}_{k \textrm{ blocks}}) .
\end{equation}
Let $\A_k$ be the best rank-$k$ approximation to the matrix $\A$,
then we have that
\begin{equation}
\| \A - \A_k \|_F \; =\; (1-\alpha) \sqrt{m-k}  \textrm{.} \nonumber
\end{equation}
\end{lemma}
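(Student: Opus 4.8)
The plan is to diagonalize $\B$ explicitly, read off the spectrum of the block-diagonal matrix $\A$, and then invoke the Eckart--Young characterization of the best rank-$k$ approximation.

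First I would set $n = m/k$ and observe that $\B = (1-\alpha)\I_n + \alpha \1_n \1_n^T$. Since $\1_n \1_n^T$ has the single nonzero eigenvalue $n$ (eigenvector $\1_n$) and eigenvalue $0$ on the orthogonal complement of $\1_n$, the matrix $\B$ has eigenvalue $1 - \alpha + \alpha n$ with multiplicity $1$ and eigenvalue $1-\alpha$ with multiplicity $n-1$. Because $\A = \diag(\B,\dots,\B)$ is block diagonal with $k$ identical blocks, its spectrum is the union, with multiplicity, of $k$ copies of the spectrum of $\B$: the eigenvalue $1-\alpha+\alpha n = 1 - \alpha + \alpha m/k$ appears with multiplicity $k$, and the eigenvalue $1-\alpha$ appears with multiplicity $k(n-1) = m-k$. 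In particular $\A$ is SPSD with exactly $m$ eigenvalues counted with multiplicity.

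Next I would check the ordering of the eigenvalues. Since $\alpha \in [0,1)$ and $n = m/k \geq 1$, we have $1 - \alpha + \alpha n \geq 1 - \alpha \geq 0$, so the $k$ largest eigenvalues of $\A$ are precisely the $k$ copies of $1-\alpha+\alpha n$ (when $\alpha = 0$ all eigenvalues coincide and the conclusion below about the discarded eigenvalues still holds trivially). For a symmetric PSD matrix the SVD coincides with the eigendecomposition, so by Eckart--Young the best rank-$k$ approximation $\A_k$ retains these $k$ largest eigenvalues and annihilates the remaining $m-k$ eigenvalues, each equal to $1-\alpha$. Hence
\[
\|\A - \A_k\|_F^2 \;=\; \sum_{i=k+1}^{m} \lambda_i(\A)^2 \;=\; (m-k)(1-\alpha)^2,
\]
and taking square roots yields $\|\A - \A_k\|_F = (1-\alpha)\sqrt{m-k}$, as claimed.

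There is no genuine obstacle here; the argument is elementary linear algebra. The only point deserving a line of care is verifying that the $m-k$ eigenvalues discarded by the truncation are exactly the copies of $1-\alpha$ rather than a mixture with the large eigenvalue, which is immediate from $\alpha \geq 0$. One also tacitly assumes $m/k$ is a positive integer (and, for the surrounding worst-case construction to be nondegenerate, $m/k \geq 2$), which is guaranteed by the way $\A$ is defined in~(\ref{eq:construction_bad_nystrom_blk}).
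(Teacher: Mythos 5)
Your proof is correct: diagonalizing $\B=(1-\alpha)\I_{m/k}+\alpha\1\1^T$, reading off the spectrum of the block-diagonal $\A$ (eigenvalue $1-\alpha+\alpha m/k$ with multiplicity $k$ and $1-\alpha$ with multiplicity $m-k$), and applying Eckart--Young gives exactly $\|\A-\A_k\|_F=(1-\alpha)\sqrt{m-k}$. The paper itself does not reprove this lemma (it is quoted as Lemma~19 of \cite{wang2013improving}), and your argument is the standard one used there, so there is nothing further to reconcile.
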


%
%

\subsection{Proof of the Theorem} \label{sec:proof:lower_bound:theorem}

Now we prove Theorem~\ref{thm:lower} using Lemma~\ref{lem:lower_bound_improved} and Lemma~\ref{lem:nystrom_residual_new}.

\begin{proof}
Let $\C$ consist of $c$ column sampled from $\A$ and $\hat{\C}_i$ consist of $c_i$ columns sampled from the $i$-th block diagonal matrix in $\A$.
Without loss of generality, we assume $\hat{\C}_i$ consists of the first $c_i$ columns of $\B$.
Then the intersection matrix $\U$ is computed by
\begin{eqnarray}
\U
& = & \C^\dag \A \big(\C^T\big)^\dag \nonumber\\
& = & \big[ \diag\big( \hat{\C}_1 , \cdots , \hat{\C}_k \big) \big]^\dag
        \A
        \big[ \diag\big( \hat{\C}_1^T , \cdots , \hat{\C}_k^T \big) \big]^\dag \nonumber\\
& = & \diag\Big( \hat{\C}_1^\dag \B \big(\hat{\C}_1^\dag\big)^T , \cdots , \hat{\C}_k^\dag \B \big(\hat{\C}_k^\dag\big)^T \Big)  \textrm{.} \nonumber
\end{eqnarray}
The modified \nystrom approximation to $\A$ is
\begin{align}
&\tilde{\A}^{\textrm{mod}}_{c}
\; = \; \C \U \C^T \nonumber\\
& = \; \diag\Big( \hat{\C}_1 \hat{\C}_1^\dag \B \big(\hat{\C}_1^\dag\big)^T \hat{\C}_1^T,
        \cdots , \hat{\C}_k \hat{\C}_k^\dag \B \big(\hat{\C}_k^\dag\big)^T \hat{\C}_k^T \Big) \textrm{,} \nonumber
\end{align}
and thus the approximation error is
\begin{footnotesize}
\begin{align}
& \big\|\A - \tilde{\A}^{\textrm{mod}}_{c} \big\|_F^2
\; = \; \sum_{i=1}^{k} \Big\| \B \;-\; \hat{\C}_i \hat{\C}_i^\dag \B \big(\hat{\C}_i^\dag\big)^T \hat{\C}_i^T \Big\|_F^2 \nonumber\\
& \geq \; (1 - \alpha)^2 \sum_{i=1}^{k}(p - c_i)\bigg(1+\frac{2}{c_i}- (1-\alpha) \Big(\frac{1+o(1)}{\alpha c_i p /2} \Big) \bigg) \nonumber \\
& = \; (1-\alpha)^2 \bigg( \sum_{i=1}^{k}(p - c_i) \nonumber \\
& \qquad \qquad + \sum_{i=1}^{k}\frac{2(p - c_i)}{c_i}\Big( 1 - \frac{(1-\alpha)(1+o(1))}{\alpha  p} \Big) \bigg) \nonumber \\
& \geq \;(1-\alpha)^2 (m - c) \bigg( 1 + \frac{2k}{c}\Big( 1 - \frac{k(1-\alpha)(1+o(1))}{\alpha m} \Big) \bigg)\textrm{,} \nonumber
\end{align}
\end{footnotesize}
where the former inequality follows from Lemma~\ref{lem:lower_bound_improved},
and the latter inequality follows by minimizing over $c_1 , \cdots , c_k$.
Finally we apply Lemma~\ref{lem:nystrom_residual_new},
and the theorem follows by setting $\alpha \rightarrow 1$.
\end{proof}

\section{Supplementary Experiments}

We have mentioned in Remark~\ref{remark:efficient} that the resulting approximation accuracy is insensitive to
the parameter $\mu$ in Algorithm~\ref{alg:efficient},
and setting $\mu$ to be exactly the matrix coherence does not in general give rise to the highest accuracy.
To demonstrate this point of view,
we conduct experiments on an RBF kernel matrix of the Letters Dataset with $\sigma=0.2$, and we set $k=10$.

We compare the uniform+adaptive$^2$ algorithm with different settings of $\mu$;
we also employ the adaptive-full algorithm of \cite{kumar2012sampling},
the near-optimal+adaptive algorithm of \cite{wang2013improving},
and the uniform sampling algorithm for comparison.
The experiment settings are the same to Section~\ref{sec:experiments}.
Here the adaptive-full algorithm also has three steps: one uniform sampling and two adaptive sampling steps,
and we set $c_1 = c_2  = c_3 = c/3$ according to \cite{kumar2012sampling}.
We plot the approximation errors in Figure~\ref{fig:mu}.

\begin{figure}[ht]
\begin{center}
\centering
\includegraphics[width=80mm]{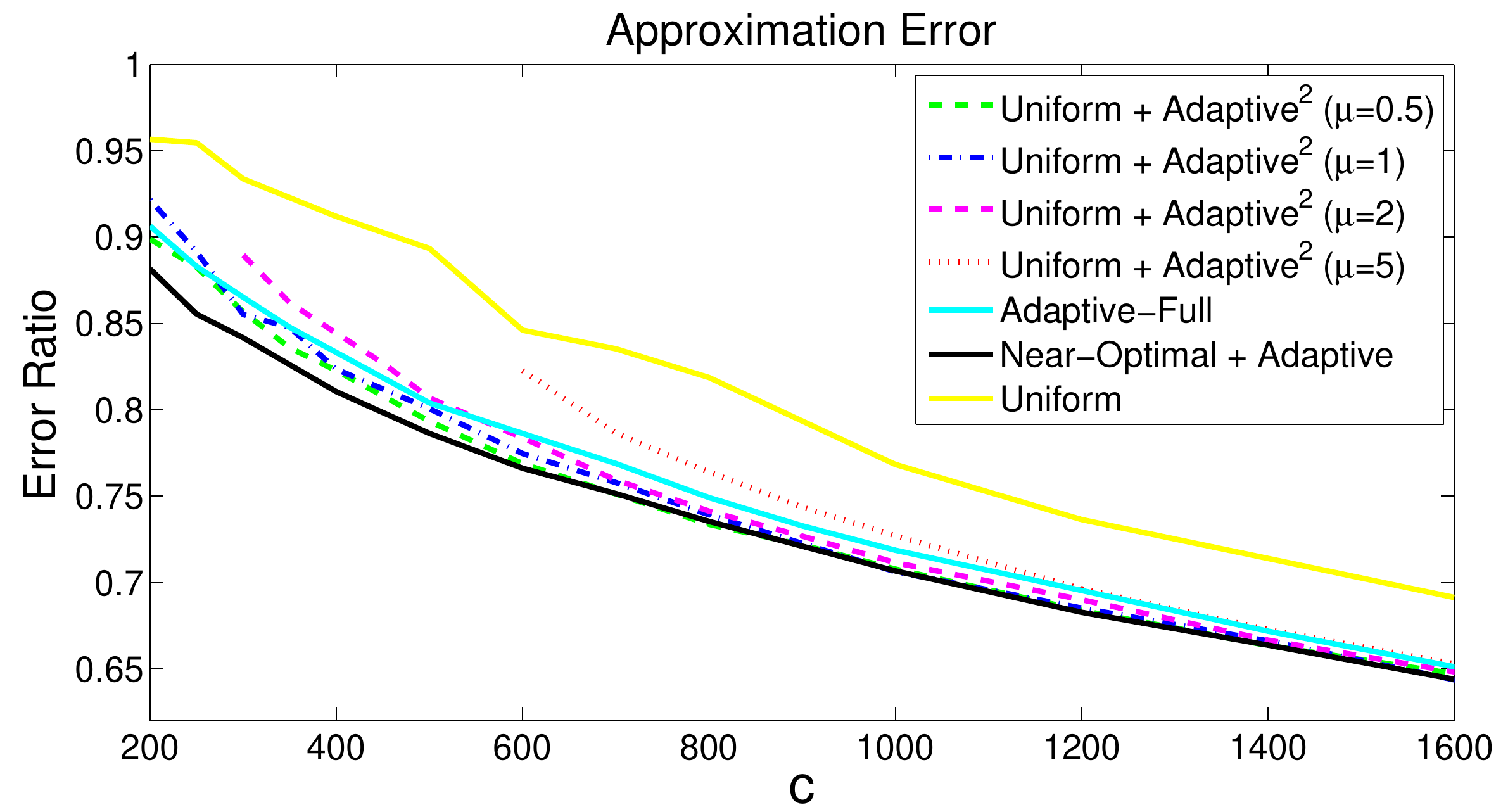}
\end{center}
   \caption{Effect of the parameter $\mu$ in Algorithm~\ref{alg:efficient}.}
\label{fig:mu}
\end{figure}

We can see from Figure~\ref{fig:mu} that different settings of $\mu$ does not have big influence on the approximation accuracy.
We can also see that it is unnecessary to set $\mu$ to be exactly the matrix coherence;
in this set of experiments, the uniform+adaptive$^2$ algorithm achieves the higher accuracy
when $\mu=0.5$ (the actual matrix coherence is $\mu_{10} = 62.05$).

\end{document}